\newcommand{\druckVersion}{0} %set to 1 to get black hyperlinks
\definecolor{meinCiteGruen}{rgb}{0.0, 0.7, 0.05}
\definecolor{meineLinkFarbe}{rgb}{0.0, 0.0, 0.61}
\definecolor{orange}{rgb}{1, 0.5, 0.00}
	\newcommand*{\meinGreyscale}{0.5}
	\newcommand*{\meineTranzparenz}{0.5} % 1-\meinGreyscale
	\definecolor{hellgrau}{rgb}{\meinGreyscale, \meinGreyscale, \meinGreyscale}
	\newcommand*{\meinGreyscale}{0.6}
	\newcommand*{\meineTranzparenz}{0.4}
	\definecolor{hellgrau}{rgb}{\meinGreyscale, \meinGreyscale, \meinGreyscale}
	\gdef\tfn@fnt{0}% 
	\let\Cref\crtCref
	\let\cref\crtcref
\patchcmd{\@makefnmark}{\fontsize}{\check@mathfonts\fontsize}{}{}
\let\oldnormalcolor\normalcolor%https://tex.stackexchange.com/a/505722/128042
\renewcommand{\normalcolor}{\oldnormalcolor\transparent{1}}
\newcommand{\mydot}{\,{\color{hellgrau}.}}
\newcommand{\mycomma}{\,{\color{hellgrau},}}
\newcommand*{\meinfullref}[1]{\hyperref[{#1}]{\Cref*{#1} \nameref*{#1}}}
\newcommand*{\unameref}[2][{}]{\hyperref[{#2}]{ \textcolor{black}{\nameref*{#2}{#1}}}}
\newcommand{\meqref}[1]{\labelcref{#1}}
\newcommand*{\meinref}[2][{}]{\hyperref[{#2}]{{#1}~\labelcref*{#2}}}
\newcommand*\colvec[3][]{
	\begin{pmatrix}\ifx\relax#1\relax\else#1\\\fi#2\\#3\end{pmatrix}
}
\newcommand*\colvecs[3][]{
	\begin{psmallmatrix}\ifx\relax#1\relax\else#1\\\fi#2\\#3\end{psmallmatrix}
}
\let\emptyset\varnothing
\newcommand{\unimportant}[1]{{\transparent{\meineTranzparenz}{#1}}}
\newcommand{\R}{\ensuremath{\mathbb{R}}}
\newcommand{\C}{\ensuremath{\mathcal{C}}}
\newcommand{\U}{\ensuremath{\mathcal{U}}}
\newcommand{\Rp}{\ensuremath{\R_{>0}}}
\newcommand{\Rpz}{\ensuremath{\R_{\geq 0}}}
\newcommand{\N}{\ensuremath{\mathbb{N}}}
\newcommand{\B}{\ensuremath{\mathfrak{B}}}
\newcommand{\E}[1][]{\ensuremath{\mathbb{E}\ifthenelse{\equal{#1}{}}{}{\left[ {#1}\right] }}}
\newcommand{\Eco}[2]{\ensuremath{\E\left[{#1}\middle|{#2}\right]}} 
\newcommand{\PP}[1][]{\ensuremath{\mathbb{P}\ifthenelse{\equal{#1}{}}{}{\left[ {#1}\right] }}}
\newcommand{\jac}{\ensuremath{J}}
\newcommand{\jacj}[1][\cdot]{\ensuremath{\jac^{j}\left(#1\right)}}
\newcommand{\jacjat}[2][\cdot]{\ensuremath{\jac^{j}\left(#1\right)\Big|_{{#2}}}}
\newcommand{\X}{\ensuremath{\mathcal{X}}}
\newcommand{\Y}{\ensuremath{\mathcal{Y}}}
\newcommand{\om}{\ensuremath{{\color{hellgrau}\omega}}}
\newcommand{\Om}{\ensuremath{\Omega}}
\newcommand{\omb}{\ensuremath{{\color{hellgrau}(\om)}}}
\newcommand{\Wkp}[3][K]{\ensuremath{\href{https://en.wikipedia.org/w/index.php?title=Sobolev_space&oldid=910223537}{W^{{#2},{#3}}({#1})}}}
\newcommand{\Woi}[1][K]{\Wkp[#1]{1}{\infty}}
\newcommand{\Wop}[1][{K,\nu}]{\Wkp[#1]{1}{p}}
\newcommand{\Wt}{W^2}
\newcommand{\Wti}[1][K]{\Wkp[{#1}]{2}{\infty}}
\newcommand{\Hk}[2][K]{\ensuremath{\href{https://en.wikipedia.org/w/index.php?title=Sobolev_space&oldid=910223537}{H^{{#2}}({#1})}}}
\newcommand{\WkpShort}[2]{\ensuremath{\href{https://en.wikipedia.org/w/index.php?title=Sobolev_space&oldid=910223537}{W^{{#1},{#2}}}}}
\newcommand{\iid}{\href{https://en.wikipedia.org/w/index.php?title=Independent_and_identically_distributed_random_variables&oldid=910267759}{\oldnormalcolor i.i.d.}}
\newcommand{\dx}[1][x]{\,d{#1}}
\newcommand{\din}{{d}}  % \textsubscript{\tiny in}}}
\newcommand{\Rdin}{\ensuremath{\mathbb{R}^{\din}}}
\newcommand{\dout}{{d_{\tiny \text{out}}}}
\newcommand{\Rdout}{\ensuremath{\mathbb{R}^{\dout}}}
\newcommand{\LtwoS}[1][{\Woi[{\tilde{K}}]}]{\ensuremath{L^2(\Sd,{#1})}}
\newcommand{\Ltwo}[1][{\tilde{K}}]{\ensuremath{L^2({#1})}}
\newcommand{\fao}{\ensuremath{\forall\omega\in\Om}}
\newcommand{\faog}{\ensuremath{{\color{hellgrau}\fao}}}
\newcommand{\faxd}{\ensuremath{\forall x\in\Rdin}}
\newcommand{\faxdg}{\ensuremath{{\color{hellgrau}\faxd}}}
\newcommand{\fromto}[2][1]{\ensuremath{\left\lbrace {#1},\dots,{#2}\right\rbrace }} %e.g. \fromto[0]{n} or \fromto{n}
\newcommand{\allIndi}[3][1]{\ensuremath{\forall {#3}\in\fromto[{#1}]{#2}}} %e.g. \allIndi[0]{n}{k} or \allIndi{N}{i}
\newcommand{\relu}[1][\cdot]{\ensuremath{\max\left( 0,{#1}\right) }}
\newcommand{\ind}{\ensuremath{\mathbbm{1}}}
\newcommand{\sobnorm}[1][\cdot]{\ensuremath{\left\| {#1}\right\|_{\Woi}}}
\newcommand{\sobnormmulti}[1][\cdot]{\ensuremath{\left\| {#1}\right\|_{\Woi[K,\Rdout]}}}
\newcommand{\sobnormop}[1][\cdot]{\ensuremath{\left\| {#1}\right\|_{\Wop}}}
\newcommand{\norm}[2][\cdot]{\ensuremath{\left\| {#1}\right\|_{{#2}}}}
\newcommand{\supnorm}[1][\cdot]{\ensuremath{\left\| {#1}\right\|_{L^\infty(K)}}}
\newcommand{\supnormon}[2][\cdot]{\ensuremath{\left\| {#1}\right\|_{L^\infty(#2)}}}
\newcommand{\supnormmulti}[1][\cdot]{\ensuremath{\supnormon[{#1}]{K,\Rdout}}}
\newcommand{\twonorm}[1][\cdot]{\ensuremath{\left\| {#1}\right\|_2}}
\newcommand{\Ltwonormons}[2][\cdot]{\ensuremath{\left\| {#1}\right\|_{L^2(#2)}^2}}
\newcommand{\LtwonormonSd}[2][\cdot]{\ensuremath{\left\| {#1}\right\|_{\LtwoS[{#2}]}}}
\newcommand{\plim}[1][n\to\infty]{\ensuremath{\Plim_{#1}}}
\newcommand{\nlim}[1][n\to\infty]{\ensuremath{\lim\limits_{#1}}}
\newcommand{\Frobnorm}[1][\cdot]{\ensuremath{\left\| {#1}\right\|_F}}
\newcommand{\tlim}{\ensuremath{\lim_{T\to\infty}}}
\newcommand{\skprod}[2]{\ensuremath{\left\langle {#1},{#2} \right\rangle}}
\newcommand{\notationOf}[2][ImplRegPart1V3]{\ensuremath{\left( {#2} \right)_{\text{\cite{#1}}}}}
\newcommand{\Linfty}[1][K]{\ensuremath{L^\infty(#1)}}
\newcommand{\Linftymultidim}[1][K,\Rdout]{\ensuremath{\Linfty[#1]}}
\newcommand{\Wtimultidim}[1][K,\Rdout]{\ensuremath{\Wti[#1]}}
\newcommand{\Woimultidim}[1][K,\Rdout]{\ensuremath{\Woi[#1]}}
\newcommand{\T}{\ensuremath{\hyperref[def:adaptedIGAM]{\mathcal{T}}}}
\newcommand{\Tw}{\ensuremath{\hyperref[eq:Tw]{\tilde{\mathcal{T}}}}}
\newcommand{\Cgl}[1][g]{\ensuremath{\hyperref[rem:compactSupp]{C_{#1}^\ell}}}
\newcommand{\Cgu}[1][g]{\ensuremath{\hyperref[rem:compactSupp]{C_{#1}^u}}}
\newcommand{\xtr}{\ensuremath{x^{\text{\tiny train}}}}
\newcommand{\ytr}{\ensuremath{y^{\text{\tiny train}}}}
\newcommand{\lw}{\ensuremath{\hyperref[thm:ridgeToaIGAM]{\tilde{\lambda}}}}
\newcommand{\lwnl}{\ensuremath{\tilde{\lambda}}}
\newcommand{\Ltr}{\ensuremath{\hyperref[as:generalloss]{L}}}
\newcommand{\ltr}{\ensuremath{l}} %\hyperref[eq:Ltr]{l}}}
\newcommand{\ltri}[1][i]{\ensuremath{{l_{#1}}}}
\newcommand{\Ltrb}[1]{\ensuremath{\Ltr\left( {#1}\right) }}
\newcommand{\ltricon}[1][i]{\ensuremath{\hyperref[as:lossconcat]{l_{#1}\textsuperscript{con}}}}
\newcommand{\Pgpmm}[1][g]{\ensuremath{\hyperlink{eq:Pgpmm}{P_{\pm}^{{\color{hellgrau} {#1}}}}}}
\newcommand{\Plpm}[1][g]{\ensuremath{\hyperref[eq:Plpm]{P_{+-}^{{\color{hellgrau}\lambda\ifthenelse{\equal{#1}{}}{}{,} {#1}}}}}}
\newcommand{\PgSp}[1][g]{\ensuremath{\hyperlink{eq:PgSp}{P_{\phi}^{{\color{hellgrau} {#1}}}}}}
\newcommand{\PgSpmb}[2][g]{\ensuremath{\PgSp[{#1}]\left( {#2}\right) }}
\newcommand{\PgSpm}[1][g]{\ensuremath{\PgSp[{#1}] }}
\newcommand{\Kw}{\ensuremath{{\tilde{{K}}}}}
\newcommand{\Fn}[1][n]{\ensuremath{\hyperref[def:ridgeNet]{F_{#1}^{{\color{hellgrau}\lw}}}}} %Example: \Fn[n']
\newcommand{\Fnb}[2][n]{\ensuremath{\Fn[{#1}]\left( {#2}\right) }}
\newcommand{\Fl}[1][]{\ensuremath{\hyperref[def:splineReg]{F^{{\color{hellgrau}\lambda}\ifthenelse{\equal{#1}{}}{}{,} {#1}}}}} %Example: \Fl[g] or \Fl
\newcommand{\Flpmm}[1][g]{\ensuremath{\hyperref[def:adaptedSplineReg]{F_{\pm}^{{\color{hellgrau}\lambda\ifthenelse{\equal{#1}{}}{}{,} {#1}}}}}}
\newcommand{\Flpm}[1][g]{\ensuremath{\hyperref[def:adaptedSplineReg]{F_{+-}^{{\color{hellgrau}\lambda\ifthenelse{\equal{#1}{}}{}{,} {#1}}}}}}
\newcommand{\Flpmasym}[1][g_+,g_-]{\ensuremath{\hyperref[def:asymadaptedSplineReg]{F_{+-}^{{\color{hellgrau}\lambda\ifthenelse{\equal{#1}{}}{}{,} {#1}}}}}}
\newcommand{\wR}{\ensuremath{\hyperref[def:ridgeNet]{w^{*{\color{hellgrau},\lwnl}}}}}
\newcommand{\wRk}[1][k]{\ensuremath{\hyperref[def:ridgeNet]{w_{#1}^{*{\color{hellgrau},\lwnl}}}}}
\newcommand{\wRl}[1][\lwnl]{\ensuremath{\hyperref[def:ridgeNet]{w^{*{\color{hellgrau},{#1}}}}}}
\newcommand{\wRo}{\ensuremath{\wR\omb}}
\newcommand{\wwstar}{\ensuremath{\hyperref[eq:wwstar]{\tilde{w}^{*{\color{hellgrau},\lwnl}}}}}
\newcommand{\wt}[1][T]{\ensuremath{\hyperref[def:GDsolution]{w^{#1}}}}
\newcommand{\wto}[1][T]{\ensuremath{\hyperref[def:GDsolution]{w^{{#1}}}\omb}}
\newcommand{\wth}[1][T]{\ensuremath{\hyperref[def:GDsolution]{\check{w}^{#1}}}}
\newcommand{\wtho}[1][T]{\ensuremath{\hyperref[def:GDsolution]{\check{w}^{{#1}}}\omb}}
\newcommand{\vw}{\ensuremath{\tilde{v}}}
\newcommand{\RN}{\ensuremath{\hyperref[def:RSNN]{\mathcal{RN}}}}
\newcommand{\RNw}[1][w]{\ensuremath{\RN_{#1}}}
\newcommand{\RNwo}[1][w]{\ensuremath{\RN_{{#1}{\color{hellgrau},\om}}}}
\newcommand{\RNR}[1][{\lwnl}]{\ensuremath{\hyperref[def:ridgeNet]{\RNnl^{*{\color{hellgrau},{#1}}}}}}
\newcommand{\RNRo}[1][{\lwnl}]{\ensuremath{\hyperref[def:ridgeNet]{\RNnl_{\om}^{*{\color{hellgrau},{#1}}}}}}
\newcommand{\FDRNR}[1][{\lwnl}]{\ensuremath{\hyperref[def:FDridgeNet]{\RNnl_{\ShdSymb}^{*{\color{hellgrau},{#1}}}}}}
\newcommand{\FDRNRo}[1][{\lwnl}]{\ensuremath{\hyperref[def:FDridgeNet]{\RNnl_{\ShdSymb,\om}^{*{\color{hellgrau},{#1}}}}}}
\newcommand{\FDRNw}[1][w]{\ensuremath{\RN_{\ShdSymb,{#1}}}}
\newcommand{\FDRNwo}[1][w]{\ensuremath{\RN_{\ShdSymb,{#1}{\color{hellgrau},\om}}}}
\newcommand{\fl}[1][\lambda]{\ensuremath{\hyperref[def:splineReg]{f^{*{\color{hellgrau},{#1}}}}}}
\newcommand{\flp}[1][g]{\ensuremath{\hyperref[def:adaptedSplineReg]{f^{*{\color{hellgrau},\lambda}}_{{#1},+}}}}
\newcommand{\flm}[1][g]{\ensuremath{\hyperref[def:adaptedSplineReg]{f^{*{\color{hellgrau},\lambda}}_{{#1},-}}}}
\newcommand{\flpm}[1][g]{\ensuremath{\hyperref[def:adaptedSplineReg]{f^{*{\color{hellgrau},\lambda}}_{{#1},\pm}}}}
\newcommand{\fLpm}[1][\lambda]{\ensuremath{\hyperref[def:adaptedSplineReg]{f^{*{\color{hellgrau},{#1}}}_{g,\pm}}}}
\newcommand{\flpasym}[1][g_+,g_-]{\ensuremath{\hyperref[def:asymadaptedSplineReg]{f^{*{\color{hellgrau},\lambda}}_{{#1},+}}}}
\newcommand{\flmasym}[1][g_+,g_-]{\ensuremath{\hyperref[def:asymadaptedSplineReg]{f^{*{\color{hellgrau},\lambda}}_{{#1},-}}}}
\newcommand{\flpmasym}[1][g_+,g_-]{\ensuremath{\hyperref[def:asymadaptedSplineReg]{f^{*{\color{hellgrau},\lambda}}_{{#1},\pm}}}}
\newcommand{\gammaAsym}[1][g_+,g_-]{\ensuremath{\hyperref[def:asymadaptedSplineReg]{\gamma^{*{\color{hellgrau},\lambda}}_{{#1}}}}}
\newcommand{\phin}{\ensuremath{\phi^n}}
\newcommand{\fn}{\ensuremath{f^n}}
\newcommand{\un}{\ensuremath{u^n}}
\newcommand{\Sd}[1][d-1]{\ensuremath{\hyperref[def:Sd]{S^{#1}}}}
\newcommand{\Shd}[1][d-1]{\ensuremath{\check{S}^{#1}}}
\newcommand{\sh}{\ensuremath{\check{s}}}
\newcommand{\SdSymb}[1][d-1]{\ensuremath{\circ}}
\newcommand{\ShdSymb}[1][d-1]{\ensuremath{\check{\circ}}}
\newcommand{\Th}{\ensuremath{\hyperref[def:adaptedGAM]{\T_{\ShdSymb}}}}
\newcommand{\gbar}{\ensuremath{\hyperlink{eq:gbar}{\bar{g}}}}
\newcommand{\gbarhat}{\ensuremath{\hyperlink{eq:gbarhat}{\check{\bar{g}}}}}
\newcommand{\gxis}{\ensuremath{\hyperref[def:kinkPosDenscond]{g_{\xi|s}}}}
\newcommand{\gsr}[2][r]{\ensuremath{%\hyperref[]
{g_{#2}({#1})}}}
\newcommand{\gs}[1][s]{\ensuremath{%\hyperref[]
{g_{#1}}}}
\newcommand{\ghsr}[2][r]{\ensuremath{%\hyperref[]
{\check{g}_{#2}({#1})}}}
\newcommand{\ghs}[1][\sh]{\ensuremath{%\hyperref[]
{\check{g}_{#1}}}}
\newcommand{\flGAM}[1][\lambda]{\ensuremath{\hyperref[def:GAM]{f_{\ShdSymb}^{*{\color{hellgrau},{#1}}}}}}
\newcommand{\flIGAM}[1][\lambda]{\ensuremath{\hyperref[def:IGAM]{f_{\SdSymb}^{*{\color{hellgrau},{#1}}}}}}
\newcommand{\FlIGAMm}{\ensuremath{\hyperref[def:GAM]{F_{\SdSymb}^{{\color{hellgrau}\lambda}}}}}
\newcommand{\FlGAMm}{\ensuremath{\hyperref[def:GAM]{F_{\ShdSymb}^{{\color{hellgrau}\lambda}}}}}
\newcommand{\FlIGAMmb}[1]{\ensuremath{\FlIGAMm\left( {#1}\right) }}
\newcommand{\FlGAMmb}[1]{\ensuremath{\FlGAMm\left( {#1}\right) }}
\newcommand{\PIGAMm}{\ensuremath{\hyperlink{eq:PIGAMm}{P_{\SdSymb}}}}
\newcommand{\PGAMm}{\ensuremath{\hyperlink{eq:PGAMm}{P_{\ShdSymb}}}}
\newcommand{\Cgsl}[1][\gs]{\ensuremath{\hyperref[rem:compactSupp]{C_{#1}^\ell}}}
\newcommand{\Cgsu}[1][\gs]{\ensuremath{\hyperref[rem:compactSupp]{C_{#1}^u}}}
\newcommand{\psr}[2][r]{\ensuremath{%\hyperref[]
{\varphi_{#2}({#1})}}}
\newcommand{\ps}[1][s]{\ensuremath{%\hyperref[]
{\varphi_{#1}}}}
\newcommand{\FlSm}[1][g]{\ensuremath{\hyperref[def:adaptedGAM]{F_{\SdSymb}^{{\color{hellgrau}\lambda\ifthenelse{\equal{#1}{}}{}{,} {#1}}}}}}
\newcommand{\FlShm}[1][\check{g}]{\ensuremath{\hyperref[def:adaptedGAM]{F_{\ShdSymb}^{{\color{hellgrau}\lambda\ifthenelse{\equal{#1}{}}{}{,} {#1}}}}}}
\newcommand{\FlSmb}[2][g]{\ensuremath{\FlSm[{#1}]\left( {#2}\right) }}
\newcommand{\FlShmb}[2][\check{g}]{\ensuremath{\FlShm[{#1}]\left( {#2}\right) }}
\newcommand{\PgSm}[1][g]{\ensuremath{\hyperlink{eq:PgSm}{P_{\SdSymb}^{{\color{hellgrau} {#1}}}}}}
\newcommand{\PgShm}[1][\check{g}]{\ensuremath{\hyperlink{eq:PgShm}{P_{\ShdSymb}^{{\color{hellgrau} {#1}}}}}}
\newcommand{\flS}[1][g]{\ensuremath{\hyperref[def:adaptedIGAM]{f^{*{\color{hellgrau},\lambda}}_{{#1},\SdSymb}}}}
\newcommand{\fzS}[1][g]{\ensuremath{\hyperref[def:adaptedGAM]{f^{*{\color{hellgrau},0+}}_{{#1},\SdSymb}}}}
\newcommand{\flSh}[1][\check{g}]{\ensuremath{\hyperref[def:adaptedGAM]{f^{*{\color{hellgrau},\lambda}}_{{#1},\ShdSymb}}}}
\newcommand{\Ush}{\ensuremath{U(\sh)}}
\newcommand{\mUsh}[1][\sh]{\ensuremath{\mu\left(U({#1})\right)}}
\newcommand{\psrstar}[2][r]{\ensuremath{%\hyperref[]
{\varphi^*_{#2}({#1})}}}
\newcommand{\psstar}[1][s]{\ensuremath{%\hyperref[]
{\varphi^*_{#1}}}}
\newcommand{\fcw}[1][g]{\ensuremath{\hyperref[def:aIGAMtoaGAM]{\tilde{\check{f_{#1}}}^{\color{hellgrau}\lambda}}}}
\newcommand{\psrw}[2][r]{\ensuremath{%\hyperref[]
{\tilde{\varphi}_{#2}({#1})}}}
\newcommand{\psw}[1][s]{\ensuremath{%\hyperref[]
{\tilde{\varphi}_{#1}}}}
\newcommand{\fw}{\ensuremath{\hyperref[def:aGAMtoaIGAM]{\tilde{f}_{{\color{hellgrau}\ghs[],\ShdSymb}}^{{\color{hellgrau}\lambda}}}}}
\newcommand{\psrcw}[2][r]{\ensuremath{%\hyperref[]
{\tilde{\check{\varphi}}_{#2}({#1})}}}
\newcommand{\pscw}[1][s]{\ensuremath{%\hyperref[]
{\tilde{\check{\varphi}}_{#1}}}}
\newcommand{\psrcstar}[2][r]{\ensuremath{%\hyperref[]
{\check{\varphi}^*_{#2}({#1})}}}
\newcommand{\pscstar}[1][s]{\ensuremath{%\hyperref[]
{\check{\varphi}^*_{#1}}}}
\newcommand{\link}{\hyperref[def:linkfunction]{\ensuremath{\ell}}}
\newcommand{\linkinv}{\ensuremath{\hyperref[def:linkfunction]{\ell}^{-1}}}
\newcommand{\nua}{\ensuremath{\nu_{\text{a}}}}
\newcommand{\nuc}{\ensuremath{\nu_{\text{c}}}}
\newcommand{\mfootmark}{\text{\footnotemark}}
\newcommand{\mfootref}[1]{\text{{ushape\textsuperscript{\ref{#1}}}}}
\newcommand{\regSpl}{\hyperref[def:splineReg]{regression spline}}
\newcommand{\regSplf}{\hyperref[def:splineReg]{regression spline~\ensuremath{\fl}}}
\newcommand{\wregSpl}{\hyperref[def:splineReg]{weighted regression spline}}
\newcommand{\wregSplf}{\hyperref[def:splineReg]{weighted regression spline~\ensuremath{\flg}}}
\newcommand{\aregSpl}{\hyperref[def:adaptedSplineReg]{adapted regression spline}}
\newcommand{\asymaregSpl}{\hyperref[def:asymadaptedSplineReg]{asymmetric adapted regression spline}}
\newcommand{\aregSplf}{\hyperref[def:adaptedSplineReg]{adapted regression spline~\ensuremath{\flpm}}}
\newcommand{\RSN}{\hyperref[def:RSNN]{RSN}}
\newcommand{\RSNlong}{\hyperref[def:RSNN]{randomized shallow neural network}}
\newcommand{\RSNlongAndShort}{\hyperref[def:RSNN]{\textbf{r}andomized \textbf{s}hallow neural \textbf{n}etwork (\RSN)}}
\newcommand{\wRRSN}{wR\RSN}
\newcommand{\wRRSNlong}{wide ReLU \hyperref[def:RSNN]{randomized shallow neural network}}
\newcommand{\wRRSNlongAndShort}{\textbf{w}ide \textbf{R}eLU \hyperref[def:RSNN]{\textbf{r}andomized \textbf{s}hallow neural \textbf{n}etwork} (\wRRSN)}
\newcommand{\wlargeRRSNlongAndShort}{\textbf{w}ide \unimportant{(large number of neurons~$n$)} \textbf{R}eLU \hyperref[def:RSNN]{\textbf{r}andomized \textbf{s}hallow neural \textbf{n}etwork} (\wRRSN)}
\newcommand{\ridgeRSN}{\hyperref[def:ridgeNet]{$\ell_2$-regularized RSN}}
\newcommand{\FDridgeRSN}{\hyperref[def:FDridgeNet]{FD $\ell_2$-regularized RSN}}
\newcommand{\aGAM}{\hyperref[def:adaptedGAM]{adapted generalized additive model}}
\newcommand{\aIGAM}{\hyperref[def:adaptedIGAM]{adapted infinite generalized additive model}}
\newcommand{\aIGAMf}{\hyperref[def:adaptedIGAM]{adapted infinite generalized additive model \flS}}
\newcommand{\GAM}{\hyperref[def:GAM]{generalized additive model}}
\newcommand{\IGAM}{\hyperref[def:IGAM]{infinite generalized additive model}}
\newcommand{\IGAMf}{\hyperref[def:IGAM]{infinite generalized additive model \flIGAM}}
\newcommand{\GAMs}{\hyperref[def:GAM]{GAM}}
\newcommand{\IGAMs}{\hyperref[def:IGAM]{IGAM}}
\newcommand{\aGAMs}{\hyperref[def:adaptedGAM]{adapted GAM}}
\newcommand{\aIGAMs}{\hyperref[def:adaptedIGAM]{adapted IGAM}}
\newcommand{\proofInSec}[2]{{\transparent{\meineTranzparenz}
		\begin{proof}
			The \hyperlink{proof:#1}{proof of \Cref*{#1}} is formulated in \Cref{#2}.
\end{proof}}}
\newcommand{\proofInSource}[2]{{\transparent{\meineTranzparenz}
		\begin{proof}
			For a proof of \Cref*{#1} see {#2}.
\end{proof}}}
\theoremstyle{plain}
\newtheorem{theorem}{Theorem}[section]
\newmdtheoremenv[style=meinTheoremstyle]{satz}[theorem]{Satz}
\newtheorem{lemma}[theorem]{Lemma}
\theoremstyle{remark}
\newtheorem{remark}[theorem]{Remark}
\theoremstyle{definition}
\newtheorem{definition}[theorem]{Definition}
\newtheorem{example}[theorem]{Example}
\newtheorem{assumption}{Assumption}
\crefname{assumption}{assumption}{assumptions}
\crefname{paradox}{Paradox}{Paradoxes}
\crefname{problem}{Problem}{Problems}
\DeclareMathOperator*{\argmin}{arg\,min}
\DeclareMathOperator*{\argmax}{arg\,max}
\DeclareMathOperator*{\supp}{supp}
\DeclareMathOperator*{\sgn}{sgn}
\DeclareMathOperator*{\var}{Var}
\DeclareMathOperator*{\Plim}{\PP-\lim}
\DeclareMathOperator*{\BigO}{\mathcal{O}}
\DeclareMathOperator*{\PBigO}{\PP-\mathcal{O}}
\newcommand\scaledinset[6]{%
	\setbox0=\hbox{#6}%
	\stackinset{#1}{#2\wd0}{#3}{#4\ht0}{#5}{#6}
} %https://tex.stackexchange.com/a/507740/128042
\begin{document}

\title[\resizebox{4.7in}{!}{How (Implicit) Regularization of ReLU Neural Networks Characterizes the Learned Function --- Part II}]{How (Implicit) Regularization of ReLU Neural Networks Characterizes the Learned Function \\ Part II: \\ the multi-D Case of Two Layers with Random First Layer}
\author{Jakob Heiss\,$^*$, Josef Teichmann and Hanna Wutte\,$^*$}
\address{ETH Z\"urich, D-Math, R\"amistrasse 101, CH-8092 Z\"urich, Switzerland}
\email{jakob.heiss@math.ethz.ch, jteichma@math.ethz.ch, hanna.wutte@math.ethz.ch}
\thanks{ $*$. Equal contribution. \\
The authors gratefully acknowledge the support from ETH-foundation.}
\curraddr{}
	\begin{abstract}
	
%% OLD VERSION %%
% In part~I~\cite{ImplRegPart1V3}, $P$-Functional theory was introduced as a useful tool to describe the macroscopic behavior of wide ReLU neural networks in an intuitive and mathematically precise way. In particular, it was shown that wide \hyperref[def:RSNN]{ randomized shallow neural networks (RSNs)} with ReLU activation behave like spline regression under certain conditions and if the input dimension $d=1$. Now in part~II, this result is extended to arbitrary input dimensions $\din\in\N$, by showing that \RSN s correspond to a Generalized Additive Model (GAM)-typed regression that we name \emph{Infinite Generalized Additive Model (IGAM)}, in which infinitely many directions are considered. The IGAM is formalized as solution to an optimization problem for a novel $P$-Functional. Moreover, with the help of link functions (that are already part of classical GAM-theory) these results are naturally extended to classification (in the case of explicit regularization). 
%%%%%%%%%%%%%%%%%%%%%%%%%%%%%%
Randomized neural networks (randomized NNs), where only the terminal layer's weights are optimized constitute a powerful model class to reduce computational time in training the neural network model. At the same time, these models generalize surprisingly well in various regression and classification tasks.
In this paper, we give an exact macroscopic characterization (i.e., characterization in function space) of the generalization behavior of randomized, shallow NNs with ReLU activation (RSNs). We show that \RSN s correspond to a Generalized Additive Model (GAM)-typed regression in which infinitely many directions are considered: the \emph{Infinite Generalized Additive Model (IGAM)}. The IGAM is formalized as solution to an optimization problem in function space for a specific regularization functional and a fairly general loss.
This work is an extension to multivariate NNs of the results in \cite{ImplRegPart1V3}, where we showed how  wide RSNs with ReLU activation behave like spline regression under certain conditions and if the input dimension $d=1$.
	\end{abstract}
\keywords{}
\subjclass[]{}

\maketitle
	% #########################################################################################
	% INTRODUCTION
	% #########################################################################################
	
	\section{Introduction}\label{se:Introduction}
	 The number of parameters in many applied neural network models (NNs) is steadily increasing. In some fields, randomized NNs (also referred to as random feature models or extreme learning machines) have proven to succeed in significantly cutting training time while achieving performance on par to fully-trained models \cite{HUANG2006489,CAO2018278,rahimi2007random,herrera2021optimal}.
	 The most striking property of these types of networks is that after random initialization only the terminal layer is trained. In particular, the first-layer parameters are chosen randomly at initialization and remain untrained. Thus, training such a randomized NN in essence amounts to (generalized) linear regression, for which a speed-up in computational time seems plausible.
	 
	 Most interestingly however, it has been observed that in many cases such randomized NNs still provide good generalization.
	This might be surprising, since one might expect that the randomness in these models decreases the regularity of the learned function, but in fact the effect is quite the opposite: as we will thoroughly discuss, the learned function will be especially smooth because of this randomness.
	
	In this paper, we specifically treat the generalization properties of randomized NNs with a single hidden layer and ReLU activation (\RSN{}s). We give an exact mathematical characterization \emph{in function space} of how these \RSN{}s generalize under both \emph{implicit} and \emph{explicit} regularization \emph{in parameter space}. Throughout this paper, explicit regularization in parameter space means weight-decay, or $\ell_2$-regularization. Implicit regularization of NNs is generally known as the regularization that arises from the optimization process, rather than from explicit terms added to a loss function. In this paper, we specifically study the implicit regularization effect of a gradient descent method with small weight initialization.
	Both of these explicit and implicit regularization techniques are standard tools commonly applied in day-to-day neural network training. It is therefore particularly exciting to discuss how these specific regularization techniques translate to regularity of the resulting functions in function space. 
	
	This paper is an extension to our previous work \cite{ImplRegPart1V3}.
	In \cite{ImplRegPart1V3}, we discussed how precisely implicit (or explicit) regularization on parameter space characterizes the learned function {{in terms of a regularization functional \notationOf{\Pgpmm}}} in function space
	in the case of \emph{wide}
	%\footnote{The most striking property of this type of network is that the first layer is chosen randomly and not trained, i.e. after random initialization only the terminal layer is trained. One might expect that this randomness decreases the regularity of the learned function, but in fact the effect is quite the opposite: as we will thoroughly discuss, the learned function will be especially smooth because of this randomness, where smoothness is understood as minimizing the integrated squared (norm of the) second derivative; cp. \cite[Theorem 3.9]{ImplRegPart1V3} (and \Cref{thm:ridgeToaIGAM}).} 
	\RSN{}s with one dimensional in- and output (i.e., 1-dimensional \wRRSN s). While most of the theorems therein hold true for the $d$-dimensional case with $d\ge 2$ as well, the main contribution of \cite{ImplRegPart1V3}, i.e. \cite[Theorem 3.9]{ImplRegPart1V3} that macroscopically (i.e., in function space) characterizes the learned function was derived for input dimension $d=1$ only. In this paper, we first formulate a regularization functional~\PgSm\ in \Cref{def:adaptedIGAM} as the multidimensional generalization of \notationOf{\Pgpmm}. Second, we formulate and prove \Cref{thm:ridgeToaIGAM} as the multidimensional generalization of \cite[Theorem 3.9]{ImplRegPart1V3}.
	\subsection{Related Work}
	Understanding the surprisingly good generalization properties of neural network models is of much interest throughout the literature.
	For univariate RSNs, \citet{ImplRegPart1V3} and \citet{williams2019gradient} both characterize the regularization in function space induced by regularization in parameter space. In our previous work \cite{ImplRegPart1V3}, we showed that explicit $\ell_2$-regularization corresponds in function space to regularizing the estimate's second derivative for fairly general loss functionals.
	Moreover for least squares regression, we showed that the trained network converges to the smooth spline interpolation of the training data as the number of hidden nodes tends to infinity. Concurrently, \citet{williams2019gradient} established a function space equivalent of the $\ell_2$ regularized objective in parameter space. Compared to \cite{williams2019gradient}, in \cite{ImplRegPart1V3}, we i.a. also discuss early stopping, and treat more general loss functionals. 
	
	In this paper, we discuss regularization induced in function space for \emph{multivariate} RSNs.
	Recently, 	\cite{ImplicitBiasRadonTransform} gave an alternative description in function space for the implicit bias of gradient descent for (univariate and multivariate) wide \RSN{}s. This bias is expressed in terms of Radon transforms of a power of the negative Laplacian. By contrast, in \cite{ImplRegPart1V3} as well as in this paper, we establish how the implicit bias (of both univariate and multivariate \RSN{}s) compares to generalized additive models. Moreover, while \cite{ImplicitBiasRadonTransform} give results for specific initializations and various activation functions, they only discuss the implicit regularization effects for GD on least-squared loss. By contrast, we additionally derive the implicit bias in function space for wide, explicitly $\ell_2$-regularized RSNs for very general loss functionals.

	The implicit regularization of multivariate \RSN{}s is also studied by \citet{pmlr-v119-jacot20a} (called \enquote{Random Feature Models} therein). They do however not study the implicit regularization induced by gradient descent, but the implicit regularization induced by averaging over infinitely many ensemble members of which each ensemble member only has a small number of random neurons and the final layer's parameters are perfectly optimized with respect to $\ell_2$-regularization. In particular no link to gradient descent training is established.

	We refer to \cite[Section 1.4.]{ImplRegPart1V3} for an overview of further related works.

	\subsection{Preliminaries\label{sec:Setting}}
	Throughout this work, as in \cite{ImplRegPart1V3}, we compare the general tasks of minimizing a %fairly general\footnote{As in \cite{ImplRegPart1V3}, we consider general loss functionals that satisfy \Cref{as:generalloss}.} 
	loss functional $\Ltr$ over a parametric class of functions $\Set{f_\theta : \theta\in\Theta}=:\mathcal{H}_{\Theta}\subset\Set{\mathcal{X}\to\mathcal{Y}}$ via a gradient method or explicit $\ell_2$-regularization on parameter space, i.e.,
	\begin{align}
	\text{gradient descent on } \min_{\theta\in\Theta}\Ltr(f_{\theta})\text{, or}\label{eq:minL}\\
 \min_{\theta\in\Theta}\Ltr(f_{\theta})+\lw\twonorm[\theta]^2, \label{eq:minLL2}
 \end{align}
 to the task of minimizing the same loss functional~$\Ltr$ and a different regularization functional~$P$ over a suitable (non-parametric) function class $\mathcal{H}\subset\Set{\mathcal{X}\to\mathcal{Y}}$, i.e., 
 \begin{align}
 \min_{f\in\mathcal{H}}\Ltr(f)+\lambda P(f)\label{eq:minLP}
    .\end{align}
	Input and output spaces are $\X\subseteq\Rdin$ respectively $\Y\subseteq\Rdout$ with input and output dimension $\din\in\N$ and $\dout\in\mathbb{N}$, the parameter space $\Theta\subset\R^{\tilde{n}}$, for some $\tilde{n}\in\N$, and hyper-parameters $\lambda>0$, $\lw>0$.
	These types of problems for instance include the basic regression and classification settings.

More precisely in this paper, we macroscopically analyze the functions $\hat{f}$ obtained from \cref{eq:minL,eq:minLL2} when the model class $\mathcal{H}_\Theta$ consists of \wRRSN s (i.e., wide ReLU \RSN s). In particular, we characterize the regularity of these functions obtained by 
\begin{enumerate}[a)]
    \item  perfectly optimizing \cref{eq:minLL2} with explicit $\ell_2$-regularization, and by
    \item approximately solving \cref{eq:minL} via training the \wRRSN s' terminal-layer parameters with standard gradient descent algorithms without any explicit regularization.
\end{enumerate}
The main contribution of this paper is to deduce the regularity of these learned functions on function space in terms of \cref{eq:minLP} from both the \emph{explicit and implicit regularizations} on the parameters (i.e., \cref{eq:minL,eq:minLL2}), \emph{in the infinite-width limit}.
In particular, we contribute to the understanding of why training \emph{wide} \RSN s with \emph{and without} explicit regularization leads to surprisingly regular functions~$\hat{f}$, which are \enquote{desirable} from a Bayesian point of view (see \cite{ImplRegPart1V3} for a detailed introduction to the paradox of implicit regularization in the training of neural networks).%\newline
%The theory presented in this paper applies to explicit regularisation as well, of course.

	%This paper explains the phenomenon rigorously in the context of \hyperref[def:RSNN]{randomized shallow neural networks} (cp. \Cref{def:RSNN}) with $\din$-dimensional input (cp.\ \Cref{sec:RSNN,sec:theorem}). For \wRRSN s, we show that the function obtained by performing ordinary gradient descent to minimize \meqref{eq:minL} is close to a specific generalization of the well-known \hyperlink{https://en.wikipedia.org/wiki/Additive_model}{\oldnormalcolor additive model}, the \IGAM\ (\IGAMs).\newline
% 	In concordance with \cite{ImplRegPart1V3}, the present paper is concerned with analyzing the implicit regularization effects in the training of \wRRSN s. 
% The long-term objective of our research is to extend this theory to various network architectures. Ideally, this would lead to a thorough understanding of the regularizing constraints (in terms of $P$-Functionals) that one implicitly imposes by deciding for the structure, algorithm and meta-parameters of the neural network considered in supervised learning. In other words, our vision is to on a large scale link network architectures to prior beliefs (see also \citep{implRegPart3V4}).

	\subsubsection{Our Contribution}
	In this paper, we make the following contributions.
	\begin{itemize}
	    \item We precisely formulate the \aIGAMs, a weighted generalization to the generalized additive model. 
	    \item We show in \Cref{thm:ridgeToaIGAM} that the \aIGAMs\ is a mathematically precise characterization of explicitly $\ell_2$-regularized \RSN s in the infinite-width limit. This result extends the existing literature in particular due to the generality of the loss functional involved. In particular, this loss functional can be non-convex,  it can depend on the first derivative or include integrals over functions (e.g., the loss functionals of \cite{NOMUICML,Cuchiero_2020}).
     %\item We show this convergence for width to infinity with respect to the very strong $\Woinorm$.
	    \item We connect this result in \Cref{thm:ridgeToaIGAM} to deduce that the same regularization effects can be \emph{implicitly} achieved for wide \RSN s by gradient-descent-training on least-squared loss without any explicit regularization.
	\end{itemize}

\subsection{Notation}	
	\begin{remark}[General Notation]
	    Within this work, $\skprod{\cdot}{\cdot}$ denotes the standard inner product in $\R^d$. %Given a collection of vectors $v_i\in\R^d$, $i=1,\ldots,N$, %$\affineHull{(v_i)_{i=1,\ldots,N}}$ represents the corresponding affine hull.
	    We write $\Sd$ for the $(d{-}1)$-dimensional unit-sphere (cp. \Cref{def:Sd}) and consider $\Shd\subseteq\Sd$, $|\Shd|<\infty$ to be a finite set of directions in $\R^d$. Moreover throughout this paper, we use the convention $\min\{\emptyset\}:=\inf\{\emptyset\}:=+\infty$. We remark that throughout this work, derivatives are to be understood in the weak sense \cite{Adams:SobolevSpaces1990498}. Moreover, we denote by $\Wt$ the set of twice weakly differentiable functions. 
	    
	\end{remark}
		\begin{remark}[The norm $\sobnormmulti$]\label{rem:sobnormoi}
	    Throughout this paper, we consider
	    \begin{equation}
	        \sobnorm[f]:=\max\{\sup_{x\in K}|f(x)|,\sup_{x\in K}|f^{'}(x)| \},
	    \end{equation}
	    for every $f\in\mathcal{C}(\R)$ with piece-wise continuous derivative $f^{'}$, where we assume w.l.o.g.~that $f^{'}$ is left continuous (i.e.~$\relu[x]^{'}=\ind_{(0,\infty)}(x)$). Furthermore, by $\|\cdot\|_{\Wkp[K]{2}{2}}$ we denote the Sobolev norm using weak (second) derivatives.
	    Convergence in $\sobnormmulti$ is then defined as component-wise convergence in $\sobnorm$.
	\end{remark}
	
	The remainder of this paper is structured as follows.
	In \Cref{sec:IGAM}, we introduce specific generalizations to the additive model, the \GAM\ (\GAMs) and \IGAM\ (\IGAMs). The \IGAMs\ is of particular interest in this work, as we show how it corresponds to an $\ell_2$-regularized wide randomized shallow network (\Cref{thm:ridgeToaIGAM}).
	In \Cref{sec:RSNN}, we recall the definition of said type of neural network: the \emph{d-dimensional \wRRSN}.
% 	Moreover, we give further definitions that are central to the understanding of the main \Cref{thm:ridgeToaIGAM}.
	
	This paper's main results are then presented in \Cref{sec:RidgeToaIGAM,sec:GradientToRidge}.
	%Thereafter, in \Cref{sec:RidgeToaIGAM,sec:GradientToRidge}, \Cref{thm:ridgeToaIGAM,thm:GDRidge} are formulated and discussed.
	%The corresponding proofs are to be found in \Cref{sec:proofs}.
	We conclude with summarizing the implications of these results in \Cref{sec:conclusion}, and \cref{eq:conclusionExplicit,eq:conclusion} in particular. 
	%Moreover therein, we elaborate on empirical results and give a brief outlook on planned future work.
	
	% #########################################################################################
	% IGAM
	% #########################################################################################
	
	\section[Infinite Generalization of GAMs (IGAMs)]{Infinite Generalization of Generalzied Additive Models (IGAMs)}\label{sec:IGAM}

	As we will show in a central part of this work, the function obtained by training a \wRRSN\ using a standard gradient descent method on parameter space corresponds to a particular generalization of the additive model\footnote{In the framework of regression, the additive model is a well-known function class used to capture the relation between input $x\in\mathcal{X}$ and output $y\in\mathcal{Y}$ via
	\[ \hat{y}=\hat{f}(x)=\sum_{i=1}^d \varphi_i(x^{i}),\]	with $x\in\Rdin, \din\in\N$ and $\varphi_i(\cdot)$, $i=1,\ldots d$ \enquote{smooth}, non-paramteric functions applied to the i\textsuperscript{th} component of the input $x$.}, the \aIGAM\ (\aIGAMs). Throughout this section, we introduce two extensions of the additive model that the \aIGAMs\ then builds upon.\newline  
	We start by defining what we term a \GAM\ (\GAMs) in \Cref{def:GAM}. A \GAMs, assumes that an estimator $\hat{f}$ for a minimizer of \eqref{eq:minLP} is represented as the sum of univariate smooth functions $\ps[\sh] \in\Wt(\R,\Rdout)$ along certain directions $\sh\in\Sd$ in the input space, i.e. that it is of the form
	\[\hat{f}(\cdot)=\link^{-1}\left(\sum_{\sh\in\Shd}\psr[\langle \sh,\cdot\rangle]{\sh}\right),\]
	for a finite number of directions $\Shd\subseteq\Sd$ and $\varphi\in{\Wt(\R,\Rdout)}^{\Shd}$ (in other words $\varphi=(\varphi^{1},\ldots,\varphi^{\dout}) $ and $\ps[\sh]^{k}\in\Wt(\R)\ \unimportant{\forall \sh \in \Shd, \forall k=1,\ldots,\dout}$). Here, smoothness is characterized by the regularization functional $\PgShm$ in \Cref{def:GAM}, that is we consider $\varphi\in{\Wt(\R,\Rdout)}^{\Shd}$ with minimal squared $L_2-$norm of the second derivative. As in generalized linear models (cp. \citet{10.2307/2344614}), the link function $\link$ acts as mediator between the distribution of output and input variables. 
	\begin{definition}[Link Function]\label{def:linkfunction}
	An invertible function $\link:\Y\to\Rdout$, with Lipschitz continuous inverse $\linkinv$ is called \emph{link function}.
	\end{definition}
	
	Popular choices of $\link$ include the identity in the case of ordinary regression and the logit function $\link(x):=\ln{x/(1-x)}$ with which one can link binary outputs to continuously valued additive predictors.
	
% 	When taking to infinity the number of directions in which the \GAM\ may vary, we obtain what we call the \IGAM.
 	\begin{remark}[Related Generalizations]
	    Note that \Cref{def:GAM} is a slightly modified version of the popular \href{https://en.wikipedia.org/wiki/Generalized_additive_model}{\oldnormalcolor generalized additive model} as it was originally introduced by \citet{hastie1986}. In that model, 
	\[\hat{f}(x)=\link^{-1}\left(\sum_{i=1}^d \varphi_i(x^{i})\right),\]
	for $x \in \Rdin$, $\din\in\N$ and $\varphi_i(\cdot)$, $i=1,\ldots d$ \enquote{smooth}, typically non-paramteric functions and a suitable link function $\link$. Here, $x^{i}$ denotes the i\textsuperscript{th} component of $x\in\Rdin$.
	
	Another extension of the additive model similar to our \GAMs\ has been introduced by \citet{friedman1981projection} as the \href{https://de.wikipedia.org/wiki/Projection_Pursuit}{\oldnormalcolor projection pursuit model} \[\hat{f}(x)=\sum_{i=1}^p \varphi_i(\skprod{a_i}{x})\mycomma\quad\faxdg\mydot\] The directional vectors $a_i\in\Rdin$, $i=1,\ldots,p$ are optimized numerically. By contrast, the directions $\Shd$ in \Cref{def:GAM} remain unchanged.
	\end{remark}

		\begin{definition}[\GAMs]\label{def:GAM}
		Let $\lambda \in \Rp$ and \link\ be a link function.
		Then the \textit{\GAM}~$\flGAM$ is defined %
	%	\footnote{\label{footnote:uniqueGAM}The \GAM\ is uniquely defined if the affine hull generated by the training data $\left(\xtr_i\right)_{i=1,\ldots,N}$ spans the input space, i.e. if $\affineHull{\left(\xtr_i\right)_{i=1,\ldots,N}}=\Rdin$.}
		as
		\begin{equation}\label{eq:GAM}
		\flGAM %\overset{\mfootref{footnote:uniqueGAM}}{\in}
		\in
		\argmin_{f\in \Wt(\Rdin,\Rdout)}\underbrace{\Ltrb{f}+\lambda \PGAMm (f)}_{=:\FlGAMmb{f}},    
		\end{equation}
		with
		\hypertarget{eq:PGAMm}{\begin{equation*}
		\PGAMm(f):=  \min_{\substack{\varphi\in{\Wt(\R,\Rdout)}^{\Shd} \\ \link\circ f=\sum_{\sh\in\Shd}\psr[\langle \sh,\cdot\rangle]{\sh}}} \left(
		\sum_{\sh\in\Shd}\int_{\R} \twonorm[{\psr{\sh}}^{''}]^2 \dx[r]		\right).
		\end{equation*}}

	\end{definition}
\begin{remark}\label{rem:notUniversalGAM}
    In \Cref{def:GAM} we optimize~\meqref{eq:GAM} over all $f\in\Wt(\Rdin,\Rdout)$. Note however, that \flGAM only attains functions from a much smaller subspace, because $\PGAMm$ assigns $\infty$ to most $\Wt$-functions~f\unimportant{, since most functions~$f$ can not be expressed as a sum $\sum_{\sh\in\Shd}\psr[\langle \sh,\cdot\rangle]{\sh}$ of finitely many one-directional functions $\psr[\langle \sh,\cdot\rangle]{\sh}$} and $\min(\emptyset):=\infty$.
\end{remark}

	\begin{definition}[Sphere]\label{def:Sd} The $(d{-}1)$-dimensional (unit-)sphere is defined as:
\begin{equation}
    \Sd:=\Set{x\in\Rdin | \twonorm[x]=1}.
\end{equation}
\end{definition}

If one considers only a fixed finite number of directions $\sh\in\Shd\subseteq\Sd$ the class of \GAMs{}s is not universal (cp. \Cref{rem:notUniversalGAM}). Thus, we proceed to define a universal\footnote{Every $L_2$-function can be approximated arbitrarily well on any compactum by an \IGAMs, an \aIGAMs, a sufficiently large \RSN, and by a \GAMs\ with sufficiently many directions \citep{CybenkoUniversalApprox1989,HornikUniversalApprox1991251}.} infinite generalization: the \IGAMs.
\IGAMs{}s are obtained when taking the number of directions~$\sh$ to infinity:

	\begin{definition}[IGAM]\label{def:IGAM}
		Let $\lambda \in \Rp$ and \link\ be a link function.
		Then the \textit{\IGAM}~$\flIGAM$ is defined %
	%	\footnote{\label{footnote:uniqueIGAM}The \IGAM\ is uniquely defined if the affine hull generated by the training data $\left(\xtr_i\right)_{i=1,\ldots,N}$ spans the input space, i.e. if $\affineHull{\left(\xtr_i\right)_{i=1,\ldots,N}}=\Rdin$.%See BEWEIS UNIQUENESS IGAM for a detailed proof of the existence and uniqueness of the \IGAM.
		%}
		as
		\begin{equation}\label{eq:IGAM}
		\flIGAM %\overset{\mfootref{footnote:uniqueIGAM}}{\in}
		\in\argmin_{f\in \Wt(\Rdin,\Rdout)}\underbrace{ \Ltrb{f}+\lambda \PIGAMm(f)  }_{=:\FlIGAMmb{f}},    
		\end{equation}
		with\footnote{For any $\varphi\in  {\Wt(\R,\Rdout)}^{\Sd}$ for which the objective of \eqref{eq:PfuncIGAM} is not defined we set it to infinity.}
		\hypertarget{eq:PIGAMm}{\begin{equation}\label{eq:PfuncIGAM}
		\PIGAMm(f):=  \min_{\substack{\varphi\in  {\Wt(\R,\Rdout)}^{\Sd}  \\ \link\circ f=\int_{\Sd}\psr[\langle s,\cdot\rangle]{s}\, ds}} \left(
		\int_{\Sd}\int_{\R} \twonorm[ {\psr{s}}^{''} ]^2 \dx[r]\dx[s]		\right).
		\end{equation}}

	\end{definition}
	
	\section{Randomized Shallow Neural Networks (RSNs)}\label{sec:RSNN}
	Within this paper, we consider randomized shallow, feed-forward neural networks (also known as random feature models). These neural networks consist of one hidden layer and randomly fixed (i.e., non-trainable) first-layer parameters.
		\begin{definition}[Randomized shallow neural network]\label{def:RSNN}
		Let $\left( \Om, \Sigma, \PP\right) $ be a probability space. For a given link function $\link$, let
	     the activation functions $\sigma:\mathbb{R}\to\mathbb{R}$ and $\linkinv:\R^\dout\to\Y$ be Lipschitz continuous and non-constant. Then, a \textit{randomized shallow neural network} is defined as $\RNwo:\,\Rdin\to \Y$ s.t.
		\begin{equation}\label{eq:RSN}
		\RNwo(x):=\linkinv\left(\sum_{k=1}^{n}w_{\cdot,k}\,\sigma\left({b_k\omb}+\sum_{j=1}^{d}{v_{k,j}\omb}x_j\right) \right)\quad\faog\ \faxdg
		\end{equation}
		with\footnote{One could include an additional bias~$c\in\Rdout$ to the last layer too, however in the limit $n\to\infty$ this last-layer bias $c$ does not change the behavior of the trained network-functions \RNw[\wt] or \RNR.}
		\begin{itemize}
			\item number of neurons~$n\in\N$ and input dimension~$\din\in \mathbb{N}$,
			\item trainable weights~$w_k:=w_{\cdot,k}\in\Rdout$, $k=1,\dots, n$,
			\item random biases $b_k\begingroup\transparent{\meineTranzparenz}:(\Om, \Sigma) \to (\R, \B)\endgroup$ \iid\ real valued random variables k=1,\dots,n,
			\item random weights $v_k:=v_{k,\cdot}\begingroup\transparent{\meineTranzparenz}:(\Om, \Sigma) \to (\Rdin, \B^d)\endgroup$ \iid~$\Rdin$-valued random vectors k=1,\dots,n.
		\end{itemize}
		
	\end{definition}

The main assumptions we require to hold are made precise in \Cref{as:mainAssumptions} below.
	\begin{assumption}\label{as:mainAssumptions} Using the notation from \Cref{def:RSNN}:	
		\begin{enumerate}[a)]
			\item The activation function $\sigma(\cdot)=\relu$ is ReLU.
			\item\label{item:densityExists} For every $s\in\Sd$, both the distribution of $s_k:=\frac{v_k}{\twonorm[v_k]}$ and the conditional distribution of $\xi_k:=\frac{-b_k}{\twonorm[v_k]}$ conditioned on $s_k=s$ admit probability density functions $p$ respectively $\gxis$ with respect to the Lebesgue-measure.%
			\footnote{\Cref{as:mainAssumptions}\ref{item:densityExists} holds for any distribution typically used in practice. Moreover, it implies that $\PP[v_k=0]=0 \quad\forall k \in \fromto{n}$. Note, that \Cref{as:mainAssumptions}\ref{item:densityExists} is required in order to exclude certain degenerate cases of \RSN s such as those with constant weights and biases $w_k,b_k, k=1,\ldots,n$, and could in fact be weakened.
			} 
		\end{enumerate}
	\end{assumption}
	
	We henceforth require \Cref{as:mainAssumptions} to be in place. For later uses, we further introduce the notions of kink positions which describe the normal distance between the kink-hyperplanes and zero.   
	
	\begin{definition}[kink positions~$\xi$]\label{def:kinkPos}
		The \textit{kink positions}~$\xi_k:=\frac{-b_k}{\twonorm[v_k]}$ are defined using the notation of \Cref{def:RSNN} under the \Cref{as:mainAssumptions}.
	\end{definition}
	
		\begin{definition}[kink directions~$s$]\label{def:kinkDir}
		The \textit{kink directions}~$s_k:=\frac{v_k}{\twonorm[v_k]}$ are defined using the notation of \Cref{def:RSNN} under the \Cref{as:mainAssumptions}.
	\end{definition}
	
	\begin{definition}[conditioned kink position density~\gxis]\label{def:kinkPosDenscond}
		The \textit{probability density function~$\gxis:\R\to\Rpz$ of the kink position}~$\xi_k:=\frac{-b_k}{\twonorm[v_k]}$ conditioned on $\frac{v_k}{\twonorm[v_k]}=s$ is defined in the setting of \Cref{def:kinkPos}.
	\end{definition}
		\begin{definition}[kink direction density $p$]\label{def:kinkDirectionDens}
		The \textit{probability density function~$p:\Rdin\to\Rpz$ of the kink direction}~$s_k:=\frac{v_k}{\twonorm[v_k]}$ is defined in the setting of \Cref{def:kinkDir}.
	\end{definition}

	% #########################################################################################
	% MAIN RESULT
	% #########################################################################################
	
	\section{Main Theorems}\label{sec:theorem}
	
	We now proceed to discuss this work's main contributions to understanding how the implicit and explicit regularization on parameter space translates to the resulting function when the models considered in the minimization of \cref{eq:minL,eq:minLL2} are wide ReLU RSNs. We start by introducing the notion of $\ell_2$-regularized RSNs.

	\begin{definition}[$\ell_2$-regularized \RSN]\label{def:ridgeNet}
		Let $\RNwo$ be a randomized shallow network as introduced in \Cref{def:RSNN}. The $\ell_2$-regularized \RSN is defined as
		\begin{equation}\label{eq:ridgeRSN}
		\RNRo := \RN_{\wRo,\om} \quad\faog\mycomma
		\end{equation}
		with $\wRo$ such that
		\begin{equation}
		\wRo \in \argmin_{w\in\R^{\dout\times }}\underbrace{{\Ltr\left( \RNwo\right) } +\lw||w||_2^2}_{\Fnb{\RNwo}} \quad\faog\mydot
		\end{equation}
		
	\end{definition}
	%The ridge-regularization is also known as weight decay, $L^2$ (parameter) regularization or Tikhonov regularization (or ridge regression, $\ell_2$ penalty, \dots)\cite[section 7.1.1 on p. 227]{Goodfellow-et-al-DeepLearning-2016}.
	 
	First and foremost, we show in \Cref{sec:RidgeToaIGAM} how the \wRRSN\ with $\ell_2$-regularized, optimal terminal-layer parameters relates to an \aIGAMs\ (with regularization parameters $\lw>0$ and $\lambda>0$ respectively). We will define the \aIGAMs\ in \Cref{def:adaptedIGAM} as an adapted version of the \IGAMs\ from \Cref{def:IGAM}. More precisely, we show that as the number of hidden nodes $n$, i.e., the dimension of the hidden layer tends to infinity the optimal $\ell_2$-regularized, ReLU-activated network converges to an \aIGAMs\ in probability with respect to a certain Sobolev norm. By \Cref{as:mainAssumptions,as:generalloss}, we prove this correspondence for randomized shallow networks with arbitrary input dimension $\din\in\N$ and a fairly general loss functional.
	
	Thereafter, we restate \cite[Theorem 3.20]{ImplRegPart1V3} noting that for the particular choice of squared loss (see \Cref{as:squaredloss}) the (suitably initialized) gradient flow for optimizing the parameters of an \RSN\ ad infinitum leads to the same solution as performing $\ell_2$-regularized regression with diminishing regularization on the parameters of the \RSN's terminal layer. Note, that this holds true for any \emph{fixed} number of hidden layers $n\in\N$.

	Combining these results we conclude that for the specific choice of squared-loss, training randomized shallow networks using gradient descent implicitly corresponds to solving the regularized problem corresponding to a certain (adapted) \IGAMs .

	\subsection{\texorpdfstring{\hyperref[def:ridgeNet]{$\ell_2$-regularized RSN}}{L2-regularized RSN} \texorpdfstring{$\to$}{to} \texorpdfstring{\IGAMs}{IGAM} %\texorpdfstring{\color{hellgrau}($\din\in\N,\lambda\in\Rp$)}{}
	}\label{sec:RidgeToaIGAM}
	Throughout this section, we rigorously derive the correspondence between a certain \IGAMs\ and an $\ell_2$-regularized randomized shallow network with hyper parameters $\lambda>0$ and $\lw>0$ respectively. For giving a detailed description of the convergence behavior, we introduce adapted versions of the \GAMs\ and \IGAMs, the \aGAMs~\flSh and \aIGAMs~\flS. These are modified versions of the \GAMs- respectively \IGAMs- with weighting functions introduced in the respective penalisation. Depending on the distribution of the random weights~$w_k$ and biases~$w_b$, the random network~\RNR\ will converge to such an adapted version~\flS\ of the \IGAMs.\newline
	Moreover, the \aIGAMs\ can be obtained in the limit of \aGAMs, as the number of directions $|\Shd|\to\infty$ (see \Cref{le:aGAMtoaIGAM}).

	\begin{definition}[adapted GAM]\label{def:adaptedGAM}
		Let $\lambda \in \Rp$ and \link\ be a link function.
		Then for a given family of functions $\ghs:\R\to\Rpz$, $\sh\in\Shd$, the \textit{\aGAM}~$\flSh$ is defined%
		\footnote{\label{footnote:uniqueAGAM}
		%The \aGAM\ is uniquely defined if for any $\sh \in\Shd,$~$\ghs$ is the probability density function of a distribution with finite first and second moment \unimportant{and $\gbarhat\neq 0$}.
		The \aGAMs\ exists for $L$ fulfilling \Cref{as:generalloss}\ref{item:continuousL}, if~$g$ is compactly supported and continuous on $\supp(g)$
		\unimportant{and $\gbarhat\neq 0$}. It is uniquely defined in case we additionally assume that $L$ is convex. (This follows analogously to \cite[Lemma A.25]{ImplRegPart1V3} for finitely many directions $\sh\in\Shd$).
		}
		as
		\begin{equation}\label{eq:adaptedGAM}
		\flSh %\overset{\mfootref{footnote:uniqueAGAM}}{\in}
		\in
		\argmin_{f\in \Wt(\Rdin,\Rdout)}\underbrace{\Ltrb{f}+\lambda \PgShm (f)}_{=:\FlShmb{f}},    
		\end{equation}
		with
		\hypertarget{eq:PgShm}{\begin{equation*}
		\PgShm(f):=  \gbarhat\min_{\substack{\varphi\in\Th \\ \link\circ f=\sum_{\sh\in\Shd}\psr[\langle \sh,\cdot\rangle]{\sh}}} \left(
		\sum_{\sh\in\Shd}\int_{\supp (\ghs)} \frac{\twonorm[ {\psr{\sh}}^{''} ]^2}{\ghsr{\sh}} \dx[r]		\right),
		\end{equation*}}
		\hypertarget{eq:gbarhat}{\begin{equation*}
		    \unimportant{\gbarhat\overset{\footnotemark}{:=}\sum_{\sh\in\Shd}\ghsr[0]{\sh}}
		\end{equation*}}\footnotetext{$\gbarhat\in\Rp$ is just a constant scaling factor (cp. \cref{footnote:gbarnecessary}).}
		and
		\begin{align*}
		\Th:=\bigg\{\varphi\in{\Wt(\R,\Rdout)}^{\Shd}\bigg| \forall \sh\in\Shd :& \supp(\ps[\sh]^{''})\subseteq\supp(\ghs),\\
		& \lim_{r\to -\infty} \psr{\sh} =0 \text{ and } \lim_{r\to -\infty} \frac{\partial}{\partial r}\psr{\sh} =0\bigg\}.
		\end{align*}
	\end{definition}

	\begin{definition}[adapted IGAM]\label{def:adaptedIGAM}
		Let $\lambda \in \Rp$ and \link\ be a link function.
		Then for a given family of functions $\gs:\R\to\Rpz$, $s\in\Sd$, the \textit{\aIGAM}~$\flS$ is defined%
		\footnote{\label{footnote:uniqueAIGAM}	The \aIGAMs\ exists for $L$ fulfilling \Cref{as:generalloss}\ref{item:continuousL}, if~$g$ is compactly supported and continuous on $\supp(g)$
		\unimportant{and $\gbar\neq 0$}. 
		 The minimum is attained (uniquely if we additionally assume a convex loss $L$) by \Cref{rem:PhilevelSolutionExistence}.}
		%The \aIGAM\ is uniquely defined if for any $s\in\Sd,$~$\gs$ is the probability density function of a distribution with finite first and second moment (cp. \Cref{def:adaptedGAM} and \cref{footnote:uniqueAGAM}).}
		as
		\begin{equation}\label{eq:adaptedIGAM}
		\flS %\overset{\mfootref{footnote:uniqueAIGAM}}{\in}
		\in\argmin_{f\in\Wt(\Rdin,\Rdout)}\underbrace{ \Ltrb{f}+\lambda \PgSm(f)  }_{=:\FlSmb{f}},    
		\end{equation}
		with\footnote{If for some $f\in W^2$, the set ${\T}_f:=\T\cap\{\varphi\in\T|\link\circ f=\int_{\Sd}\psr[\langle s,\cdot\rangle]{s}\, ds\}$ is non-empty, then the unique minimum over $\varphi\in\T$ is attained: this follows from \cite[Lemma A.24]{ImplRegPart1V3}, setting $\mathcal{X}:={\T}_f$ ($\T$ is a Hilbert space, thus complete, and since ${\T}_f$ is a closed subset of $\T$, it is complete as well), $||\cdot||:=||\cdot||_{ L^{2}(\Sd ,{\Hk[\tilde{K},\Rdout]{2}})}$.}
		\hypertarget{eq:PgSm}{\begin{equation*}
		\PgSm(f):=  \gbar\min_{\substack{\varphi\in\T \\ \link\circ f=\int_{\Sd}\psr[\langle s,\cdot\rangle]{s}\, ds}} \left(
		\int_{\Sd}\int_{\supp (\gs)} \frac{\twonorm[ {\psr{s}}^{''} ]^2}{\gsr{s}} \dx[r]\dx[s]		\right),
		\end{equation*}}
		\hypertarget{eq:gbar}{\begin{equation*}
		    \unimportant{\gbar\overset{\footnotemark}{:=}\int_{s\in\Sd}\gsr[0]{s}}
		\end{equation*}}\footnotetext{$\gbar\in\Rp$ is just a constant scaling factor (cp. \cref{footnote:gbarnecessary}).}
		and
		\begin{align*}
		\T:=\bigg\{ \varphi\in L^{2}(\Sd ,{\Hk[\tilde{K},\Rdout]{2}}) \bigg| \forall s\in\Sd :&\supp(\ps^{''})\subseteq\supp(\gs),\\
		&\lim_{r\to -\infty} \psr{s}=0 \text{ and } \lim_{r\to -\infty} \frac{\partial}{\partial r}\psr{s}=0\bigg\}.
		\end{align*}
		\unimportant{Here, $\tilde{K}:=K'\cup\overline{\bigcup_{s\in\Sd}\supp (\gs)}$ for a compact interval $K'\subset\R$ and
		\[\Hk[\tilde{K},\Rdout]{2}:=\left(\left\{h\in\Wt(\R,\Rdout)\middle|\|h\|_{\Hk[\tilde{K}]{2}}<\infty, \supp (h^{''})\subset\tilde{K}\right\},\|\cdot\|_{\Hk[\tilde{K}]{2}}\right).\footnote{Note that this is an isometric copy of the Sobolev space ${\Hk[\tilde{K},\Rdout]{2}}$ as it is usually defined, i.e., $\left(\left\{h\in\Wt(\tilde{K},\Rdout)\middle|\|h\|_{\Hk[\tilde{K}]{2}}<\infty\right\}, \|\cdot\|_{\Hk[\tilde{K}]{2}}\right)$ and therefore a Hilbert space.}\]}
		
	\end{definition}
	% \begin{remark}
	% For constant $\gsr[\cdot]{(\cdot)}\equiv \gbar$ one recovers the original \GAMs\ and \IGAMs.\footnote{This statement holds in the limit $\frac{g}{\gbar}\to 1$. Formally \cref{eq:adaptedIGAM} in \Cref{def:adaptedIGAM} would not have a classical minimizer, if $g$ were constant (see \cref{footnote:first:as:truncated}), but one could reformulate the definition of $\PgSm$ in \Cref{def:adaptedIGAM} by replacing the minimum by an infimum to extend \Cref{def:adaptedIGAM} to arbitrary weighing functions $g$ that do not have finite second momentum or that even have not finite integral like constant $g\equiv \gbar\neq 0$. For typical choices of distribution for the first-layer weights $v_k$ and biases $b_k$, the corresponding weighting function~$g$ fulfills the finite second moment condition.} As we will show in the sequel, the distribution chosen for the first-layer weights $v_k$ and biases $b_k$ of \RNR\ to be trained will determine the weighting function~$g$ of the corresponding \flS . %The \aIGAMs\ hence is a rich concept, that nicely displays the impact of the engineer's choices when setting up the network to be trained.
	% \end{remark}
	\begin{remark}\label{rem:compactSupp}
		If for the weighting functions $\gs$, $s\in\Sd$, it holds that $\supp(\gs)$ is compact (cp. \Cref{as:truncatedg}\ref{item:truncatedg}), we define for any $s\in\Sd$
		\begin{equation}
		\Cgsl:= \min (\supp(\gs)) \quad \text{and}\quad \Cgsu:=\max(\supp(\gs)).
		\end{equation}
		Furthermore in that case, the set $\T$ can be rewritten: From $\supp (\ps'')\subseteq \supp (\gs)$ it follows that $\ps'\in\Hk[\R]{1}$ is constant on $(-\infty, \Cgsl]$. With $\lim_{r\to -\infty} \psr{s}'=0$ we obtain that $\psr{s}'=0$ $\forall r\le\Cgsl$. By the same argument we obtain $\psr{s}=0$ $\forall s\le\Cgsl$. Hence altogether we have

		\begin{align*}
		\T=\bigg\{ \varphi\in L^{2}(\Sd ,{\Hk[\tilde{K},\Rdout]{2}}) \bigg| \forall s\in\Sd:
		&\supp (\ps'')\subseteq \supp (\gs),\\
		&\, \forall r\le\Cgsl: \psr{s}=0=\psr{s}' \bigg\}.
		\end{align*}
        Analogous derivations can be made to rewrite $\Th$ in the discrete case.
		
	\end{remark}
	
	\begin{remark}[Connection to part I \cite{ImplRegPart1V3}]
	In the case of input and output dimension $\din=1=\dout$ and $\link=\text{id}_{\Rdout}$ the results of this paper are consistent with those of \cite{ImplRegPart1V3} in the following way:
	\begin{align*}
	\Sd&\overset{d=1}{=}\{+1,-1\}
	=\left(\{+,-\}\right)_{\text{\cite{ImplRegPart1V3}}}\\
	\psr[\skprod{+1}{x}]{+1}&\overset{d=1}{=}\notationOf{f_+(x)}\\
	\unimportant{\psr[\skprod{-1}{x}]{-1}}
	&\transparent{\meineTranzparenz}\overset{d=1}{=}\notationOf{f_-(x)}\\
	g_{+1}(x)
	&\overset{d=1}{=}\notationOf{g_+(x)}
	\overset{\text{\cite[Assumption 3]{ImplRegPart1V3}}}{=}\notationOf{g(x)}\overset{\text{\cite[Assumption 3]{ImplRegPart1V3}}}{=}\notationOf{g(-x)}\\
	g_{-1}(-x)
	&\overset{d=1}{=}\notationOf{g_-(x)}
	\unimportant{\overset{\text{\cite[Assumption 3]{ImplRegPart1V3}}}{=}\notationOf{g(x)}
	\overset{\text{\cite[Assumption 3]{ImplRegPart1V3}}}{=}\notationOf{g(-x)}}\\
	\gbar&\overset{\substack{d=1\\\text{\cite[Assumption 3]{ImplRegPart1V3}}}}{=}\notationOf{2g(0)}\\
	    \PgSm&\overset{\substack{d=1\\\text{\cite[Assumption 3]{ImplRegPart1V3}}}}{=}\notationOf{\Pgpmm}%Achtung ist das p noch im g?
	\end{align*}
	    
	\end{remark}
	
	%Before stating this paper's core result in \Cref{thm:ridgeToaIGAM} we discuss further assumptions we make therein. 
	The following assumptions are technicalities that facilitate the \hyperlink{proof:thm:ridgeToaIGAM}{proof} of our core result \Cref{thm:ridgeToaIGAM} and could be weakened (see \crefrange{footnote:first:as:truncated}{footnote:last:as:truncated}).
	\begin{assumption}\label{as:truncatedg} Using the notation from \Cref{def:RSNN,def:kinkPosDenscond} the following assumptions extend \Cref{as:mainAssumptions}. For every $s\in\Sd$ we assume:
		\begin{enumerate}[a)]
			\item\label{item:truncatedg} The probability density function~$\gxis$ of the kinks~$\xi_k$ conditioned on the kink direction $s$ has compact support~$\supp(\gxis)$.%
			\footnote{\label{footnote:first:as:truncated}We believe that \Cref{as:truncatedg}\ref{item:truncatedg} can be weakened quite extensively.\unimportant{This assumption facilitates our proofs, but we believe it could be replaced by the condition of finite second moment without reformulating the theorem. %Even if the assumption were skipped completely, we think the theorem would still hold in a slightly weaker sense.}
			}} 
			\item\label{item:densityIsSmooth} The density~$\left.\gxis\right|_{\supp (\gxis)}$ is uniformly continuous on $\supp (\gxis)$.%
			\footnote{\label{footnote:densityIsSmooth}One could think of replacing \Cref{as:truncatedg}\ref{item:densityIsSmooth} by the weaker assumption that $\gxis$ is (improper) Riemann-integrable, however almost all distributions which are typically used in practice satisfy \Cref{as:truncatedg}\ref{item:densityIsSmooth}.}
			\item\label{item:reziprokdensityIsSmooth} The reciprocal density~$\left.\frac{1}{\gxis}\right|_{\supp (\gxis)}$ is uniformly continuous on $\supp (\gxis)$.%
			\footnote{\Cref{as:truncatedg}\ref{item:reziprokdensityIsSmooth} implies that $\min_{x\in\supp (\gxis)}\gxis >0$. Similarly to \cref{footnote:densityIsSmooth}, this assumption might be weakened in a way allowing $\gxis$ to have finitely many jumps and  $\min_{x\in\supp (\gxis)}\gxis$ to be zero.}
			\item\label{item:vkDistrSmooth} The conditioned distribution~$\mathcal{L}(v_k|\xi_k=x, s_k=y)$ of $v_k$ is uniformly continuous in $(x,y)$ on $\supp(\gxis)\times\Sd$.%
			\footnote{\label{footnote:last:as:truncated}Similarly to \cref{footnote:densityIsSmooth}, \Cref{as:truncatedg}\ref{item:vkDistrSmooth} might be attenuated.}
			\item\label{item:vk:finiteSecondMoment}$\E[{\twonorm[v_k]^2}]<\infty$.\footnote{\Cref{as:truncatedg}\ref{item:vk:finiteSecondMoment} always holds in typical scenarios. \Cref{as:truncatedg}\ref{item:vk:finiteSecondMoment} together with \Cref{as:truncatedg}\ref{item:truncatedg} and \ref{item:vkDistrSmooth} implies that $\Eco{{\twonorm[v_k]^2}}{\xi_k=x,s_k=y}$ is bounded on $\supp(\gxis)$.}
		\end{enumerate}
	\end{assumption}
The following technical \Cref{as:easyReadable} is made in order to consistently extend the theory derived in \cite{ImplRegPart1V3} to general dimensions. In particular, for the one dimensional case and $\mathcal{L}(v_k)=\mathcal{L}(-v_k)$ we recover the setting of \cite[Theorem 3.9]{ImplRegPart1V3}.
	\begin{assumption}\label{as:easyReadable} Using the notation from \Cref{def:RSNN,def:kinkPosDenscond} the following assumption extends \Cref{as:mainAssumptions}:
		\begin{enumerate}[a)]
			\item\label{item:gbar} $\gbar\neq 0$.%
			\footnote{\label{footnote:gbarnecessary}\Cref{as:easyReadable}\ref{item:gbar} has to be satisfied due to the way \Cref{def:adaptedIGAM} and \Cref{thm:ridgeToaIGAM} are formulated, although the theory could be easily reformulated if \Cref{as:easyReadable}\ref{item:gbar} were not satisfied. The theorems presented would hold as well if $\gbar$ were replaced by $\int_{s\in\Sd}\gsr[{r_{\text{mid}}}(s)]{s}\,ds$ for a value~$r_{\text{mid}}(s)$ depending on the direction $s\in\Sd$, or by e.g. $\int_{s\in\Sd}\frac{1}{2}\int_{-1}^1 \gsr{s} \,dr\,ds$. However, the results are more easily interpreted if $s \, r_{\text{mid}}(s)\in\Rdin$ is located somewhere \enquote{in the middle} of the training data. \Cref{thm:ridgeToaIGAM} would even hold true if $\gbar:=1$. It is only necessary to consistently pick the same definition $\gbar\neq 0$ everywhere it appears---e.g. in the definition of \lw\ in \Cref{thm:ridgeToaIGAM} and in the definition of regularization-functional \PgSm.}

		\end{enumerate}
	\end{assumption}
	
	%concatination of cross-entropy und softmax ist convex (zimindest im 3-Klassen-Fall: https://www.wolframalpha.com/input/?i=+eigenvalues+Hessian+matrix+-log%28exp%28x%29%2F%28exp%28x%29%2Bexp%28y%29%2Bexp%28z%29%29%29+with+respect+to+%28x%2Cy%2Cz%29
	\begin{assumption}[Loss]\label{as:generalloss}
There exist $p\in[1,\infty)$ and a finite Borel measure $\nu=\nuc+\nua$, where $\nuc$ is absolutely continuous w.r.t.\ the Lebesgue measure and $\nua$ is supported on a finite (possibly empty) subset $\{x_1,\ldots,x_N\}\subset\X$ such that the loss functional $\Ltr:L^\infty(K)\to\Rpz$ is
\begin{enumerate}[a)]
\item\label{item:continuousL} continuous w.r.t.\ $\sobnormop[\cdot]$ for some compact $K\subset\R^\din$ and
\item\label{item:lipschitzL} 
Lipschitz\footnote{We think uniformly continuous should be sufficient, but would make the proof more cumbersome.} continuous w.r.t.\ $\sobnormop[\cdot]$ on $\{f: \Ltr(f)<\Ltr(0)+\epsilon\}$ for some $\epsilon>0$.
\end{enumerate}
	\end{assumption}
	
% 	\begin{assumption}\label{as:generalloss}
% Let $\link$ be a link function (cp. \Cref{def:linkfunction}). For every $i\in\{1,\ldots,N\}$, the concatenated loss functions $\ltricon:=\ltri\circ\linkinv:\R^\dout\to\Rpz$ are \begin{enumerate}[a)]
%     \unimportant{\item\label{itm:as:nnloss} non-negative\footnote{\Cref{as:generalloss}\ref{itm:as:nnloss} could be weakened---e.g. bounded from below should  be sufficient, because w.l.o.g.\ one could subtract the lower bound.},}
%     \item\label{itm:as:convexloss} convex and
%     \item\label{itm:as:C1loss} continuously differentiable\footnote{\Cref{as:generalloss}\ref{itm:as:C1loss} might be weakened to locally Lipschitz.} \unimportant{(i.e. $\ltricon\in\C^1(\R^\dout,\R) \quad \forall i \in\{1,\ldots,N\}$)}.
% \end{enumerate}
% 	\end{assumption}
	
	\begin{example}[Classification]
	    Let $\allIndi{N}{i}: \xtr_i\in\mathcal{X}, \ytr_i \in [0,1]$. If we set  \[\Ltr(f):=\sum_{i=1}^N{\ltri\left(f(\xtr_i)\right)},\] 
	    with cross-entropy losses $\ltri$ and softmax as final activation function $\link^{-1}$, \Cref{as:generalloss} is fulfilled.
	\end{example}
	\begin{example}[Regression]\label{ex:Regression}
	    Let $\allIndi{N}{i}: \xtr_i\in\mathcal{X}, \ytr_i \in \mathcal{Y}$. If we set  \begin{equation}\label{eq:squaredLoss}\Ltr(f):=\sum_{i=1}^N{||\ytr_i-f(\xtr_i)||_2^2},\end{equation} and use the identity as final activation function $\link^{-1}$, \Cref{as:generalloss} is fulfilled. \unimportant{Of course, almost any other classical regression loss such as $\Ltr(f):=\sum_{i=1}^N{||\ytr_i-f(\xtr_i)||_1}$ fulfills \Cref{as:generalloss} as well.}
	\end{example}
	 Note that in contrast to many other results in the literature, \Cref{as:generalloss} does not require the loss to be convex. In fact, the loss functional can be quite general, it can depend on the first derivative or include integrals over functions, as we see in the following example.
	
\begin{example}
	    Let $\allIndi{N}{i}: \xtr_i\in\mathcal{X}, \ytr_i \in \mathcal{Y}$. For any loss functional of the form  \[\Ltr(f):=\sum_{i=1}^N{\ltri\left(f(\xtr_i)\right)}+\int_{K}u(f(x),f'(x))\,dx,\] 
	    with losses $\ltri:\mathcal{Y}\to\Rpz$ that are Lipschitz-continuous on the sub-level set \begin{align*}\Set{y\in\Y:\ltr_i(y)\leq L(0)+\epsilon},\end{align*} a Lipschitz-continuous function $u:\mathcal{Y}\to\Rpz$ and a compact set $K\subset\R$, then \Cref{as:generalloss} is fulfilled.
     For instance, the loss in \cite{NOMUICML} can be written in this form if one chooses a Lipschitz integrand (as done in \cite{weissteiner2023BOCA}).
	\end{example}
 \begin{proof}
 Let  $\nu_c(E):=\lambda_{\R^{\din}}(E\cap K)$ be the Lebesgue measure restricted to $K$ and let $\nu_a(E):=\#(E\cap \Set{\xtr_i : i \in \fromto{N}})$ be the counting measure restricted to the training data points.
 \end{proof}

Based on these assumptions, we now formulate this paper's main results.	
	\begin{theorem}[$\ell_2$-regularized \RSN\ corresponds to adapted IGAM]\label{thm:ridgeToaIGAM}
		Using the notation from \Cref{def:RSNN,def:kinkPosDenscond,def:kinkDirectionDens,def:ridgeNet,def:adaptedIGAM} let%
		\footnote{Since all $v_k$ are identically distributed and all $\xi_k$ are identically distributed as well, the conditioned expectation \Eco{{\twonorm[v_k]^2}}{\xi_k=r, s_k=s} does not depend on the choice of $k\in\fromto{n}$. Therefore, we will sometimes use the following notation~$\Eco{v}{\xi=r, s=s}:=\Eco{v_k}{\xi_k=r, s_k=s}$}
		\[\forall s\in\Sd, r\in\R:\quad \gsr{s}:=p(s)\gxis(r) \Eco{{\twonorm[v_k]^2}}{\xi_k=r, s_k=s},\] and $\lw:=\lambda n \gbar$. Let further the \Cref{as:mainAssumptions,as:truncatedg,as:easyReadable}, as well as \Cref{as:generalloss} for some compact $K\subset\Rdin$ be satisfied. Then the following statement holds: for every $\left(\RNR\right)_{n\in\N}$ 
		\begin{equation}\label{eq:ridgeToaIGAM}
		\plim d_{\Woi[K,\Rdout]}\left(\RNR,
	\argmin\FlSm\right) =0.
% 	\plim
% 		\sobnormmulti[ \RNR -\flS ]=0 .
\footnote{Using the definition of the $\Plim$, equation~\meqref{eq:ridgeToaIGAM} reads as: $\forall\left(\RNR\right)_{n\in\N}\in\prod_{n\in\N}\argmin\Fn:\forall\epsilon>0:\forall\rho\in(0,1):\exists n_0\in\N:\forall n>n_0$:
	\begin{equation*}
	    \PP\left[\exists\flS\in\argmin\FlSm:\sobnormmulti[{\RNR-\flS }]<\epsilon\right]>\rho.
	\end{equation*}
% 	$\fae :\forall P \in (0,1): \exists n_0\in\N :\forall n\geq n_0:
% 			\PP[
% 			{\sobnormmulti[ \RNR -\flS ]}
% 			< \epsilon ] > P.$
%Moreover, convergence in $\sobnormmulti$ is defined as component-wise convergence in $\sobnorm$.
}
		\end{equation}
	\end{theorem}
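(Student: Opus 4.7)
The plan is to translate the parameter-space problem to a function-space problem by identifying each neuron with a weighted ReLU ridge atom along its random direction, and then establish convergence of minimizers via a $\Gamma$-convergence-type argument in the topology induced by $\sobnormmulti$.

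First, I would rewrite each neuron as $w_k \relu[b_k + \langle v_k, x\rangle] = w_k \|v_k\| \relu[\langle s_k, x\rangle - \xi_k]$, making the kink direction $s_k$ and kink position $\xi_k$ explicit. Any admissible $\varphi \in \T$ satisfies $\psr[r]{s} = \int \relu[r-\xi]\,\psr[\xi]{s}^{''}\,d\xi$ by the vanishing conditions as $r\to -\infty$, so to approximate a target $\flS$ with $\ell\circ \flS = \int_{\Sd}\psr[\langle s,\cdot\rangle]{s}\,ds$, the natural recovery choice is
\[
w_k \;:=\; \frac{\|v_k\|}{n\, \gsr[\xi_k]{s_k}}\,\psr[\xi_k]{s_k}^{''}.
\]
Using $\gsr{s}=p(s)\gxis(r)\Eco{\|v_k\|^2}{\xi_k=r,s_k=s}$ and the law of large numbers applied to the \iid\ sample $(s_k,\xi_k,v_k)$, one obtains
\[
\sum_{k=1}^n w_k\|v_k\|\relu[\langle s_k,x\rangle-\xi_k] \;\longrightarrow\; \int_{\Sd}\psr[\langle s,x\rangle]{s}\,ds,\qquad n\twonorm[w]^2 \;\longrightarrow\; \frac{1}{\gbar}\PgSm(\flS),
\]
so that $\lw\twonorm[w]^2=\lambda n\gbar\twonorm[w]^2\to\lambda\PgSm(\flS)$. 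The uniform continuity of the densities (\Cref{as:truncatedg}\ref{item:densityIsSmooth} and \ref{item:reziprokdensityIsSmooth}) and of $\mathcal{L}(v_k|\xi_k,s_k)$ (\Cref{as:truncatedg}\ref{item:vkDistrSmooth}) together with compact support (\Cref{as:truncatedg}\ref{item:truncatedg}) and the second-moment bound (\Cref{as:truncatedg}\ref{item:vk:finiteSecondMoment}) upgrade the pointwise LLN to convergence in $\sobnormmulti$ on the compact set $K$, and continuity of $\Ltr$ from \Cref{as:generalloss}\ref{item:continuousL} yields $\Fnb{\RNw}\to\FlSmb{\flS}$ in probability.

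This provides the recovery (limsup) inequality. For the matching liminf, I would take any sequence $(\RNR)_n$ with a subsequential $\sobnormmulti$-limit $\hat f$ and produce $\varphi\in\T$ with $\ell\circ\hat f=\int_{\Sd}\psr[\langle s,\cdot\rangle]{s}\,ds$ satisfying $\PgSm(\hat f)\le \liminf n\gbar\twonorm[\wR]^2$. The construction is again driven by the empirical measure $\tfrac1n\sum_k \delta_{(s_k,\xi_k)}$: the distributional second derivative along direction $s$ of the limiting function, localized on small patches of $\Sd$, inherits the rescaled weights of the corresponding neurons, and weak $L^2$-lower semicontinuity of $\varphi\mapsto\int_{\Sd}\int \twonorm[\psr{s}^{''}]^2/\gsr{s}\,dr\,ds$ on $\T$ gives the inequality. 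A uniform bound on $\twonorm[\wR]^2$ (comparing to the zero network yields $\lw\twonorm[\wR]^2\le \Ltr(0)$) combined with the Lipschitz hypothesis \Cref{as:generalloss}\ref{item:lipschitzL} provides compactness in $\sobnormmulti$ via the Sobolev embedding $\Hk[\tilde{K},\Rdout]{2}\hookrightarrow\Woi[\tilde K,\Rdout]$ applied to each ridge profile.

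Combining the two $\Gamma$-inequalities and standard convergence-of-approximate-minimizers arguments then yields \meqref{eq:ridgeToaIGAM}. The main obstacle is the multi-D feature absent in \cite{ImplRegPart1V3}: the random directions $\{s_k\}$ form a continuous sample on $\Sd$ rather than living in the two-point set $\{+1,-1\}$, so both the limsup construction and the liminf extraction require partitioning $\Sd$ into $m=m(n)$ small patches with $m\to\infty$ and $n/m\to\infty$, on which the ridge profiles $\psr[\cdot]{s}$ are approximated by piecewise-constant-in-$s$ profiles; the uniform-continuity hypotheses in \Cref{as:truncatedg} are crucial to control the approximation errors uniformly in $s\in\Sd$. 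Convexity of $\Ltr$ is not assumed, so the proof must proceed via sequences of approximate minimizers rather than via variational equations; the strict convexity of $\twonorm[w]^2$ alone suffices to give the compactness needed for the probabilistic convergence statement.
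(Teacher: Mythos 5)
Your route is genuinely different from the paper's. The paper does not redo the variational analysis from scratch in multi-D: it snaps each random direction $s_k$ to the nearest point of a finite mesh $\Shd\subset\Sd$ while keeping $\twonorm[v_k]$ fixed (the FD $\ell_2$-regularized RSN of \Cref{def:FDridgeNet}), shows in \Cref{le:A,le:A1,le:A2} that this perturbation is negligible in $\sobnormmulti$ (using $\Frobnorm[\wR]^2\le\Ltrb{0}/\lw$, $\Frobnorm[v-\vw]\lesssim\sqrt{n}\,|\U|$ in probability, and $2\lw$-strong convexity of the parameter objective), then invokes \cite[Theorem 3.9]{ImplRegPart1V3} for the finitely many mesh directions to reach the \aGAMs, and finally proves \aGAMs$\,\to\,$\aIGAMs\ as $|\U|\to0$ (\Cref{le:aGAMtoaIGAM}, via the interpolants of \Cref{def:aIGAMtoaGAM,def:aGAMtoaIGAM}, the regularity \Cref{le:0.1NEW}, strong convexity of $\PgSpm$ and Aubin--Lions compactness in \Cref{le:compactnessoflevelset,le:7}). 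Your recovery-sequence half is correct and is exactly the computation that explains the weighting $\gsr{s}=p(s)\gxis(r)\Eco{\twonorm[v_k]^2}{\xi_k=r,s_k=s}$: with your choice of $w_k$ the LLN gives $\sum_k w_k\twonorm[v_k]\relu[\skprod{s_k}{x}-\xi_k]\to\int_{\Sd}\psrstar[\skprod{s}{x}]{s}\dx[s]$ and $\lw\twonorm[w]^2\to\lambda\PgSm(\flS)$, and the regularity of $\varphi^*$ (\Cref{le:0.1NEW}) lets you upgrade this to $\sobnormmulti$.

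The gap is in the lower-bound/compactness half. First, you claim $\sobnormmulti$-precompactness of $(\RNR)_n$ from $\lw\twonorm[\wR]^2\le\Ltrb{0}$ plus ``the Sobolev embedding $H^2\hookrightarrow W^{1,\infty}$ applied to each ridge profile''; but the trained network has no $H^2$ ridge profiles --- each neuron is a single piecewise-linear ReLU along its own random direction, so there is nothing to embed. The weight bound only says $\sum_k\twonorm[\wR_{\cdot,k}]^2=\mathcal{O}(1/n)$, which does not control the modulus of continuity of $\nabla\RNR$ by itself: you must aggregate neurons over small direction patches and kink intervals, use uniform (in the patch and in $x\in K$) concentration of the empirical kink measure weighted by $\twonorm[v_k]^2$, and a per-patch Cauchy--Schwarz to bound the second-derivative mass in every slab, before any mollified/aggregated profile lives in a penalty sub-level set to which compactness applies. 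That is precisely the technically heavy core that the present paper avoids redoing in multi-D by reducing to the finite-direction theorem of Part I; in your plan it is asserted rather than proved, and the same machinery is what would make your liminf identification (``the distributional second derivative inherits the rescaled weights'') rigorous. Second, since $\Ltr$ is not convex, $\argmin\FlSm$ need not be a singleton, and a $\Gamma$-convergence argument only yields that subsequential limits of approximate minimizers are minimizers in the chosen topology; to obtain the stated $\plim d_{\Woi[K,\Rdout]}(\RNR,\argmin\FlSm)=0$ you need an argument like \Cref{le:7}, which rests on strong convexity of $\PgSpm$ with respect to $\LtwoS[{\Wkp[\Kw]{2}{2}}]$ (\Cref{le:PfuncStronglyConvex}) and sequential compactness of penalty sub-level sets (\Cref{le:compactnessoflevelset}); ``strict convexity of $\twonorm[w]^2$ in parameter space'' is not a substitute, as it says nothing about the function-space limit objects.
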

	\proofInSec{thm:ridgeToaIGAM}{subsubsec:proof:ridgeToaIGAM}

	The following lemma shows that \aGAMs s converge to \aIGAMs s as you let the number of directions go to infinity (more precisely as you let the gaps between the directions go to zero).
	\begin{lemma}[adapted GAM converges to adapted IGAM]\label{le:aGAMtoaIGAM}
	    Let $\lambda \in \Rp$. Furthermore, let $\flS$ be an \aIGAMs~ with \[\gs(\cdot):=g(s,\cdot)~p(s),\, s\in\Sd,\] for some weighting function $g:\Sd\times\R\to\Rpz$ and $p$ the kink direction density (see \Cref{def:kinkDirectionDens}). Consider the \aGAMs~$\flSh$ for $n=|\Shd|$ finite directions, with weighting function $\ghs$ given as \[\ghs(\cdot):=g(\sh, \cdot)\int_{U(\sh)}p(s)\dx[s],~ \sh\in\Shd,\] where $\mathcal{U}:=\{U(\sh)\}_{\sh\in\Shd}$ are disjoint environments such that $\dot\bigcup_{\sh\in\Shd}U(\sh)=\Sd$. We further denote the maximal distance of the partition $\mathcal{U}$ w.r.t. the Euclidean norm $\twonorm$ by
	    \begin{displaymath}
	    |\mathcal{U}|:=\max_{\sh\in\Shd}\left(\sup_{s\in U(\sh) }\twonorm[s-\sh]\right).
	    \end{displaymath}
	    Let further the \Cref{as:mainAssumptions,as:truncatedg,as:easyReadable}, as well as \Cref{as:generalloss} for some compact $K\subset\Rdin$ be satisfied. Then, if $|\mathcal{U}|\overset{n\to\infty}{\longrightarrow}0$,
	    we have %for any compact $K\subset\Rdin$ 
	    that
	    \begin{equation}\label{eq:aGAMtoaIGAM}
	    %\lim_{n\to\infty}\sobnormmulti[\flSh-\flS]=0.
	    \lim_{n\to\infty}d_{\Woi[K,\Rdout]}\left(\flSh,
	\argmin\FlSm\right) =0, \quad\forall\flSh\in\argmin\FlShm.
	    \end{equation}
	\end{lemma}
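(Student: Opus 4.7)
I will prove \Cref{le:aGAMtoaIGAM} via a $\Gamma$-convergence argument in the $\sobnormmulti$-topology combined with equi-coercivity of the minimizers. Since $\Ltrb{\cdot}$ is $\sobnormmulti$-continuous by \Cref{as:generalloss}\ref{item:continuousL}, the main task reduces to $\Gamma$-convergence of the penalties $\PgShm[\check g^{(n)}]\to\PgSm$; combined with uniform regularity of the minimizers, standard $\Gamma$-convergence arguments then yield convergence of $\argmin\FlShm[\check g^{(n)}]$ to $\argmin\FlSm$.

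\textbf{$\Gamma$-limsup.} Fix $f\in\argmin\FlSm$ and an optimal $\varphi^*\in\T$ with $\link\circ f=\int_{\Sd}\varphi^*_s(\langle s,\cdot\rangle)\dx[s]$. After a preliminary density step that replaces $\varphi^*$ by a representation continuous in $s$, define the recovery sequence
\begin{equation*}
\check\varphi^{(n)}_{\sh}(r):=\int_{U(\sh)}\varphi^*_s(r)\dx[s]\in\Th,\qquad \check f^{(n)}:=\link^{-1}\!\circ\!\sum_{\sh\in\Shd}\check\varphi^{(n)}_{\sh}(\langle\sh,\cdot\rangle).
\end{equation*}
Uniform continuity of $(s,x)\mapsto\varphi^*_s(\langle s,x\rangle)$ on $\Sd\times K$ yields $\check f^{(n)}\to f$ in $\sobnormmulti$, hence $\Ltrb{\check f^{(n)}}\to\Ltrb{f}$. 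The Cauchy--Schwarz estimate
\begin{equation*}
\frac{\bigl((\check\varphi^{(n)}_{\sh})''(r)\bigr)^2}{\check g_{\sh}(r)}\le\frac{1}{g(\sh,r)}\int_{U(\sh)}\frac{\bigl((\varphi^*_s)''(r)\bigr)^2}{p(s)}\dx[s],
\end{equation*}
summed over $\sh$ and integrated in $r$, together with uniform continuity of $g(\cdot,r)$ (\Cref{as:truncatedg}\ref{item:densityIsSmooth}) and the Riemann-sum convergence $\check{\bar g}\to\bar g$, gives $\PgShm[\check g^{(n)}](\check f^{(n)})\to\PgSm(f)$, hence $\FlShmb[\check g^{(n)}]{\check f^{(n)}}\to\FlSmb{f}=\min\FlSm$.

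\textbf{$\Gamma$-liminf and compactness.} For any sequence $\check f^{(n)}\in\argmin\FlShm[\check g^{(n)}]$ with optimal representations $\check\varphi^{(n)}\in\Th$, the limsup step above gives $\sup_n\PgShm[\check g^{(n)}](\check f^{(n)})<\infty$, which rewrites via $\check g_{\sh}\asymp|U(\sh)|g(\sh,\cdot)p(\sh)$ as $\sum_{\sh}|U(\sh)|^{-1}\|(\check\varphi^{(n)}_{\sh})''\|_{L^2}^2\le C$; Cauchy--Schwarz then yields uniform bounds on $\sum_{\sh}\|(\check\varphi^{(n)}_{\sh})'\|_{L^\infty(\tilde K)}$, hence on $\check f^{(n)}$ in $\sobnormmulti$ together with a uniform Lipschitz constant on $K$. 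An Arzel\`a--Ascoli argument, bootstrapped using the Euler--Lagrange regularity for minimizers, extracts a $\sobnormmulti$-convergent subsequence $\check f^{(n_k)}\to f^*$. To identify $f^*\in\argmin\FlSm$, reverse the embedding by setting
\begin{equation*}
\varphi^{(n)}_s(r):=\frac{\check\varphi^{(n)}_{\sh(s)}(r)}{|U(\sh(s))|}\quad\text{for }s\in U(\sh),
\end{equation*}
so that a symmetric Cauchy--Schwarz/uniform-continuity calculation (using \Cref{as:truncatedg}\ref{item:densityIsSmooth},\ref{item:reziprokdensityIsSmooth}) yields
\begin{equation*}
\int_{\Sd}\int\frac{\|(\varphi^{(n)}_s)''\|^2}{g_s(r)}\dx[r]\dx[s]=(1+o(1))\sum_{\sh}\int\frac{\|(\check\varphi^{(n)}_{\sh})''\|^2}{\check g_{\sh}(r)}\dx[r].
\end{equation*}
Thus $\varphi^{(n)}$ is bounded in $L^2(\Sd,\Hk[\tilde K,\Rdout]{2})$, and $|\langle s,x\rangle-\langle\sh,x\rangle|\le|\mathcal{U}|\twonorm[x]$ combined with the $W^{1,\infty}(\tilde K)$-bound on $\check\varphi^{(n)}_{\sh}/|U(\sh)|$ gives $\int_{\Sd}\varphi^{(n)}_s(\langle s,\cdot\rangle)\dx[s]-\link\circ\check f^{(n)}\to 0$ in $\sobnormmulti$. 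Extracting a weak subsequential limit $\varphi^{(n_k)}\rightharpoonup\varphi^\infty\in\T$ and using weak continuity of the evaluation functional identifies $\varphi^\infty$ as a representation of $\link\circ f^*$; weak lower semi-continuity of the quadratic weighted $L^2$-norm then delivers $\PgSm(f^*)\le\liminf_n\PgShm[\check g^{(n)}](\check f^{(n)})$. Therefore $\FlSmb{f^*}\le\liminf_n\FlShmb[\check g^{(n)}]{\check f^{(n)}}\le\min\FlSm$, so $f^*\in\argmin\FlSm$; as this holds along every subsequence of $(\check f^{(n)})$, the distance $d_{\Woi[K,\Rdout]}(\check f^{(n)},\argmin\FlSm)$ vanishes, proving \meqref{eq:aGAMtoaIGAM}.

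The principal obstacle is the liminf inequality: the discrete space $\Th$ does not isometrically embed into $\T$, and the natural embedding $\varphi^{(n)}_s:=\check\varphi^{(n)}_{\sh(s)}/|U(\sh(s))|$ preserves the penalty only up to the factor $(1+o(1))$ whose control relies on the joint uniform continuity of $g$, $1/g$, and $p$ (guaranteed by \Cref{as:truncatedg}), while the induced integral representation and the original discrete representation agree only up to $O(|\mathcal{U}|)$-errors that must be absorbed via the $\sobnormmulti$-continuity of the loss.
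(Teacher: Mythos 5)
Your argument has essentially the same architecture as the paper's proof: your recovery sequence $\check\varphi^{(n)}_{\sh}=\int_{\Ush}\varphi^*_s\,ds$ is an averaged variant of the paper's aIGAM2aGAM~$\fcw$ from \Cref{def:aIGAMtoaGAM} (which instead uses the point evaluation $\psrstar[\cdot]{\sh}\,\mUsh$), your reverse embedding $\varphi^{(n)}_s=\check\varphi^{(n)}_{\sh(s)}/\mUsh[\sh(s)]$ is exactly the paper's aGAM2aIGAM~$\fw$ from \Cref{def:aGAMtoaIGAM}, and your two energy comparisons correspond to \Cref{le:2} and \Cref{le:4} (with \Cref{le:0,le:3} playing the role of your function-level closeness claims). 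The genuine difference lies in the endgame: the paper's \Cref{le:7} converts energy convergence into $d_{\Woi[K,\Rdout]}$-convergence to $\argmin\FlSm$ via strong convexity of the penalty (\Cref{le:PfuncStronglyConvex}), Aubin--Lions compactness of penalty sub-level sets (\Cref{le:compactnessoflevelset}) and a Cauchy-sequence argument, whereas you use Arzel\`a--Ascoli compactness of the functions $\check f^{(n)}$ together with weak compactness of the representations and weak lower semicontinuity of the quadratic penalty. That route is viable and avoids the strong-convexity machinery; note, however, that the equi-H\"older bound on the gradients needed for Arzel\`a--Ascoli (which does follow from the uniform penalty bound via Cauchy--Schwarz and Poincar\'e) is left implicit, and you must also verify that the weak limit $\varphi^\infty$ actually lies in $\T$, since $\varphi^{(n)}_s$ only satisfies the support constraint relative to $\supp(\ghs[\sh(s)])$ rather than $\supp(\gs)$.

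The concrete gap is in your $\Gamma$-limsup half. The ``preliminary density step that replaces $\varphi^*$ by a representation continuous in $s$'' is doing the work of the paper's \Cref{le:0.1NEW}, which is a substantive variational argument (perturbing the optimal $\varphi^*$ and exploiting optimality together with the Lipschitz property of the loss); it is not a routine density statement, and you give no argument for it. Any approximation must respect the constraint $\supp(\ps^{''})\subseteq\supp(\gs)$ whose right-hand side varies with $s$, so e.g.\ mollification in $s$ can leave $\T$. The same issue hits your recovery sequence directly: $\check\varphi^{(n)}_{\sh}=\int_{\Ush}\varphi^*_s\,ds$ need not satisfy $\supp\bigl((\check\varphi^{(n)}_{\sh})^{''}\bigr)\subseteq\supp(\ghs)$ when the supports of $\gs$, $s\in\Ush$, differ from that of $g(\sh,\cdot)$; in that case $\check\varphi^{(n)}\notin\Th$, your Cauchy--Schwarz bound degenerates (the factor $1/g(\sh,r)$ is infinite off $\supp(g(\sh,\cdot))$), and the admissibility, i.e.\ finiteness of $\FlShmb{\check f^{(n)}}$, is not established --- so the inequality $\limsup_n\min\FlShm\le\min\FlSm$ on which both your limsup and your coercivity bound rest is not proved. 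The paper's point-evaluation recovery avoids this but requires precisely the continuity of $s\mapsto\psrstar[\skprod{s}{\cdot}]{s}$ that \Cref{le:0.1NEW} derives from optimality. To complete your proof you must either supply an analogue of that regularity lemma (or a density argument that handles the $s$-dependent support constraint) or modify the recovery sequence accordingly.
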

\proofInSec{le:aGAMtoaIGAM}{sec:proof:RidgeToaIGAM}

{%\footnotesize	
		\subsection{RSN and Gradient Descent \texorpdfstring{$\to$}{to} \texorpdfstring{$\ell_2$}{L2}-regularized Network %\texorpdfstring{\color{hellgrau}($L(f):=\sum_{i=1}^N{||\ytr_i-f(\xtr_i)||_2^2}$)}{}
		}\label{sec:GradientToRidge}

We now move on to restate the convergence result
\cite[Theorem 3.20]{ImplRegPart1V3} in \Cref{thm:GDRidge}, which, for the \emph{specific choice of squared loss} (from \Cref{ex:Regression}), characterizes the convergence for infinite training time of the solution obtained by gradient descent to the $\ell_2$-regularized network within this paper's setting, i.e., for general output dimension $\dout\in\N$.\footnote{This correspondence is well known in the literature. However, we refer to \Cref{thm:GDRidge} for a formulation of the result in the context of randomized shallow neural networks mapping from $\Rdin$ to $\Rdout$.} Throughout this section, we consider the setting of supervised learning with squared loss, i.e., we require \Cref{as:squaredloss} to hold true. We begin by briefly recalling the notion of 'time-$T$-solution', i.e., the trained \RSN s~\unimportant{\RNw[\wt]} obtained by pursuing the gradient flow w.r.t.\ this choice of training loss starting in the origin~\unimportant{\wt[0]=0} in parameter space up to time $T$.
	
	\begin{assumption}\label{as:squaredloss}
	%The loss functions $\ltri:\Y\to\Rpz$ are given by $\ltri(y):=\twonorm[y-\ytr_i]^2$ for every  $i=1,\ldots, N$.
	The loss functional is given as $\Ltr(f):=\sum_{i=1}^N{||\ytr_i-f(\xtr_i)||_2^2}$ for some training data $\xtr_i\in\mathcal{X}=\Rdin, \ytr_i \in \mathcal{Y}=\Rdout, i=1,\ldots,N$.
	\end{assumption}
	
	\begin{definition}[time-$T$ solution]\label{def:GDsolution}
		Let $\allIndi{N}{i}: (\xtr_i, \ytr_i)\in \R^{\din+\dout}$ for some $N,\din\in\mathbb{N}$ and $\RNw$ be a randomized shallow neural network with $n\in\mathbb{N}$ hidden nodes. For any $\omega \in \Omega$ and $T>0$, the time-$T$ solution to the problem
		\begin{equation}\label{eq:GDproblem}
		\min_{w\in\R^{\dout\times n}} \underbrace{\sum_{i=1}^N\twonorm[{\RNwo(\xtr_i)-\ytr_i}]^2}_{\Ltrb{\RNwo}}
		\end{equation}
		is defined as $\RNwo[\wto]$, with weights $\wto\in\R^{\dout\times n}$ obtained by taking the gradient flow
		\begin{align*}
		d\wt[t]&=-\nabla_w \Ltrb{\RNw[{\wt[t]}]}\, dt,\tag{GF}\label{eq:GDflow}\\
		\wt[0]&=0,
		\end{align*}
		corresponding to \meqref{eq:GDproblem} up to time $T$.
	\end{definition}
	
	\begin{remark}\label{rem:discreteGD}
		In practice, the weights $\wt$ of the time-$T$ solution as introduced in \Cref{def:GDsolution} are approximated by taking 
		$\tau:=T/\gamma$ steps of size $\gamma>0$ according to the Euler discretization 
		\begin{align*}\label{eq:GDdiscrete}
		\wth[t+\gamma]&=\wth[t]-\gamma\nabla_w L(\RNw[{\wth[t]}]),\tag{GD}\\
		\wth[0]&=0,
		\end{align*}
		corresponding to \eqref{eq:GDflow}. 
	\end{remark}

	\begin{theorem}\label{thm:GDRidge}
		Let $\RNw[{\wt}]$ be the time-$T$ solution and consider for $\lw=\frac{1}{T}$ the corresponding $\ell_2$-regularized network $\RNR[\frac{1}{T}]$ (cp. \Cref{def:GDsolution,def:ridgeNet}). We then have that
		\begin{equation}\label{eq:thm:GDRidge}
		\fao:\quad \tlim\sobnorm[{\RNRo[\frac{1}{T}]-\RNwo[\wt\omb]}]=0.
		\end{equation}
	\end{theorem}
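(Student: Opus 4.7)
The plan is to exploit that under \Cref{as:squaredloss} (squared loss, identity link) the map $w\mapsto\RNwo[w]$ is \emph{linear} in the trainable last-layer weights, since the hidden-layer parameters are frozen. Consequently, both \eqref{eq:GDproblem} and its $\ell_2$-regularized counterpart reduce to a random finite-dimensional quadratic programme solvable in closed form. Since gradient flow and ridge regression both decouple across the $\dout$ rows of $w\in\R^{\dout\times n}$, and convergence in $\sobnormmulti$ is by definition component-wise convergence in $\sobnorm$, it suffices to prove the statement for $\dout=1$ and then apply the result row by row. For the remainder I fix $\omega\in\Om$ and take $\dout=1$.

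Introduce the (random) feature matrix $\Phi\in\R^{n\times N}$ with entries $\Phi_{k,i}:=\relu[b_k(\omega)+\skprod{v_k(\omega)}{\xtr_i}]$ and let $y\in\R^N$ collect the targets. Then $\Ltr(\RNwo[w])=\|y-\Phi^{\top}w\|_2^2$, so with $A:=\Phi\Phi^{\top}$ the gradient flow $\dot w=2\Phi y-2Aw$ started at $w^0=0$ admits the explicit solution
\begin{equation*}
w^T=A^{+}\bigl(I-e^{-2TA}\bigr)\Phi y,\qquad\text{whereas}\qquad w^{*,1/T}=\bigl(A+\tfrac{1}{T}I\bigr)^{-1}\Phi y.
\end{equation*}
Because $\Phi y\in\mathrm{range}(A)$, diagonalising $A$ with positive eigenvalues $\sigma_1,\dots,\sigma_r>0$ produces the scalar bound
\begin{equation*}
\|w^T-w^{*,1/T}\|_2\;\le\;\|\Phi y\|_2\,\max_{k=1,\dots,r}\bigl|\sigma_k^{-1}(1-e^{-2T\sigma_k})-(\sigma_k+1/T)^{-1}\bigr|\xrightarrow{T\to\infty}0,
\end{equation*}
since for every fixed $\sigma>0$ both scalars converge to $\sigma^{-1}$.

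Finally I translate this parameter-space convergence into \eqref{eq:thm:GDRidge}. For fixed $\omega$ the map $w\mapsto\RNwo[w]$ is Lipschitz from $(\R^n,\|\cdot\|_2)$ into $(\Woi[K],\sobnorm[\cdot])$ with a finite (random) constant depending on $\mathrm{Lip}(\linkinv)$, on the finitely many $(v_k(\omega),b_k(\omega))_{k=1}^n$, and on $K$; combining this Lipschitz estimate with the parameter-space convergence above yields \eqref{eq:thm:GDRidge}. The main subtlety to be careful about is the behaviour on $\ker(A)$: both $w^T$ and $w^{*,1/T}$ vanish there by the explicit formulas (as $\Phi y\perp\ker(A)$), so the apparently divergent value $\sigma^{-1}(1-e^{-2T\sigma})|_{\sigma=0}=2T$ contributes nothing. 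As the argument is $\omega$-wise and the $\dout=1$ case mirrors \cite[Theorem~3.20]{ImplRegPart1V3}, the theorem follows; the case $\dout\ge 2$ is recovered by applying the one-dimensional result row by row.
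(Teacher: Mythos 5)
Your proposal is correct and is essentially the same argument the paper relies on: the paper's ``proof'' simply defers to \cite[Theorem 3.20]{ImplRegPart1V3}, whose underlying mechanism is exactly your closed-form comparison of the gradient-flow solution $A^{+}(I-e^{-2TA})\Phi y$ with the ridge solution $(A+\tfrac{1}{T}I)^{-1}\Phi y$ on the (finitely many, $\omega$-fixed) eigenspaces, followed by the Lipschitz transfer from parameter to $\Woi[K]$-function space and the row-wise reduction for $\dout\ge 2$. Just note that the quadratic structure you use presupposes $\linkinv=\mathrm{id}$ (as in \Cref{ex:Regression}, the intended setting of \Cref{as:squaredloss}), so the reference to $\mathrm{Lip}(\linkinv)$ is superfluous rather than an extra generality.
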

	{\transparent{\meineTranzparenz}
    \begin{proof}
        The proof follows analogously to the one of \cite[Theorem 3.20]{ImplRegPart1V3}.
    \end{proof}
    }
    
    \begin{remark}[Early Stopping] Note that all the results about early stopping from \cite[Subsection 3.2.1 Early Stopping]{ImplRegPart1V3} hold true in case of arbitrary input-dimension~$\din\in\N$. I.e., if we use the equation $\lw=\frac{1}{2(e-1)T}$, then the solution $\RNwo[\wt\omb]$ obtained from an early stopped gradient flow~\eqref{eq:GDflow} without any explicit regularization is usually quite close to the perfectly optimized $\ell_2$-regularized solution $\RNRo$.
    \end{remark}
}
	% #########################################################################################
	% CONCLUSION AND FUTURE WORK
	% #########################################################################################

\section{Conclusion}\label{sec:conclusion}
We are now ready to summarize our main results. 	The notation $\overset{\to}{\approx}$ in equations \eqref{eq:conclusionExplicit} and \eqref{eq:conclusion} below corresponds to a mathematically proven exact limit in the very strong\footnote{Convergence in \sobnorm[\cdot] implies uniform convergence on $K$ or convergence in $\Wkp[K]{1}{p}$. Even stronger Sobolev-convergence, such as convergence w.r.t. $\WkpShort{2}{p}$, cannot be shown since $\RN_w\notin \Wkp[K]{2}{p}$.} Sobolev-Norm~\sobnorm[\cdot] {\color{hellgrau}(in probability in the case of $\scriptstyle \overset{\underset{n\to\infty}{\PP}}{\approx}$)}.
\paragraph{\textbf{Explicit regularization}} By \Cref{thm:ridgeToaIGAM}, a wide ReLU $\ell_2$-regularized \RSN\ (cp. \Cref{def:ridgeNet}) corresponds to an \IGAMs\ in the following sense:
	\begin{equation}\label{eq:conclusionExplicit}
	\RNR[\lw]
	\underset{\text{\Cref{thm:ridgeToaIGAM}}}{\overset{\underset{n\to\infty}{\PP}}{\approx}}\flS
	%\overset{\substack{\frac{g}{\gbar}\to\ 1%\\ \unimportant{\text{\tiny$\scriptstyle \affineHull{\xtr}=\Rdin$}}
	%}
	%}{\approx}\flIGAM
	.
	\end{equation}
 For convex loss functionals, both $\RNR[\lw]$ and $\flS$ are unique (see \Cref{rem:PhilevelSolutionExistence}). In this case, \cref{eq:conclusionExplicit} means that the $\ell_2$-regularized \RSN~$\RNR[\lw]$ converges in probability to the \aIGAMs~$\flS$ w.r.t.\ the Sobolev norm $\sobnormmulti$.
 However, depending on the choice of loss functional $\Ltr$, neither $\RNR[\lw]$ nor $\flS$ must be unique. In that general case, the approximation in \cref{eq:conclusionExplicit} should be more generally interpreted as in \Cref{thm:ridgeToaIGAM}.
 %means that 
 %the solution set of \Cref{eq:ridgeRSN} is a subset of the solution set of \Cref{eq:adaptedIGAM}, i.e., that
 %for each solution $\RNR[\lw]$ of \cref{eq:adaptedIGAM} (with probability tending to one) there exists an element $\flS$ in the solution set of \cref{eq:IGAM} such that $\RNR[\lw]$ is arbitrarily close to $\flS$ w.r.t. the Sovolev norm $\sobnormmulti$. 
\paragraph{\textbf{Implicit regularization}}
 Combining \Cref{thm:ridgeToaIGAM,thm:GDRidge} we can characterize the effect of implicit regularization when minimizing the squared loss \eqref{eq:squaredLoss} over the function class of randomized shallow networks with $n$ hidden nodes. Then, for 
 %a large number of training epochs~$\tau=T/\gamma$ in a standard discretized gradient descent method,
 large training time~$T$ and a large number of neurons~$n$, the obtained network
	\begin{equation}\label{eq:conclusion}
	\RN_{\wt[T]}
	\underset{\text{\Cref{thm:GDRidge}}}{\overset{\substack{T\to\infty\\
	\unimportant{\ltri{(y)}=\twonorm[y-\ytr_i]^2}}}{\approx}}\RNR[\frac{1}{T}]
	\underset{\text{\Cref{thm:ridgeToaIGAM}}}{\overset{\underset{n\to\infty}{\PP}}{\approx}}\fzS
	%\overset{\substack{\frac{g}{\gbar}\to\ 1\\ \unimportant{\text{\tiny$\scriptstyle \affineHull{\xtr}=\Rdin$}}}}{\approx}\flIGAM[0+]
	\end{equation}
	is very close to the \aIGAMs\ $\flS$ with diminishing penalization.%
	%\footnote{The last limit needs the assumption $\affineHull{\xtr}:=\affineHull{\left(\xtr_i\right)_{i\in\fromto{N}}}=\Rdin$ only to assure the uniqueness of \flIGAM[0+]. Without the uniqueness of \flIGAM[0+], there is still a unique solution \fzS. If the assumption $\affineHull{\xtr}=\Rdin$ was dropped, \fzS\ would still converge to a particular version of \flIGAM[0+] (i.e. $\displaystyle \exists \flIGAM[0+] \in \argmin_{f: \Ltrb{f}=0}\PIGAMm(f) : \fzS
	%\overset{\frac{g}{g(0)}\to\ 1}{\approx}\flIGAM[0+] $). This particular minimizer of $\argmin_{f: \Ltrb{f}=0}\PIGAMm(f)$ is characterized by its first derivative in directions orthogonal to $\affineHull{\xtr}$ (which is close to zero, while other minimizers can have arbitrary large first derivative in these directions).
	%}\textsuperscript{,\,}\unskip
	\footnote{\label{footnote:extensionToLefteqConcl}One can extend \cref{eq:conclusion} to the left-hand side to standard discretized gradient descent~\eqref{eq:GDdiscrete} via
	\begin{equation*}
	    {\transparent{\meineTranzparenz}\RN_{\wth[{T,\wth[0]}]}
	\overset{\wth[0]\to 0}{\approx}}\RN_{\wth[T]}
	\overset{\gamma\to0}{\approx}\RN_{\wt[T]}
	\end{equation*} analogously to \cite[eq. (32) and (33)]{ImplRegPart1V3}
	.}
 % For the last approximation in \cref{eq:conclusionExplicit}, we note that %depending on the choice of loss functional $\Ltr$, the (adapted) IGAM must not be unique.
 % %For instance, if $\Ltr$ is the square loss~\eqref{eq:squaredLoss},
 % the classical IGAM~$\flIGAM$ is only unique if and only if the affine hull of the input training points spans the whole input space, i.e., $\affineHull{\xtr}=\R^{\din}$; whereas the \aIGAMs~$\flS$ is always unique for any convex loss $\Ltr$.
 % %In the general case where neither $\flIGAM$ nor $\flS$ are unique, the last approximation in \Cref{eq:conclusionExplicit} means that for each solution $\flS$ to \Cref{eq:adaptedIGAM} there exists an element $\flIGAM$ in the solution set of \Cref{eq:IGAM} such that the \aIGAMs~$\flS$
 % The last approximations means, that there exists an element $\flIGAM$ in the solution set of \Cref{eq:IGAM} such that the \aIGAMs~$\flS$ converges to $\flIGAM$.

	In applications however, both the number of hidden nodes and training steps are finite. Hence, it is particularly interesting to note that in typical settings for \emph{arbitrary} training time $T\in\Rp$ (including early stopping, i.e. $T\ll\infty$) the same relation approximately holds true. We refer to \cite[Sections 3.2.1 and 4]{ImplRegPart1V3} for a discussion of early stopping and finite numbers of nodes.

	\subsection{Empirical illustration} We investigate a randomized shallow neural network trained to approximate the function $f:\R^2\to\R, x\mapsto \twonorm[x]^2$, given $N=64$ noisy data points $(x_i, f(x_i)+\epsilon_i)\in\R^3$, where $x_i$, $i=1,\ldots,N$ are realisations of independent centred Gaussians with covariance matrix $\Sigma=\text{I}_2$, scaled to fit the two-dimensional $[-1,1]$-cube. The i.i.d. noise terms $\epsilon_i$, $i=1,\ldots,N$ are Gaussian as well, with standard deviation $\sigma=0.05$. The RSN was chosen to consist of $n=2^{12}$ hidden nodes with first-layer weights and biases sampled from a Uniform distribution on $[-0.05, 0.05]$.\newline
	\begin{figure}[!h]
		\centering

            \includegraphics[width=.8\linewidth]{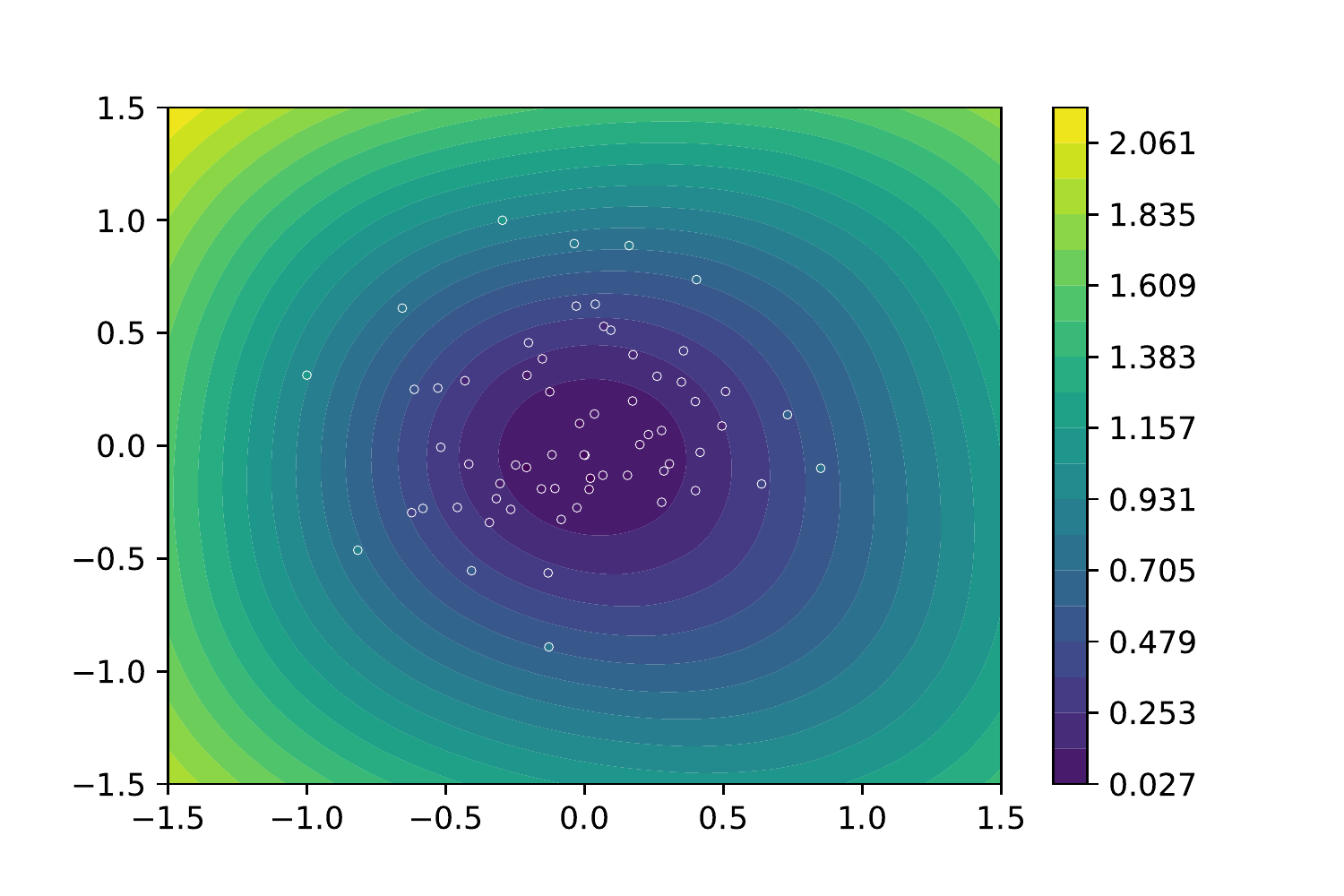}

            \centering
            \includegraphics[width=.8\linewidth]{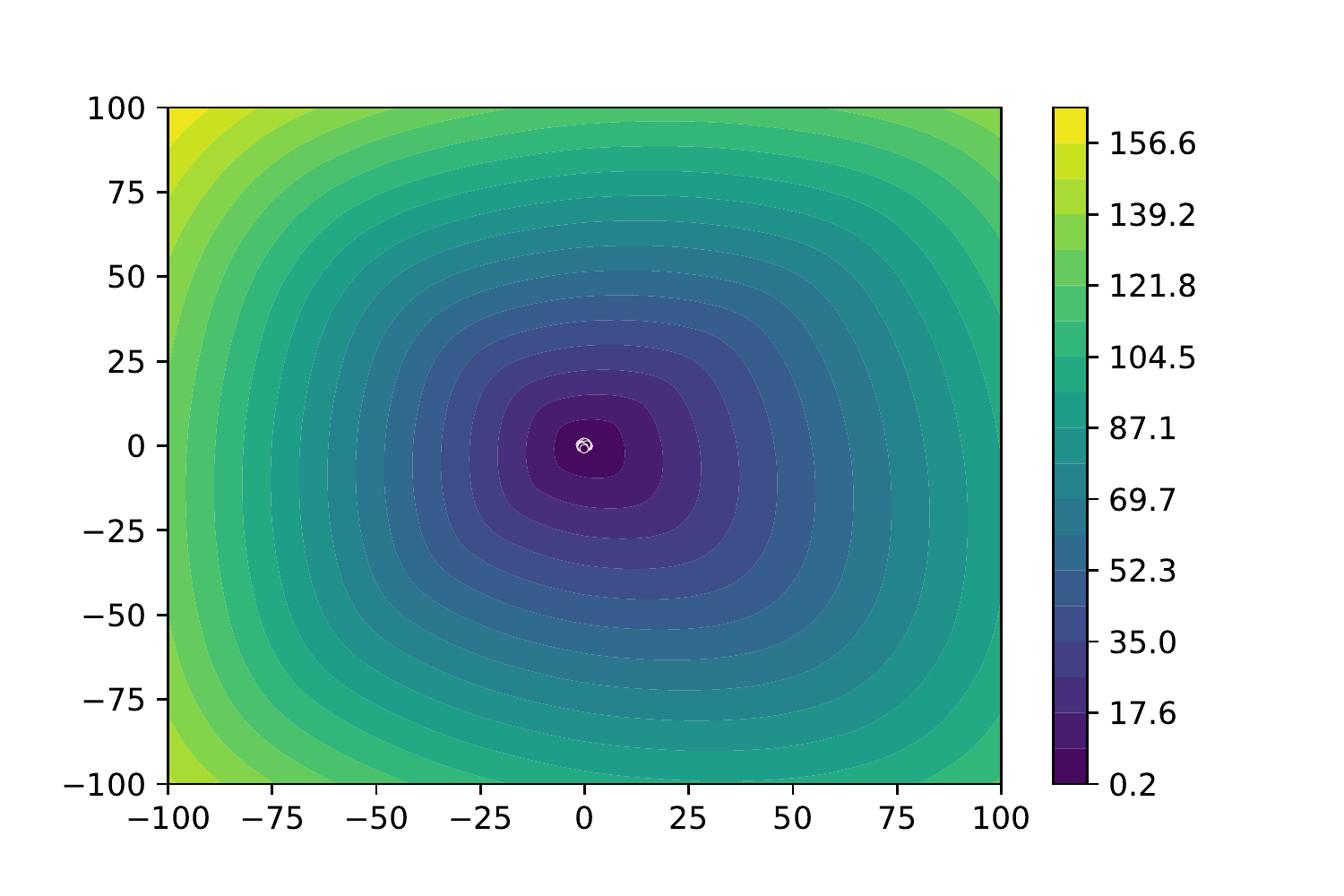}

        \caption{The solution function obtained from performing gradient descent to train an RSN to fit through the data points (white circles) visualized on different scales.}
        \label{fig:RSNSGD}
	\end{figure}
    \Cref{fig:RSNSGD} shows the contours of the solution function obtained by using a standard implementation of gradient descent with step size $\gamma=2^{-15}$ %2^{-9}/N With our Loss this is by a factor N smaller than in Keras
		for $\tau=2^{15}$ epochs. Within this paper's setting, this corresponds to the time-T solution for $T=1$.\newline
	In \Cref{fig:RSNGRAD_zoomed_in,fig:RSNGRAD_zoomed_out} we visualize the the components of gradient $\nabla \RNw[\wt]$ on differing scales. \Cref{fig:RSNGRAD_zoomed_in} shows the gradient's contours close to the data. As these plots show, $\nabla \RNw[\wt](x)\approx\nabla f(x)=2x$ for $x\in[-1,1]^2$, i.e. close to the data, the trained RSN approximates well the unknown function $f$.\newline
	On regions far from the $[-1,1]$-cube, the trained network displays a cone-like shape, with $\nabla \RNw[\wt]\approx x/\twonorm[x]$ (cp. \Cref{fig:RSNGRAD_zoomed_out}). 
\begin{figure}[!h]\hspace{-2.5cm}
\begin{minipage}{0.5\textwidth}
            \includegraphics[width=1.3\linewidth]{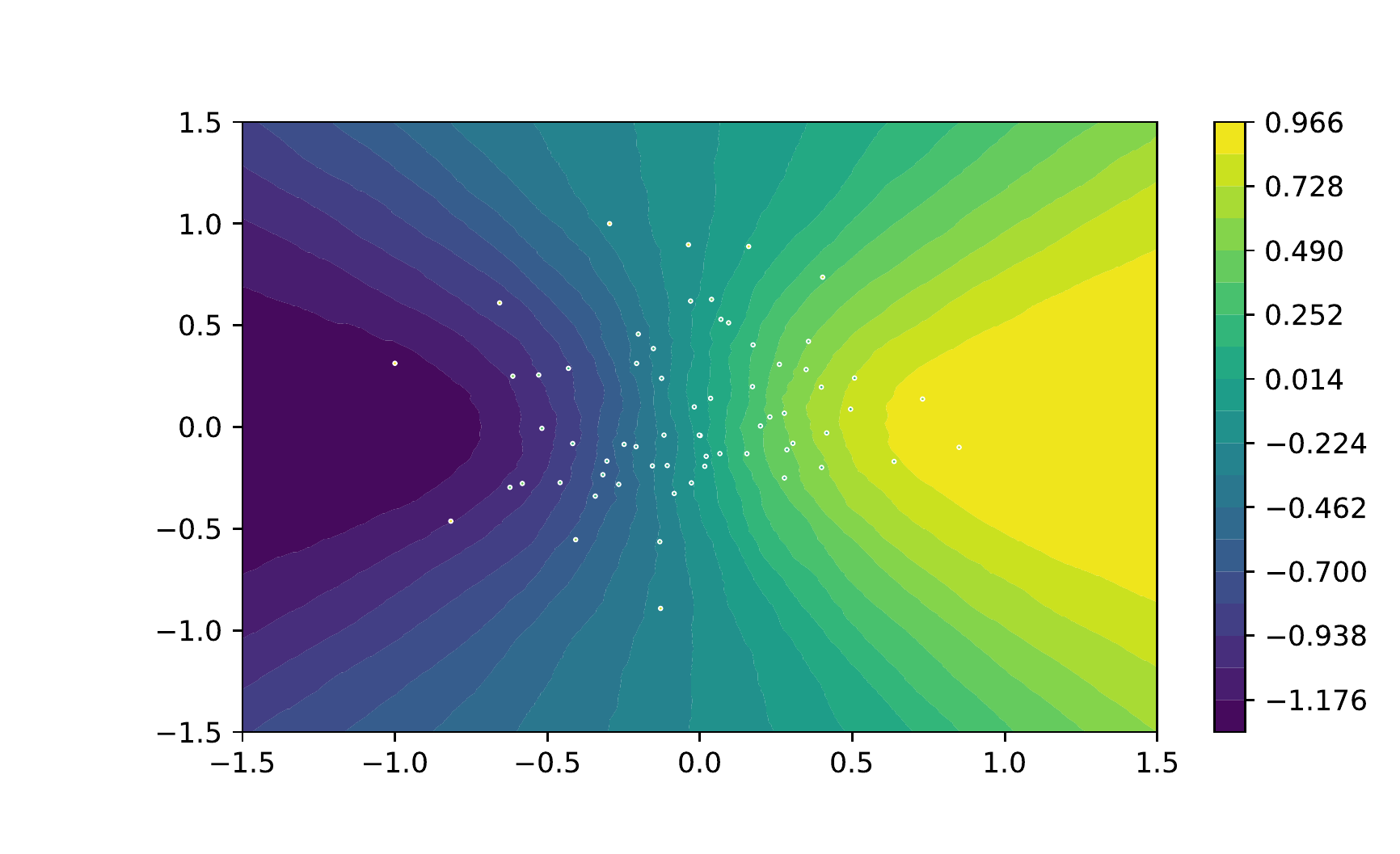}    
\end{minipage}\hspace{1cm}
\begin{minipage}{0.5\textwidth}
            \includegraphics[width=1.3\linewidth]{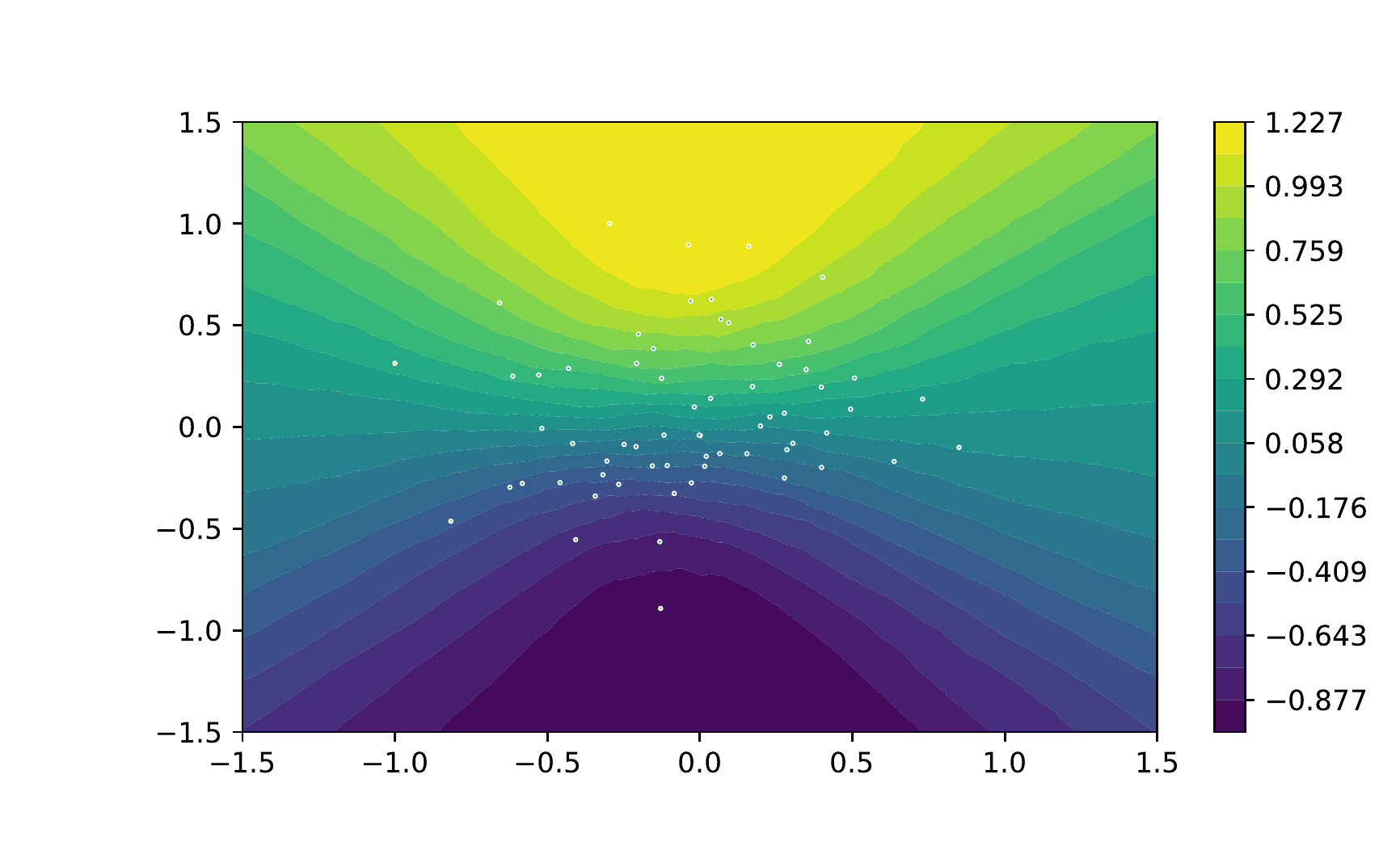}
\end{minipage}

        \caption{The contours of $\frac{\partial}{\partial x_1} \RNw[\wt]$ and $\frac{\partial}{\partial x_2} \RNw[\wt]$ for values $x$ close to the training data (white dots). In particular for $x\in[-1,1]^2$, the contours suggest that $\frac{\partial}{\partial x_i} \RNw[\wt](x)\approx\frac{\partial}{\partial x_i} f(x)=2x_i$, $i=1,2$.}
        \label{fig:RSNGRAD_zoomed_in}
	\end{figure}
	
		\begin{figure}[!h]\hspace{-1.5cm}
\begin{minipage}{0.5\textwidth}
            \includegraphics[width=1.1\linewidth]{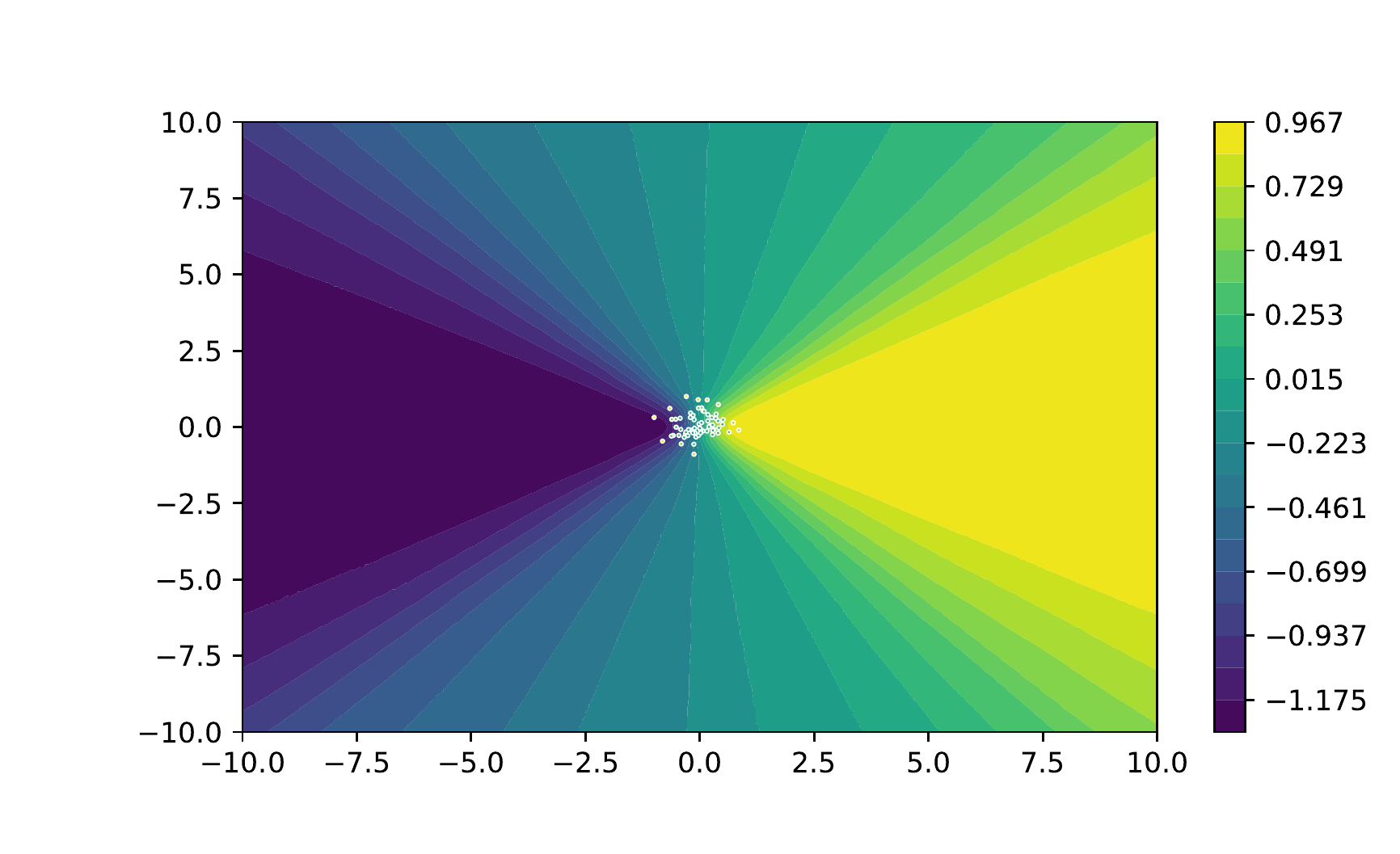}    
\end{minipage}
\begin{minipage}{0.5\textwidth}
            \includegraphics[width=1.1\linewidth]{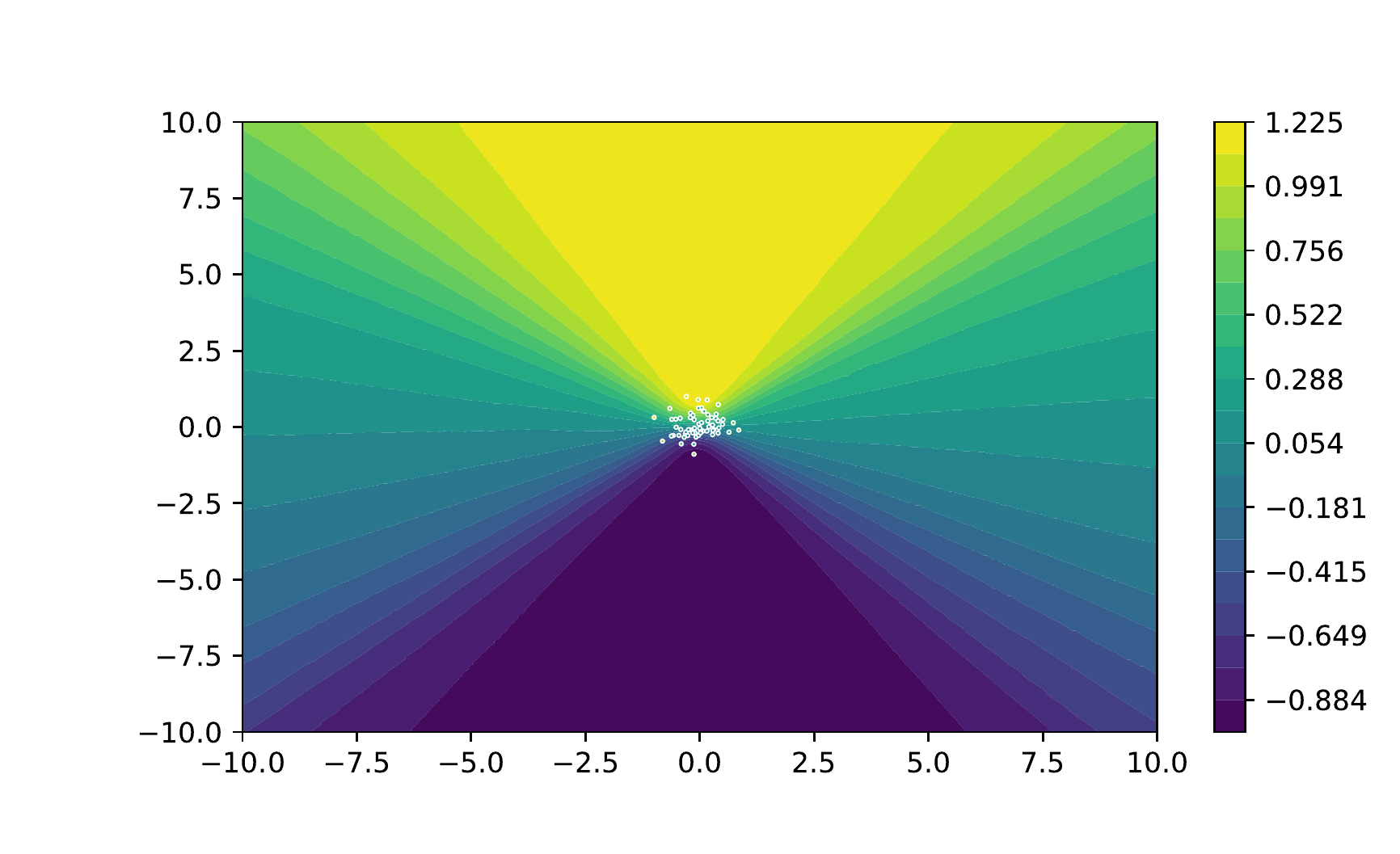}
\end{minipage}

%         \caption{}
%         \label{fig:RSNGRAD}
% 	\end{figure}
	
% 			\begin{figure}[!h]
\hspace{-1.5cm}
\begin{minipage}{0.5\textwidth}
            \includegraphics[width=1.1\linewidth]{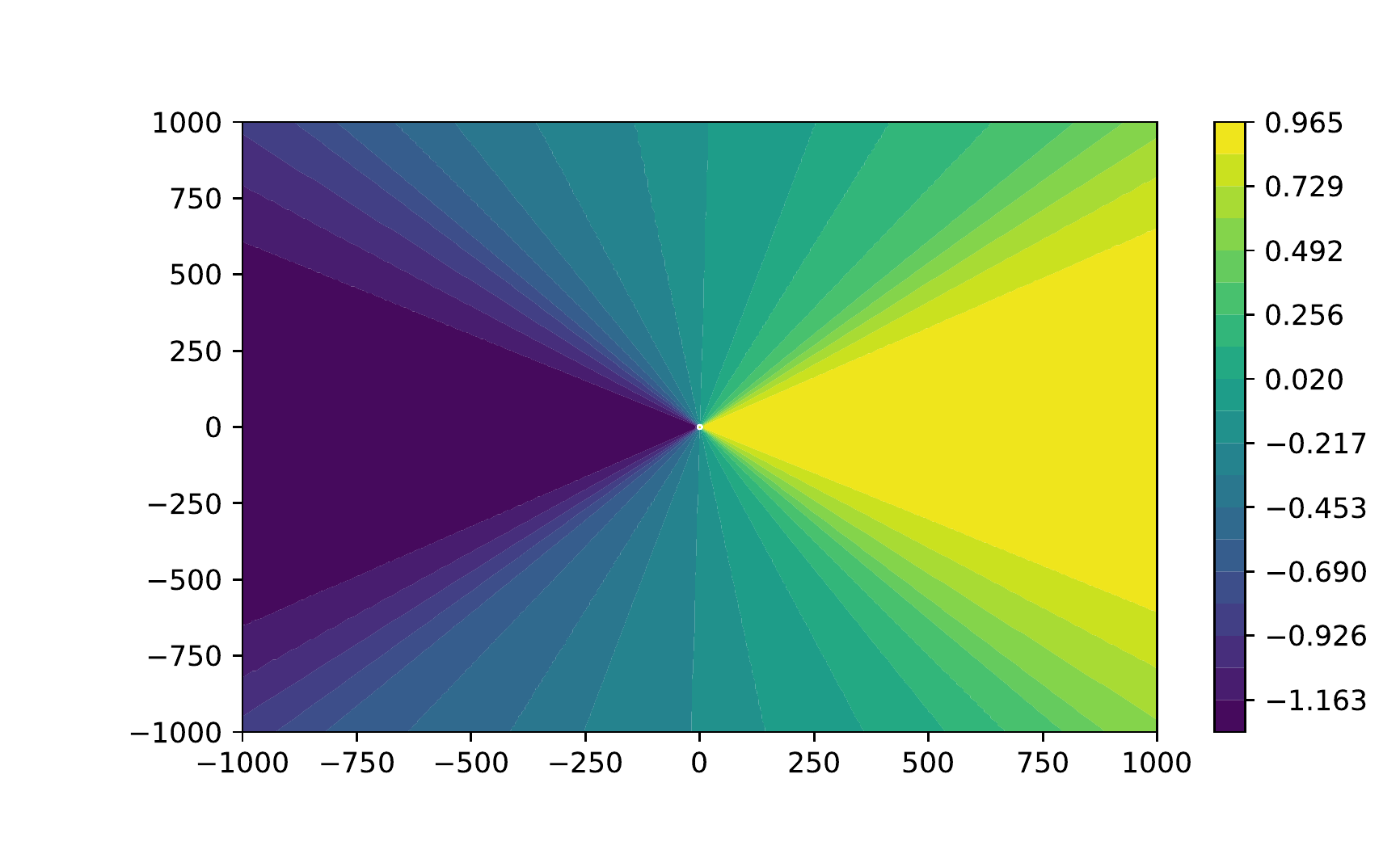}    
\end{minipage}
\begin{minipage}{0.5\textwidth}
            \includegraphics[width=1.1\linewidth]{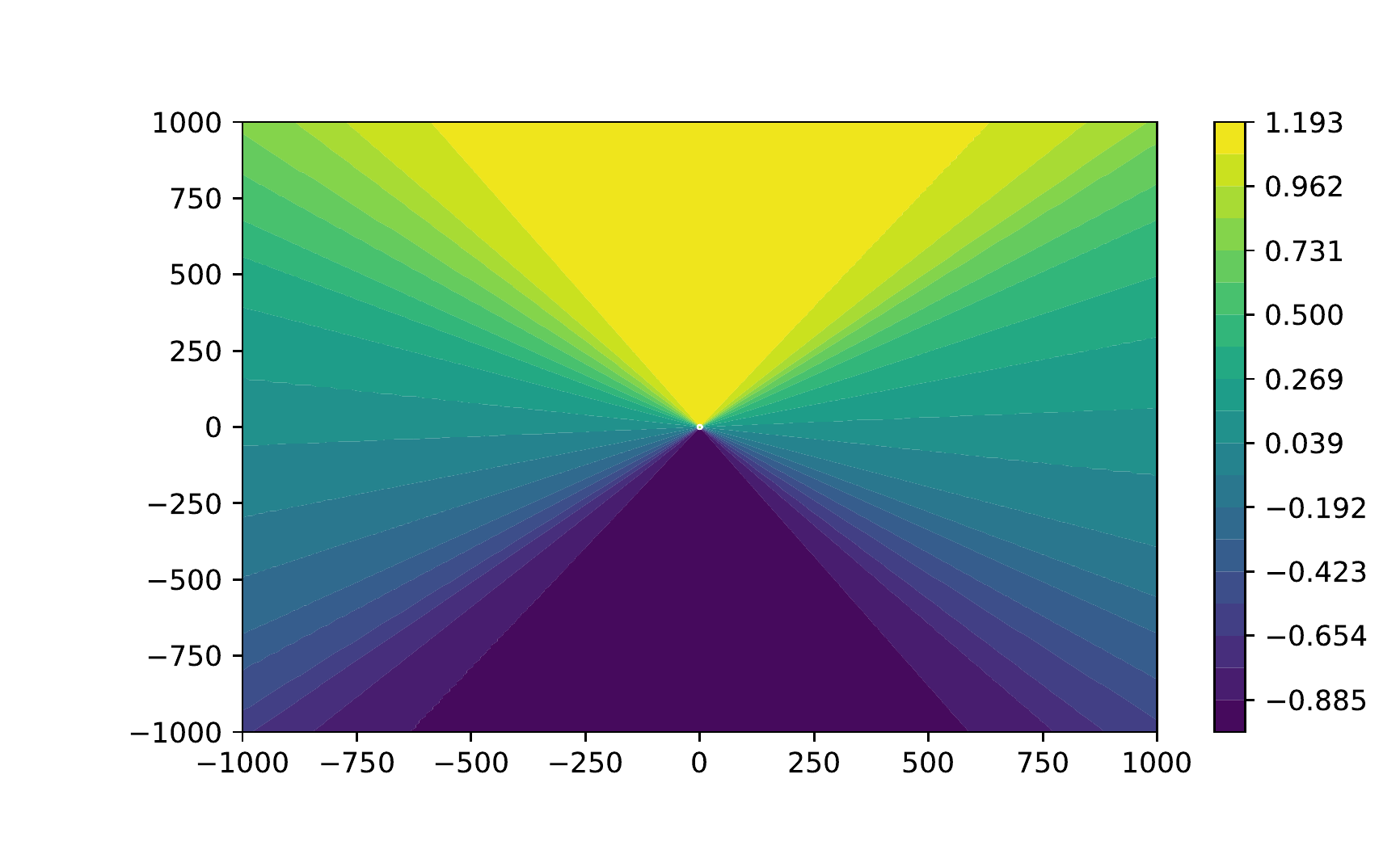}
\end{minipage}

        \caption{The contours of $\frac{\partial}{\partial x_1} \RNw[\wt]$ and $\frac{\partial}{\partial x_2} \RNw[\wt]$ on two different scales. We observe, that far from the training data the trained RSN is akin to a cone.}
        \label{fig:RSNGRAD_zoomed_out}
	\end{figure}

	\subsection{Ongoing Work and Further Extensions}
Within this paper, we presented a first generalization to the work \cite{ImplRegPart1V3} on (implicit) regularization of \RSN s to multidimensional input and output $\X=\Rdin, \Y=\Rdout$. In \cite{implRegPart3V4}, we elaborate on explicit regularization effects in terms of regularization functionals for fully trained, shallow as well as deep neural networks. The long-term objective of our research is to extend this theory to various network architectures. Ideally, this would lead to a thorough understanding of the regularizing constraints (in terms of regularizing functionals) that one implicitly imposes by deciding for the structure, algorithm and meta-parameters of the neural network considered in supervised learning. In other words, our vision is to on a large scale link network architectures to prior beliefs.

\newpage
\bibliographystyle{plainnat}
\bibliography{MeineLiteratur4_general}

\begin{thebibliography}{21}
\providecommand{\natexlab}[1]{#1}
\providecommand{\url}[1]{\texttt{#1}}
\expandafter\ifx\csname urlstyle\endcsname\relax
  \providecommand{\doi}[1]{doi: #1}\else
  \providecommand{\doi}{doi: \begingroup \urlstyle{rm}\Url}\fi

\bibitem[Adams and Fournier(2003)]{Adams:SobolevSpaces1990498}
Robert~A Adams and John J~F Fournier.
\newblock \emph{{Sobolev spaces; 2nd ed.}}
\newblock Pure and applied mathematics. Academic Press, New York, NY, 2003.
\newblock URL \url{http://cds.cern.ch/record/1990498}.

\bibitem[Arendt and Kreuter(2018)]{Arendt_2018}
Wolfgang Arendt and Marcel Kreuter.
\newblock Mapping theorems for sobolev spaces of vector-valued functions.
\newblock \emph{Studia Mathematica}, 240\penalty0 (3):\penalty0 275--299, 2018.
\newblock \doi{10.4064/sm8757-4-2017}.
\newblock URL \url{https://doi.org/10.4064\%2Fsm8757-4-2017}.

\bibitem[Cao et~al.(2018)Cao, Wang, Ming, and Gao]{CAO2018278}
Weipeng Cao, Xizhao Wang, Zhong Ming, and Jinzhu Gao.
\newblock A review on neural networks with random weights.
\newblock \emph{Neurocomputing}, 275:\penalty0 278--287, 2018.
\newblock ISSN 0925-2312.
\newblock \doi{https://doi.org/10.1016/j.neucom.2017.08.040}.
\newblock URL
  \url{https://www.sciencedirect.com/science/article/pii/S0925231217314613}.

\bibitem[Cuchiero et~al.(2020)Cuchiero, Khosrawi, and Teichmann]{Cuchiero_2020}
Christa Cuchiero, Wahid Khosrawi, and Josef Teichmann.
\newblock A generative adversarial network approach to calibration of local
  stochastic volatility models.
\newblock \emph{Risks}, 8\penalty0 (4):\penalty0 101, sep 2020.
\newblock \doi{10.3390/risks8040101}.
\newblock URL \url{https://doi.org/10.3390\%2Frisks8040101}.

\bibitem[Cybenko(1989)]{CybenkoUniversalApprox1989}
G.~Cybenko.
\newblock Approximation by superpositions of a sigmoidal function.
\newblock \emph{Mathematics of Control, Signals and Systems}, 2\penalty0
  (4):\penalty0 303--314, December 1989.
\newblock ISSN 1435-568X.
\newblock \doi{10.1007/BF02551274}.
\newblock URL \url{https://doi.org/10.1007/BF02551274}.

\bibitem[Friedman and Stuetzle(1981)]{friedman1981projection}
Jerome~H Friedman and Werner Stuetzle.
\newblock Projection pursuit regression.
\newblock \emph{Journal of the American statistical Association}, 76\penalty0
  (376):\penalty0 817--823, 1981.
\newblock URL \url{https://doi.org/10.1080/01621459.1981.10477729}.

\bibitem[Guide(2006)]{guide2006infinite}
A~Hitchhiker’s Guide.
\newblock \emph{Infinite dimensional analysis}.
\newblock Springer, 2006.
\newblock URL
  \url{https://link.springer.com/content/pdf/10.1007/3-540-29587-9.pdf}.

\bibitem[Hastie and Tibshirani(1986)]{hastie1986}
Trevor Hastie and Robert Tibshirani.
\newblock Generalized additive models.
\newblock \emph{Statist. Sci.}, 1\penalty0 (3):\penalty0 297--310, 08 1986.
\newblock \doi{10.1214/ss/1177013604}.
\newblock URL \url{https://doi.org/10.1214/ss/1177013604}.

\bibitem[Heiss et~al.(2019)Heiss, Teichmann, and Wutte]{ImplRegPart1V3}
Jakob Heiss, Josef Teichmann, and Hanna Wutte.
\newblock How implicit regularization of neural networks affects the learned
  function - part {I}.
\newblock \emph{CoRR}, abs/1911.02903, 2019.
\newblock URL \url{http://arxiv.org/abs/1911.02903}.

\bibitem[Heiss et~al.(2021)Heiss, Teichmann, and Wutte]{implRegPart3V4}
Jakob Heiss, Josef Teichmann, and Hanna Wutte.
\newblock Infinite wide (finite depth) neural networks benefit from multi-task
  learning unlike shallow gaussian processes - an exact quantitative
  macroscopic characterization.
\newblock \emph{CoRR}, abs/2112.15577, 2021.
\newblock URL \url{https://arxiv.org/abs/2112.15577}.

\bibitem[Heiss et~al.(2022)Heiss, Weissteiner, Wutte, Seuken, and
  Teichmann]{NOMUICML}
Jakob Heiss, Jakob Weissteiner, Hanna Wutte, Sven Seuken, and Josef Teichmann.
\newblock {NOMU}: Neural optimization-based model uncertainty.
\newblock In \emph{Proceedings of the 39th International Conference on Machine
  Learning}, Proceedings of Machine Learning Research. PMLR, 2022.
\newblock URL \url{https://arxiv.org/abs/2102.13640}.

\bibitem[Herrera et~al.(2021)Herrera, Krach, Ruyssen, and
  Teichmann]{herrera2021optimal}
Calypso Herrera, Florian Krach, Pierre Ruyssen, and Josef Teichmann.
\newblock Optimal stopping via randomized neural networks.
\newblock \emph{arXiv preprint arXiv:2104.13669}, 2021.

\bibitem[Hornik(1991)]{HornikUniversalApprox1991251}
Kurt Hornik.
\newblock Approximation capabilities of multilayer feedforward networks.
\newblock \emph{Neural Networks}, 4\penalty0 (2):\penalty0 251 -- 257, 1991.
\newblock ISSN 0893-6080.
\newblock \doi{10.1016/0893-6080(91)90009-T}.
\newblock URL \url{https://doi.org/10.1016/0893-6080(91)90009-T}.

\bibitem[Huang et~al.(2006)Huang, Zhu, and Siew]{HUANG2006489}
Guang-Bin Huang, Qin-Yu Zhu, and Chee-Kheong Siew.
\newblock Extreme learning machine: Theory and applications.
\newblock \emph{Neurocomputing}, 70\penalty0 (1):\penalty0 489--501, 2006.
\newblock ISSN 0925-2312.
\newblock \doi{https://doi.org/10.1016/j.neucom.2005.12.126}.
\newblock URL
  \url{https://www.sciencedirect.com/science/article/pii/S0925231206000385}.
\newblock Neural Networks.

\bibitem[Jacot et~al.(2020)Jacot, Simsek, Spadaro, Hongler, and
  Gabriel]{pmlr-v119-jacot20a}
Arthur Jacot, Berfin Simsek, Francesco Spadaro, Clement Hongler, and Franck
  Gabriel.
\newblock Implicit regularization of random feature models.
\newblock In Hal~Daumé III and Aarti Singh, editors, \emph{Proceedings of the
  37th International Conference on Machine Learning}, volume 119 of
  \emph{Proceedings of Machine Learning Research}, pages 4631--4640. PMLR,
  13--18 Jul 2020.
\newblock URL \url{https://proceedings.mlr.press/v119/jacot20a.html}.

\bibitem[Jin and Montúfar(2020)]{ImplicitBiasRadonTransform}
Hui Jin and Guido Montúfar.
\newblock Implicit bias of gradient descent for mean squared error regression
  with two-layer wide neural networks, 2020.
\newblock URL \url{https://arxiv.org/abs/2006.07356}.

\bibitem[Kreuter(2019)]{kreuter2019vector}
Marcel Kreuter.
\newblock \emph{Vector-valued elliptic boundary value problems on rough
  domains}.
\newblock PhD thesis, Universit{\"a}t Ulm, 2019.

\bibitem[Nelder and Wedderburn(1972)]{10.2307/2344614}
J.~A. Nelder and R.~W.~M. Wedderburn.
\newblock Generalized linear models.
\newblock \emph{Journal of the Royal Statistical Society. Series A (General)},
  135\penalty0 (3):\penalty0 370--384, 1972.
\newblock ISSN 00359238.
\newblock URL \url{http://www.jstor.org/stable/2344614}.

\bibitem[Rahimi and Recht(2007)]{rahimi2007random}
Ali Rahimi and Benjamin Recht.
\newblock Random features for large-scale kernel machines.
\newblock \emph{Advances in neural information processing systems}, 20, 2007.

\bibitem[Weissteiner et~al.(2023)Weissteiner, Heiss, Siems, and
  Seuken]{weissteiner2023BOCA}
Jakob Weissteiner, Jakob Heiss, Julien Siems, and Sven Seuken.
\newblock Bayesian optimization-based combinatorial assignment.
\newblock \emph{Proceedings of the AAAI Conference on Artificial Intelligence},
  37, 2023.
\newblock URL \url{https://arxiv.org/abs/2208.14698}.

\bibitem[Williams et~al.(2019)Williams, Trager, Panozzo, Silva, Zorin, and
  Bruna]{williams2019gradient}
Francis Williams, Matthew Trager, Daniele Panozzo, Claudio Silva, Denis Zorin,
  and Joan Bruna.
\newblock Gradient dynamics of shallow univariate relu networks.
\newblock In \emph{Advances in Neural Information Processing Systems}, pages
  8378--8387, 2019.
\newblock URL
  \url{http://papers.nips.cc/paper/9046-gradient-dynamics-of-shallow-univariate-relu-networks.pdf}.

\end{thebibliography}

	% #########################################################################################
	% PROOFS
	% #########################################################################################
	\appendix
	\section{Proofs}\label{sec:proofs}
	In the following, we rigorously prove the results presented within this paper.
	
	In \Cref{subsec:existenceAndEquivalentProblem}, we discuss regularity properties of solutions to the optimization problems defining the \aGAMs\ and \aIGAMs\ and prove their existence.
	Thereafter in \Cref{sec:proof:RidgeToaIGAM}, we prove this paper's main results \Cref{le:aGAMtoaIGAM} (\Cref{sec:proofle:aGAMtoaIGAM}) and \Cref{thm:ridgeToaIGAM} (\Cref{subsubsec:proof:ridgeToaIGAM}) respectively.  
		\begin{center}
		\boxed{\text{
				Throughout this section, we henceforth require \Cref{as:mainAssumptions,as:truncatedg,as:easyReadable,as:generalloss} to be in place.}}
	\end{center}
	
	\subsection{Existence and Regularity of aGAM and aIGAM}\label{subsec:existenceAndEquivalentProblem}
	We start by showing specific regularity properties of the \aGAMs\ \flSh\ and \aIGAMs\ \flS\ in \Cref{le:0.1NEW}.
	Thereafter, we give an equivalent formulation of the optimization problem defining the \aIGAMs\ \flS\ in \Cref{def:Philevelproblem}.
	We use this equivalent formulation to derive existence (and uniqueness, in case of convex loss $\Ltr$) of minimizers $\varphi^*$ (respectively $\flS$) of \Cref{def:Philevelproblem} (respectively \Cref{def:adaptedIGAM}) in \Cref{rem:PhilevelSolutionExistence}.

		\begin{lemma}\label{le:0.1NEW}
	   Let $\flS$ be an \aIGAMs\ as introduced in \Cref{def:adaptedIGAM}, with link function \link, $\tilde{K}\subset{\R}$ compact, and $\varphi^*$ s.t. \[\link\circ\flS(x)=\int_{\Sd}\psrstar[\skprod{s}{x}]{s}\dx[s], \quad \faxd.\] Then
	    \begin{enumerate}
	        \item\label{item:phistarSecondderivCont} $\frac{\partial^{2}\psrstar[r]{s}}{\partial r^{2}}:\Sd\times\R\to \Rdout$ is continuous.
	        \item\label{item:phistarInL2W2infty}  $\psstar[]\in\LtwoS[{\Wtimultidim[{\tilde{K}}]}]$. 
	        \item\label{item:phistarContOnRdin} Moreover,
	        \begin{equation*}
	        \Sd\to \Wtimultidim, s\mapsto\psrstar[\skprod{s}{\cdot}]{s}
	    \end{equation*} is continuous for any compact $K\subset\Rdin$ s.t. $\bigcup_{s\in\Sd}\Set{\skprod{s}{x}\mid x\in K}\subset\tilde{K}$.%, $s\in\Sd$.
	        \item\label{item:phistarinW12}  $\psstar[]\in \Wkp[{\Sd, \Woi[{\tilde{K}}]}]{1}{2}$ and there exists a constant $C>0$ such that the first (weak) derivative $\partial\varphi^*/\partial_s$ is bounded by $C$ in $\LtwoS[{\Woi[\Kw]}]$.
	    \end{enumerate} 
	    
	    Let $\flSh$ be an \aGAMs\ as introduced in \Cref{def:adaptedGAM}, with link function \link, and $\check{\varphi}^*$ s.t. \[\link\circ\flSh(x)=\sum_{\Shd}\psrcstar[\skprod{\sh}{x}]{\sh}, \quad \faxd.\] Then
	    \begin{enumerate}[resume]
	        \item\label{item:phistarhatSecondderivCont} $\frac{\partial^{2}\psrcstar[r]{\sh}}{\partial r^2}:\R\to \Rdout$ is continuous for every $\sh\in\Shd$.
	        \item\label{item:phistarhatContOnRdin} the map 
	    \begin{equation}\label{eq:discretescontinuous}
	        \Sd\to \Wtimultidim, s\mapsto\psrcstar[\skprod{s}{\cdot}]{\sh}
	    \end{equation} is continuous for any compact $K\subset\Rdin$, $\sh\in\Shd$ s.t. $\bigcup_{\sh\in\Shd}\Set{\skprod{\sh}{x}\mid x\in K}\subset\tilde{K}$.
	    \end{enumerate}
	\end{lemma}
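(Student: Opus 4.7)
My plan exploits that $\varphi^*$ is the unique minimizer of the strictly convex quadratic inner problem defining $\PgSm(\flS)$. Fixing $f=\flS$, one minimizes
\begin{equation*}
    \int_{\Sd}\int_{\supp(\gs)} \frac{\twonorm[{\psr{s}}'']^2}{\gsr{s}}\,dr\,ds
\end{equation*}
over the closed affine subspace of $\T$ cut out by $\link\circ f = \int_{\Sd}\psr[\skprod{s}{\cdot}]{s}\,ds$, so by standard Hilbert-space projection a unique $\varphi^*$ exists and is characterized by the first-order Euler--Lagrange condition with a Lagrange multiplier $\mu$ (a finite Borel measure on $K$, whose total variation is controlled via \Cref{as:generalloss} by $\link\circ\flS$ on $K$). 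After two integrations by parts in $r$ using the boundary data at $r=-\infty$ in $\T$, the optimality condition rearranges into the pointwise identity
\begin{equation*}
    {\psrstar{s}}''(r) = \gsr{s}\cdot H(s,r),
\end{equation*}
where $H(s,r)$ is essentially the two-fold antiderivative in $r$ of the Radon-like transform of $\mu$ along the hyperplane $\{x\in K:\skprod{s}{x}=r\}$.

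Items (1) and (5) then follow at once: after two integrations in $r$, the Radon-like object of a finite measure is jointly continuous in $(s,r)$ on $\Sd\times\tilde{K}$, and $\gsr{s}$ is jointly continuous in $(s,r)$ by \Cref{as:truncatedg}\ref{item:densityIsSmooth} and \ref{item:vkDistrSmooth}, so the product is continuous; for the discrete case (5) the same argument applies direction-wise with no coupling across $\sh\in\Shd$. For items (2), (3), and (6) I would integrate ${\psrstar{s}}''$ twice from $r=-\infty$ using $\lim_{r\to-\infty}\psrstar{s}(r)=\lim_{r\to-\infty}\psrstar{s}'(r)=0$. Since $\supp({\psrstar{s}}'')\subseteq\supp(\gs)\subseteq\tilde{K}$ uniformly in $s$ by \Cref{as:truncatedg}\ref{item:truncatedg}, this yields uniform-in-$s$ bounds on $\psrstar{s}$ and $\psrstar{s}'$ in $L^\infty(\tilde{K})$, hence $\varphi^*\in\LtwoS[{\Wtimultidim[{\tilde{K}}]}]$ since $\Sd$ has finite surface measure. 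The joint continuity in $(s,r)$ transfers through the continuous substitution $(s,x)\mapsto\skprod{s}{x}$ on the compact $K$ (with the chain-rule factor $s$ itself continuous in $s$) to give continuity of $s\mapsto\psrstar[\skprod{s}{\cdot}]{s}\in\Wtimultidim$, yielding (3) and (6).

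The main obstacle is item (4), the $\Wkp[{\Sd,\Woi[{\tilde{K}}]}]{1}{2}$-regularity together with a uniform bound on the first $s$-derivative. The strategy is to differentiate the identity ${\psrstar{s}}''(r)=\gsr{s} H(s,r)$ in the direction variable $s$ on the open sphere: the Radon-like kernel underlying $H$ is smooth in $s$ with first derivative bounded in $L^\infty(\tilde{K})$ by $\|\mu\|_{TV}$ times purely geometric constants depending only on $K$, and $s\mapsto\gsr{s}$ is uniformly Lipschitz in $s$ by \Cref{as:truncatedg}\ref{item:vkDistrSmooth}. Integrating the resulting pointwise bound on $\partial{\psrstar{s}}''/\partial s$ twice in $r$ (using the boundary conditions at $-\infty$) then produces a uniform $\Woi[{\tilde{K}}]$-bound on $\partial\psrstar{s}/\partial s$; integrating over $\Sd$ of finite surface measure yields the required $L^2$-bound. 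Should the explicit derivative computation prove cumbersome (for instance because of atoms of $\mu$ lying on hyperplanes $\skprod{s}{x}=r$), a cleaner substitute is a difference-quotient argument: verify $\|\psrstar[\cdot]{s+h}-\psrstar[\cdot]{s}\|_{\Woi[{\tilde{K}}]}\le C\twonorm[h]$ uniformly in $s\in\Sd$ by re-solving the inner quadratic and invoking Lipschitz stability of the projection, and then conclude via the standard characterization of $W^{1,2}$-regularity through uniform difference-quotient bounds.
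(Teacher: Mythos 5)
The load-bearing step of your plan --- the pointwise identity ${\varphi^*_s}''(r)=g_s(r)\,H(s,r)$ with $H$ a two-fold antiderivative in $r$ of a Radon-type transform of a Lagrange-multiplier measure $\mu$ on $K$ --- is not available under the paper's hypotheses, and this is a genuine gap rather than a technicality. \Cref{as:generalloss} only makes $\Ltr$ continuous and locally Lipschitz with respect to a $W^{1,p}(K,\nu)$-type norm; the loss may be non-convex, non-differentiable, and may depend on $f'$ or on integrals of $f$, so there is no first-order (KKT) condition whose multiplier is representable as a finite Borel measure on $K$: even if a derivative of $\Ltr$ existed, it would live in the dual of $W^{1,p}$, which contains objects such as distributional derivatives of measures, and for those your ``two antiderivatives give joint continuity'' argument breaks down, while the lemma still asserts continuity of ${\varphi^*_s}''$. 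Retreating to the inner problem with $f=\flS$ fixed does not rescue the argument: the Hilbert-space projection only says that ${\varphi^*_s}''/g_s$ is orthogonal to the second derivatives of the kernel of $\varphi\mapsto\int_{\Sd}\varphi_s(\langle s,\cdot\rangle)\,ds$, i.e.\ it lies in the closure of a range of Radon-type images, which gives no pointwise continuity for a general admissible $\link\circ\flS$; the regularity in item (1) genuinely comes from the interplay between penalty and loss at the \emph{outer} level. This is exactly what the paper exploits, with no differentiability of $\Ltr$ at all: it perturbs ${\varphi^*}''$ by $\pm\Delta$ on the set where it deviates from its local mean on a small neighbourhood $U_\delta(s_0,r_0)$, shows via the Lipschitz property of $\Ltr$ that the induced change of the loss is of lower order in $\delta$ than the guaranteed decrease $\sim\Delta^2$ of the quadratic penalty, and contradicts optimality of $\flS$; items (2)--(3) then follow from Poincar\'e-type inequalities and boundedness of the continuous ${\varphi^*}''$ on $\Sd\times\tilde K$, and item (4) from a vector-valued difference-quotient criterion (the cited Theorem 2.5 of Kreuter) rather than from differentiating any explicit formula in $s$.

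Two further concrete problems in your treatment of item (4): the claimed uniform Lipschitz continuity of $s\mapsto g_s(r)$ is not among the assumptions (\Cref{as:truncatedg} provides only uniform continuity of the relevant densities and conditional laws), and the fallback ``re-solve the inner quadratic and invoke Lipschitz stability of the projection'' is not meaningful as stated, since there is no family of per-direction projection problems indexed by $s$ whose data vary with $s$: the inner problem is a single projection whose solution is the whole family $(\varphi^*_s)_{s\in\Sd}$, so stability of that projection says nothing about $\|\varphi^*_{s+h}-\varphi^*_s\|_{\Woi[\tilde K]}$. Your steps for items (2), (3), (5), (6) --- integrating the compactly supported second derivative twice from $-\infty$ and transporting continuity through $(s,x)\mapsto\langle s,x\rangle$ on the compact $K$ --- are sound and essentially coincide with the paper's, but they are conditional on item (1), so the proposal stands or falls with the unjustified multiplier identity.
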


% 	\begin{lemma}\label{le:0.1}
% 	    Let $\psstar[]$ be as in \Cref{def:aIGAMtoaGAM}. Then $\frac{\partial^{2}\psrstar[r]{s}}{\partial r^{2}}:\Sd\times\R\to \Rdout$ is continuous. Furthermore, $\psstar[]:\Sd\to \Wtimultidim$ is continuous for any compact $K\subset\Rdin$. Let $\check{\varphi}^*$ be as in \Cref{def:aGAMtoaIGAM}. Then $\frac{\partial^{2}\psrcstar[r]{\sh}}{\partial r^2}:\R\to \Rdout$ is continuous for every $\sh\in\Shd$. Furthermore, the maps 
% 	    \begin{equation}\label{eq:discretescontinuous}
% 	        \Sd\to \Wtimultidim, s\mapsto\psrcstar[\skprod{s}{\cdot}]{\sh}
% 	    \end{equation} is continuous for any compact $K\subset\Rdin$, $\sh\in\Shd$.
% 	\end{lemma}
	\begin{proof}
	\begin{enumerate}
	    \item 

	We start by showing $\frac{\partial^{2}\psrstar[r]{s}}{\partial r^{2}}\in\C(\Sd\times\R,\Rdout)$ ( i.e., that there exists a continuous representative in the equivalence class of $\frac{\partial^{2}\psrstar[r]{s}}{\partial r^{2}}\in L^2(\Sd\times\R,\Rdout)$). Let $(s_0, r_0)\in \Sd\times\R$ and $\epsilon>0$ be fixed. Furthermore, define
	\[\overline{\varphi^{{*}^{''}}}:=\frac{1}{\mu({U_{\delta}(s_0,r_0)})}\int_{U_{\delta}(s_0,r_0)}\frac{\partial^{2}\psrstar[r]{s}}{\partial r^{2}}\,d\mu(s,r),\]
	for some open environment $U_\delta(s_0,r_0)$, $\delta>0$ around $(s_0, r_0)$. By optimality of $\psrstar[r]{s}$, we have that for some small $\delta>0$
	\begin{equation}\label{eq:mu=0}
	    \mu\left(\underbrace{\left\{(s,r)\in U_{\delta}(s_0,r_0)\bigg|\twonorm[{\psrstar[r]{s}^{''}-\overline{\varphi^{*^{''}}}}]\ge\epsilon/2\right\}}_{=:N}\right)=0.
	\end{equation}
	To see this, assume $\mu(N)>0$. Define for $\Delta:=\epsilon/2>0$
	\begin{align}\label{eq:varphiDelta}
	    \varphi^{\Delta^{''}}(s,r):=\left\{\begin{matrix}
	    \psrstar[r]{s}^{''}+\Delta& (s,r) \in N^-,\\
	    \psrstar[r]{s}^{''}-\Delta&(s,r) \in N^+, \\ \psrstar[r]{s}^{''} & \text{else},\end{matrix}\right.
	\end{align}
	with $N=N^+\cup N^-$ and 
	\begin{align*}
	    N^+:=\left\{(s,r)\in U_{\delta}(s_0,r_0)\bigg|\psrstar[r]{s}^{''}\ge\overline{\varphi^{*^{''}}}+\Delta \right\},\\
	    N^-:=\left\{(s,r)\in U_{\delta}(s_0,r_0)\bigg|\psrstar[r]{s}^{''}\le\overline{\varphi^{*^{''}}}-\Delta \right\}.
	\end{align*}
	Moreover, 
	\begin{equation}
	    f^\Delta(\cdot) :=\int_{\Sd}\varphi^\Delta(s, \skprod{s}{\cdot})\dx[s].
	\end{equation}
	We remark that 
	\begin{enumerate}
	    \item by \Cref{as:generalloss} there exists $K\subset\Rdin$ compact s.t. \begin{align*}\sobnormmulti[f^\Delta-\flS]< 4\Delta \delta^2|\Sd|\end{align*} implies that $|\Ltr(f^\Delta)-\Ltr(\flS)|<\epsilon_L$. Therefore, we proceed with showing \begin{align*}\sobnormmulti[f^\Delta-\flS]< 4\Delta \delta^2|\Sd|.\end{align*}\\
	    Indeed, for every $s\in\Sd$ we have $\nlim[r\to-\infty]\varphi^{\Delta^{'}}(s,r)=0$ and $\nlim[r\to-\infty]\varphi^{\Delta}(s,r)=0$ since $\varphi^*\in\T$, and hence
	    \begin{align*}
	        \varphi^{\Delta^{'}}(s,r)&=\int_{-\infty}^{r}\varphi^{\Delta^{''}}(s,x)\dx[x]
	        = \psrstar[r]{s}^{'}\mp\Delta| N_s^{\pm }\cap(-\infty, r]|,\\
	        \implies \varphi^\Delta (s,r) &= \int_{-\infty}^{r}\varphi^{\Delta^{'}}(s,y)\dx[y]=\psrstar[r]{s}\mp \Delta\int_{-\infty}^r| N_s^{\pm }\cap(-\infty, y]|\dx[y]\\
	        &=\psrstar[r]{s}\mp \Delta\int_{r_{0}-\delta}^{r\wedge (r_{0}+\delta)}| N_s^{\pm }\cap(-\infty, y]|\dx[y],
	        	    \end{align*}
This implies
\begin{align*}
  \|\varphi^{\Delta^{'}} (s,r)-\psrstar[r]{s}^{'}\|& \le 2\Delta\delta\quad\forall(s,r)\in\Sd\times\R,\\
	        \|\varphi^\Delta (s,r)-\psrstar[r]{s}\|& \le 4\Delta\delta^2\quad\forall(s,r)\in\Sd\times\R,
	        \end{align*}
	        and further we get that first,
  \begin{align*}
	        f^{\Delta}(x)= \int_{\Sd}\psrstar[\skprod{s}{x}]{s}\pm 4\Delta\delta^2\dx[s]= \flS (x)\pm 4\Delta\delta^2|\Sd|\quad \forall x\in\Rdin.
	    \end{align*}
	    with $N_s^{\pm}:=\{r\in\R|(s,r)\in N^{\pm}\}$. %Since by \Cref{as:lossconcat} it follows that $\Ltr$ is Lipschitz continuous with constant $C_L$ we obtain by the above derivations
	    Therefore, $\supnorm[{f^\Delta-\flS}]< 4\Delta \delta^2|\Sd|$.
	   Second,
	   \begin{align*}
	        \|f^{\Delta^{'}}(x)-{\flS}^{'}(x)\|&\le \int_{\Sd}\|\varphi^{\Delta}(s,\skprod{s}{x})^{'}-\psrstar[\skprod{s}{x}]{s}^{'}\|\|s\|\dx[s]\\
	        &=\int_{\Sd}\|\varphi^{\Delta}(s,\skprod{s}{x})^{'}-\psrstar[\skprod{s}{x}]{s}^{'}\|\dx[s]\\
	        &\le 2\Delta \delta|\Sd|\quad \forall x\in\Rdin.
	    \end{align*}
 Therefore, $\left\|{{f^\Delta}^{'}-{\flS}^{'}}\right\|_{L^{\infty}(K)}< 2\Delta \delta|\Sd|$.
	    
	    Since by \Cref{as:generalloss} it holds that $\Ltr$ is Lipschitz continuous with constant $C_L$ w.r.t $\Wkp[K]{1}{p}$ on a level set of $L$, and $f^\Delta$ lies in that level set for $\epsilon$ small enough, we obtain by the above derivations
	    \begin{align*}
	        |\Ltr(f^\Delta)-\Ltr(\flS)|&\le\|f^\Delta-\flS\|_{\Wkp[K]{1}{p}}C_L\\
	        &\le\sobnormmulti[f^\Delta-\flS]C_L\\
	        &\le 4C_L\Delta\delta^2|\Sd|.  
	    \end{align*}
	    Therefore, for $\delta:=\left(\frac{\epsilon_{L}}{4\Delta C_L|\Sd|}\right)^{1/2}$ we have $|\Ltr(f^\Delta)-\Ltr(\flS)|<\epsilon_L$.
	    \item due to the quadratic form of the regularization, there exists $C_\Delta>0$ s.t. $\PgSm(f^\Delta)\le\PgSm(\flS)-\Delta^2C_\Delta$.\\
	    Indeed,
	    \begin{align*}
	        \int_{\Sd}\int_{\R}\frac{\twonorm[{ \varphi^{\Delta^{''}}(s,r)}]^2}{\gsr{s}}\dx[r]\dx[s]=&\int_{\Sd}\int_{\R}\frac{\twonorm[{ \varphi^{\Delta^{''}}(s,r)\mp\Delta\ind_{\N^{\pm}}(s,r)}]^2}{\gsr{s}}\dx[r]\dx[s]\\
	        =&\int_{\Sd}\int_{\R}\frac{\twonorm[{ \psrstar[r]{s}^{''}}]^2}{\gsr{s}}\dx[r]\dx[s]\\
	        &+2\left[ \int_{\Sd}\int_{N_s^-}\frac{\Delta\overbrace{\psrstar[r]{s}^{''}}^{<\bar{\varphi}^{{*}^{''}}-\Delta}}{\gsr{s}}\dx[r]\dx[s]-\int_{\Sd}\int_{N_s^+}\frac{\Delta\overbrace{\psrstar[r]{s}^{''}}^{>\bar{\varphi}^{{*}^{''}}+\Delta}}{\gsr{s}}\dx[r]\dx[s]\right]\\
	        &+\Delta^2\underbrace{\int_{\Sd}\int_{N_s^-\cup N_s^+}\frac{1}{\gsr{s}}\dx[r]\dx[s]}_{=:G}\\
	        <&\int_{\Sd}\int_{\R}\frac{\twonorm[{ \psrstar[r]{s}^{''}}]^2}{\gsr{s}}\dx[r]\dx[s]\\
	        &+2\left[\left(\Delta\bar{\varphi}^{{*}^{''}}-\Delta^2\right)\underbrace{\int_{\Sd}\int_{N_s^-}\frac{1}{\gsr{s}}\dx[r]\dx[s]}_{=:G^-}\right.\\
	        &\qquad\left.-\left(\Delta\bar{\varphi}^{{*}^{''}}-\Delta^2\right)\underbrace{\int_{\Sd}\int_{N_s^+}\frac{1}{\gsr{s}}\dx[r]\dx[s]}_{=:G^+}\right]\\
	        &+\Delta^2 G\\
	        =&-2\Delta^2(G^-+G^+)+\Delta^2 G=-\Delta^2 G,
	    \end{align*}
	    assuming $G^-=G^+$ in the penultimate equation.
	    \end{enumerate}
	    Hence, choosing $\delta<\left(\frac{\Delta C_\Delta}{4 C_L|\Sd|}\right)^{1/2}$ we obtain $\epsilon_L<\Delta^2C_L$ and thus $\FlSmb {f^\Delta}<\FlSmb {\flS}$ contradicting its optimality.
	Thus, by \eqref{eq:mu=0}, for every $(s,r)$ in $U_\delta(s_0, r_0)$, we have by the triangle inequality that $\mu$-a.e.
	\[\twonorm[{\psrstar[r]{s}^{''}-\psrstar[r_0]{s_0}^{''}}]<\epsilon.\]
	Hence the first claim follows.
	
\item	Since $\frac{\partial^{2}\psrstar[r]{s}}{\partial r^{2}}\in\C(\Sd\times\R,\Rdout)$, we have $\supnormon[\frac{\partial^{2}\psrstar[r]{s}}{\partial r^{2}}]{\Sd\times\tilde{K}, \Rdout}<\infty$ for every $\tilde{K}\subset\R$ compact. Thus $\frac{\partial^{2}\psrstar[r]{s}}{\partial r^{2}}\in L^\infty(\Sd\times\tilde{K},\Rdout)\subset\LtwoS[{\Linfty[{\tilde{K},\Rdout}]}]$.
	Since $\varphi^*\in\T$, \Cref{le:poincare} (item \eqref{item:PoincareLiLiProductSpace}) implies that also 
	$\frac{\partial\psrstar[r]{s}}{\partial r}\in L^\infty(\Sd\times\tilde{K},\Rdout)\subset\LtwoS[{\Linfty[{\tilde{K},\Rdout}]}]$.
	Analogously, we get that $\psrstar[r]{s}\in L^\infty(\Sd\times\tilde{K},\Rdout)\subset\LtwoS[{\Linfty[{\tilde{K},\Rdout}]}]$.
	Thus we have $\varphi^*\in L^2\left(\Sd,\Wtimultidim[{\tilde{K},\Rdout}]\right)$.
	In particular, 
	\begin{displaymath}
	    \varphi^*:\Sd\to\Wtimultidim[{\tilde{K},\Rdout}],
	\end{displaymath}
	is continuous.
% 	This implies that $\frac{\partial\psrstar[r]{s}}{\partial r}:\Sd\times\tilde{K}\to\Rdout$ is Lipschitz continuous. In particular, $\supnormon[\frac{\partial\psrstar[r]{s}}{\partial r}]{\Sd\times\tilde{K},\Rdout}<\infty$, and we get Lipschitz continuity of $\psrstar[r]{s}:\Sd\times\tilde{K}\to\Rdout$. Thus we have
% 	\begin{displaymath}
% 	    \varphi^*:\Sd\to\Wtimultidim[{\tilde{K},\Rdout}],
% 	\end{displaymath}
% 	i.e., $\varphi^*\in L^2\left(\Sd,\Wtimultidim[{\tilde{K},\Rdout}]\right)$. 
 \item Since for any compact $K\subset\Rdin$, the map $\Sd\to C^\infty(K, \R):s\mapsto\skprod{s}{\cdot}$ is uniformly continuous, with \eqref{item:phistarInL2W2infty} we get continuity of $\Sd\to \Wtimultidim, s\mapsto\psrstar[\skprod{s}{\cdot}]{s}$. 
	\item To show that $\psstar[]\in \Wkp[{\Sd, \Woi[{\tilde{K}}]}]{1}{2}$, we first note that \[\psstar[]\in \T\subset\LtwoS[{\Wkp[{\tilde{K}}]{2}{2}}]\subset\LtwoS[{\Wkp[{\tilde{K}}]{1}{\infty}}]\] and that the Hilbert space ${\Wkp[{\tilde{K}}]{1}{\infty}}$ has the Radon Nikodym property. Now, by \cite[Thm. 2.5.]{kreuter2019vector} if $\varphi^*$ has $\LtwoS[{\Woi[{\tilde{K}}]}]$-bounded difference quotients, we obtain that $\varphi^*\in\Wkp[{S,\Woi[\tilde{K}]}]{1}{2}$.
	Since by item (2), $\Sd\to\Wkp[\tilde{K}]{1}{\infty}$ is continuous, the difference quotient
	\begin{align*}
	    \frac{\|\varphi^*_{s+he_{i}}-\varphi^*_{s}\|_{\Wkp[\tilde{K}]{1}{\infty}}}{|h|} 
	\end{align*}
	is bounded by a constant $C$ on $\Sd$. Thus we get that
	\begin{align*}
	    \LtwonormonSd[{\varphi^*_{s+he_{i}}-\varphi^*_{s}}]{\Wkp[\tilde{K}]{1}{\infty}}\le|h||\Sd|^{\frac{1}{2}}C.
	\end{align*}
	Thus, we can apply \cite[Thm. 2.5.]{kreuter2019vector} as mentioned.
	Moreover by \cite[Thm. 2.5.]{kreuter2019vector}, the first (weak) derivative $\partial\varphi^*_s/\partial_s$ is bounded by $C$ in $\LtwoS[{\Woi[\Kw]}]$.
		\end{enumerate}
	
	Analogously, it follows that the maps $\frac{\partial^{2}\psrcstar[r]{\sh}}{\partial r^2}:\R\to \Rdout$, $\sh\in\Shd$, as well as
	 for any compact $K\subset\Rdin$ s.t. $\bigcup_{\sh\in\Shd}\Set{\skprod{\sh}{x}\mid x\in K}\subset\tilde{K}$ and $\sh\in\Shd$ the maps \eqref{eq:discretescontinuous} are continuous.
	\end{proof}
	
	We now state an equivalent formulation of the optimization problem of \Cref{def:adaptedIGAM} in \Cref{def:Philevelproblem}.
	
			\begin{definition}\label{def:Philevelproblem}
		Let 
		$\lambda \in \Rp$.
		Then for a given function $g:\R\to\Rpz$ we define
		\begin{equation}\label{eq:PhilevelSolution}
		\phi^*\in \argmin_{\phi\in \T}\underbrace{\Ltrb{f}+\lambda \PgSpmb{\phi} }_{=:F^\phi(\phi)},    
		\end{equation}
		with $\link\circ f=\int_{s\in\Sd}\phi_s(\skprod{s}{\cdot})\dx[s]$, and
		\hypertarget{eq:PgSpmb}{\begin{equation}\label{eq:PgSpGeneralLoss}	\PgSpmb{\phi}:=\gbar\int_{\Sd}\int_{\supp (\gs)} \frac{\twonorm[{{\phi_{s}(r)}^{''} }]^2}{\gsr{s}} \dx[r]\dx[s].
		\end{equation}}

	\end{definition}

\begin{remark}[Equivalence of Optimization problems]\label{rem:PhilevelSolutionEquiv}

	    Note that every solution $\phi^*$ to \meqref{eq:PhilevelSolution}  yields a solution~$\flS:=\int_{s\in\Sd}\phi^*_s(\skprod{s}{\cdot})\dx[s]$ to \meqref{eq:adaptedIGAM}.
	    Conversely, for any solution~$\flS$ to \meqref{eq:adaptedIGAM} there exists a solution~$\phi^*$ to \meqref{eq:PhilevelSolution} such that $\flS:=\int_{s\in\Sd}\phi^*_s(\skprod{s}{\cdot})\dx[s]$.
	    Thus, $\phi^*$ is a solution to \meqref{eq:PhilevelSolution} if and only if  $\flS:=\int_{s\in\Sd}\phi^*_s(\skprod{s}{\cdot})\dx[s]$ is a solution to \meqref{eq:adaptedIGAM}. Moreover, $\min F^\phi=\min\FlSm$.
	    \end{remark}
 We elaborate more on the existence of the minimizers in \cref{eq:PhilevelSolution} (and by \Cref{rem:PhilevelSolutionEquiv} also the existence of \flS) in \Cref{rem:PhilevelSolutionExistence}.
	
		\begin{remark}[Existence and Uniqueness of the \aIGAMs]\label{rem:PhilevelSolutionExistence}
	 	By \Cref{le:0.1NEW}, we know that for any optimal $\varphi^*$ s.t. \[\link\circ\flS(x)=\int_{\Sd}\psrstar[\skprod{s}{x}]{s}\dx[s], \quad \faxd\]
	with \flS\ as in \Cref{def:adaptedIGAM}, we have that $\varphi^*\in \LtwoS[{\Hk[\tilde{K}]{2}}]\cap\Wkp[{\Sd,\Woi[\tilde{K}]}]{1}{2}$. Therefore, it suffices to consider the restricted space \begin{equation}\label{eq:Tw}
	    \tilde{\T}:=\T\cap\Wkp[{\Sd,\Woi[\tilde{K}]}]{1}{2}
	\end{equation} in the definition of \PgSm. By \Cref{rem:PhilevelSolutionEquiv}, it thus also suffices to consider $\tilde{\T}$ in \cref{eq:PhilevelSolution}, i.e., we have
	\[\argmin_{\phi\in\T}F^\phi(\phi)=\argmin_{\phi\in\tilde{\T}}F^\phi(\phi).\]
	We apply \cite[Lemma A.23]{ImplRegPart1V3} to show that 
	\[\argmin_{\phi\in\tilde{\T}}F^\phi(\phi)\neq\emptyset\]
	as follows. We set $\mathcal{X}:=\tilde{\T}$ ($\tilde{\T}$ is a closed subset of the Hilbert space $\LtwoS[{H^2(\tilde{K},\Rdout)}]$, thus complete), $||\cdot||:=||\cdot||_{ L^{2}(\Sd ,{\Hk[\tilde{K},\Rdout]{2}})}$ and $\tau$ the topology induced by $||\cdot||_{ L^{2}(\Sd ,{\Woimultidim[\tilde{K}]})}$. By \Cref{le:PfuncStronglyConvex}, $\PgSpm$ is strongly convex w.r.t. $||\cdot||$, and by \Cref{le:LossContinuous,le:f_continuous_in_phi} $L$ is continuous w.r.t. $\tau.$
	Moreover, let $\lambda>0$ and define $\mathcal{K}:=\{\phi\in\tilde{\T}:\PgSpmb{\phi}\le \frac{L(0)}{\lambda}+\PgSpm(0)\}$. By \Cref{le:compactnessoflevelset}, $\mathcal{K}$ is sequentially compact w.r.t. $\tau$.
	
	In case $L$ is convex, an analogous application of \cite[Lemma A.24]{ImplRegPart1V3} yields that $\argmin_{\phi\in\tilde{\T}}F^\phi(\phi)$ is a singleton.
	\end{remark}

	\subsection{Proof of \Cref{thm:ridgeToaIGAM}}\label{sec:proof:RidgeToaIGAM}
	
	A number of lemmata are required for both the \hyperlink{proof:le:aGAMtoaIGAM}{proof} of \Cref{le:aGAMtoaIGAM} and the \hyperlink{proof:thm:ridgeToaIGAM}{proof} of \Cref{thm:ridgeToaIGAM}. These will be given below.
	We start by defining two specific aIGAM and aGAM that will be needed for the following proofs.

	\begin{definition}[aIGAM2aGAM]\label{def:aIGAMtoaGAM}
		Let for some $\lambda>0$, $\flS$ be an \aIGAM\ as introduced in \Cref{le:aGAMtoaIGAM}, with link function \link,
		\[\gs(\cdot):=g(s,\cdot)~p(s),\, s\in\Sd,\] for some weighting function $g:\Sd\times\R\to\Rpz$ and $\mathcal{U}:=\{\Ush\}_{\sh\in\Shd}$ are disjoint environments such that $\dot\bigcup_{\sh\in\Shd}\Ush=\Sd$. Furthermore, let $\varphi^*$ s.t. \[\link\circ\flS(x)=\int_{\Sd}\psrstar[\skprod{s}{x}]{s}\dx[s], \quad \faxd.\] The \emph{aIGAM2aGAM}~\fcw\ w.r.t. the aIGAM $\flS$ is given by
		\begin{equation}
		\link\circ\fcw(x)=\sum_{\sh\in\Shd}\psrcw[\skprod{\sh}{x}]{\sh},\quad \faxd,
		\end{equation}
		with \[\psrcw[r]{\sh}:=\psrstar[r]{\sh}\, \mUsh[\sh], \quad \forall(\sh,r)\in\Shd\times\R.\]
    Here, $\mu$ denotes the $(d-1)$-dimensional surface measure.
	\end{definition}
	
	\begin{definition}[aGAM2aIGAM]\label{def:aGAMtoaIGAM}
    Let for $n=|\Shd|$ finite directions $\Shd\subset\Sd$, $\flSh$ be an \aGAM\ as introduced in \Cref{le:aGAMtoaIGAM}, with link function \link, \[\ghs(\cdot):=g(\sh, \cdot)\int_{\Ush}p(s)\dx[s],~ \sh\in\Shd,\] for some weighting function $g:\Sd\times\R\to\Rpz$ and $\mathcal{U}:=\{\Ush\}_{\sh\in\Shd}$ are disjoint environments such that $\dot\bigcup_{\sh\in\Shd}\Ush=\Sd$. Furthermore, let $\check{\varphi}^*$ s.t. \[\link\circ\flSh(x)=\sum_{\Shd}\psrcstar[\skprod{\sh}{x}]{\sh}, \quad \faxd.\]
    The \emph{aGAM2aIGAM}~\fw\ w.r.t. the aGAM $\flSh$ is given by
		\begin{equation}
		\link\circ\fw(x)=\int_{s\in\Sd}\psrw[\skprod{s}{x}]{s}\dx[s], \quad \faxd,
		\end{equation}
		with
		\[\psrw[r]{s}:=\sum_{\sh\in\Shd}\frac{\psrcstar[r]{\sh}}{\mUsh}\ind_{\Ush}(s), \quad (s,r)\in\Sd\times\Rdin.\]
	Here, $\mu$ denotes the $(d-1)$-dimensional surface-measure.
	\end{definition}

	\subsubsection{Proof of \Cref{le:aGAMtoaIGAM}}\label{sec:proofle:aGAMtoaIGAM}
	Throughout this subsection~\labelcref{sec:proofle:aGAMtoaIGAM} we will assume $|\mathcal{U}|\overset{n\to\infty}{\longrightarrow}0$.
	\begin{proof}[\hypertarget{proof:le:aGAMtoaIGAM}{Proof of \Cref{le:aGAMtoaIGAM}}]
		The two auxiliary functions \fcw\ and \fw\ defined above in \Cref{def:aIGAMtoaGAM,def:aGAMtoaIGAM} will play an important role in this proof.
% 		\footnote{At the end of the proof we will see that the functions \flSh, \fcw and \fw\ will converge to the same function~\flS\ with respect to the $\Woimultidim$-norm.}
		
		In the end we want to show the convergence of \flSh\ to $\argmin\FlSm$.
		We do so by proving that every \flSh\ gets closer to the corresponding~\fw\, and that \fw$\to\argmin\FlSm$ as $n\to\infty$. The first convergence is shown in \Cref{le:3}.
		The proof of the second convergence~$\fw\to\argmin\FlSm$ needs more steps---first we show the convergence $\FlSmb{\fw}\to \min\FlSm$ (in multiple steps based on \Cref{le:2,le:4}) to further imply with the help of \Cref{le:7} the convergence~$\fw\to\argmin\FlSm$.
		
		Following this strategy, we prove~\Cref{le:aGAMtoaIGAM} step by step:
		
		\begin{enumerate}[step 1, start=0]
			{\transparent{\meineTranzparenz} \item[step -0.5] In the upcoming stepts, we need the auxiliary \Cref{le:poincare,le:0.1NEW}}
			\item\label{itm:0} \Cref{le:0} shows
			\[		\lim_{n\to\infty} \sobnormmulti[{\fcw-\flS}]=0.\]
			\item\label{itm:1} It is directly clear that \[\FlShmb{\flSh}\leq\FlShmb{\fcw},\] because \flSh\ is optimal (see \Cref{def:adaptedGAM}).
			{\transparent{\meineTranzparenz}\item[step 1.5] The auxiliary \Cref{le:LossContinuous} will be needed for \ref{itm:2} and \ref{itm:4}}
			\item\label{itm:2} \Cref{le:2} shows \[ \lim_{n\to\infty}\FlShmb{\fcw}=\FlSmb{\flS}.\]
			%{\transparent{\meineTranzparenz}\item[step 2.5] The auxiliary \Cref{le:wstetig,le:smallw*} will be needed for \ref{itm:3} and \ref{itm:4}}
			\item\label{itm:3} \Cref{le:3} shows \[\lim_{n\to\infty}\sobnormmulti[{\flSh-\fw}]=0.\]
			\item\label{itm:4} \Cref{le:4} shows \[ \lim_{n\to\infty} \left|\FlShmb{\flSh}-\FlSmb{\fw}\right| =0.\]
			\item\label{itm:5} Because of optimality of $\flS$ we have \[\FlSmb{\flS}\leq\FlSmb{\fw}.\]
 			\item\label{itm:6} Combining \labelcref{itm:4,itm:1,itm:2,itm:5} we directly get:%\footnote{\label{footnote:pmEpsNotation}We are using the following notation:
% 				\[a_n\approx b_n\overset{\PP}{\pm}\epsilon_1 \logeq \fae[1]:\forall P_1\in (0,1):\exists n_0\in\N: \forall n\in\N_{>n_0}:\PP[{a_n\in b_n+\left[-\epsilon_1,\epsilon_1\right]}]>P,\]
% 				but a complete formalization of this notation would be quite long. This notation needs to be interpreted depending on the context---e.g.:
% 				\[b_n\overset{\PP}{\pm}\epsilon_1\approx b_n\overset{\PP}{\pm}\epsilon_1\overset{\PP}{\pm}\epsilon_2 \logeq \fae[2]:\forall P_{2}\in (0,1):\exists n_0\in\N: \forall n\in\N_{>n_0}:\PP[{b_n\in c_n+\left[-\epsilon_2,\epsilon_2\right]}]>P_2,\] or sometimes it makes sense to replace \enquote{$\in$} by \enquote{$\subseteq$} in a reasonable way. And in the proofs of some later lemmas $\overset{\PP}{\pm}\epsilon_2$ can have the meaning of $\overset{\underset{\delta,\epsilon_1\to 0}{\PP}}{\pm}\epsilon_2$ instead of $\overset{\underset{n\to 0}{\PP}}{\pm}\epsilon_2$ depending on the context.% or
% 				%$$\sum_{k=1}^n \left(a_k (1\overset{\PP}{\pm}\epsilon_1)\right)\approx\sum_{k=1}^n |a_k| \epsilon_1).$$
% 			}
			\begin{align*}
			\FlSmb{\fw}
			&\overset{\text{\ref{itm:4}}}\approx \texttransparent{\meineTranzparenz}{\FlSmb{\fcw}\pm\epsilon_1\leq}\\
			&\overset{\text{\ref{itm:1}}}\leq\texttransparent{\meineTranzparenz}{\FlSmb{\fcw}\pm\epsilon_1\approx}\\
			&\overset{\text{\ref{itm:2}}}\approx\FlSmb{\flS}\pm\epsilon_1\pm\epsilon_2
			\overset{\text{\ref{itm:5}}}\leq\FlSmb{\fw}\pm\epsilon_1\pm\epsilon_2,
			\end{align*}
			and thus:
			\begin{align*}
			\FlSmb{\fw}
			\overset{\substack{\text{\ref{itm:4}}\\
					\text{\ref{itm:2}}\\ \text{\ref{itm:1}}}}\lessapprox
			\FlSmb{\flS}\pm\epsilon_3 \hphantom{\pm\epsilon_2}
			\overset{\text{\ref{itm:5}}}\leq\FlSmb{\fw}\pm\epsilon_3,\hphantom{\pm\epsilon_2}
			\end{align*}
			which directly implies
			\begin{equation}\label{eq:itm:6}
			\lim_{n\to\infty} \FlSmb{\fw} = \FlSmb{\flS}.
			\end{equation}
			\item\label{itm:7} \Cref{le:7} (and \Cref{le:7convex} in case \Ltr\ is convex) shows
			%\[\lim_{n\to\infty} \sobnormmulti[{\fw-\flS}]=0,\]
			\[ \lim_{n\to\infty}d_{\Woi[K,\Rdout]}\left(\fw,
	\argmin\FlSm\right) =0\]
			if one applies it on the result~\meqref{eq:itm:6} of \ref{itm:6} and using %\cite[Definition 11.42]{guide2006infinite} together with Lipschitz continuity of $\linkinv$
			\Cref{le:f_continuous_in_phi}.
			\item\label{itm:8} Combining \labelcref{itm:3,itm:7} with the triangle inequality directly results in the statement~\meqref{eq:aGAMtoaIGAM} we want show.
		\end{enumerate}
		
	\end{proof}
	
		\begin{lemma}[Poincar\'e-typed inequalities]\label{le:poincare}
		Let $\tilde{K}\subset\R$ be compact and $u:\Sd \to{\Wkp[\tilde{K},\Rdout]{1}{2}}$ s.t. for all $s\in\Sd$ it holds that  $\supp \frac{\partial}{\partial r}u(s)\subset \Kw$\[\lim_{r\to -\infty} u(s)(r)=0=\lim_{r\to -\infty} \frac{\partial}{\partial r}u(s)(r).\]
		\begin{enumerate}
		    \item\label{item:PoincareL2L2} 		Let $u\in\LtwoS[{\Wkp[{\tilde{K}}]{1}{2}}]$. Then, with $u':= \frac{\partial}{\partial r}u\in\LtwoS[{\Ltwo[{\tilde{K}}]{}{}}]$,
		\begin{equation}\label{eq:L2Poincare}
	\LtwonormonSd[u]{\Ltwo[\tilde{K}]}\le |\tilde{K}|\LtwonormonSd[u']{\Ltwo[{\tilde{K}}]}.
		\end{equation}
		\item\label{item:PoincareLiLi}  	Let $u\in\LtwoS[{\Wkp[{\tilde{K}}]{1}{\infty}}]$. Then, with $u':= \frac{\partial}{\partial r}u\in\LtwoS[{\Linfty[{\tilde{K}}]{}{}}]$,
		\begin{equation}\label{eq:WoiL2Poincare}
	\LtwonormonSd[u]{\Linfty[\tilde{K}]}\le |\tilde{K}|\LtwonormonSd[u']{\Linfty[{\tilde{K}}]}.
		\end{equation}
				\item\label{item:PoincareLiL2}  	Let $u\in\LtwoS[{\Wkp[{\tilde{K}}]{1}{\infty}}]$. Then, with $u':= \frac{\partial}{\partial r}u\in\LtwoS[{\Linfty[{\tilde{K}}]{}{}}]$,
		\begin{equation}\label{eq:WoiPoincare}
	\LtwonormonSd[u]{\Linfty[\tilde{K}]}\le \sqrt{|\tilde{K}|}\LtwonormonSd[u']{\Ltwo[{\tilde{K}}]}.
		\end{equation}
		\item\label{item:PoincareLiLiProductSpace}  	Let $u:\Sd\times{\tilde{K}}\to\Rdout$ with (weak) derivative $ u':=\frac{\partial}{\partial r}u\in\Linftymultidim[\Sd\times{\tilde{K}},\Rdout]$. Then
		\begin{equation}\label{eq:LinftyPoincare}
	\norm[u]{\Linfty[\Sd\times\tilde{K}]}\le|\tilde{K}| \norm[u']{\Linfty[{\Sd\times\tilde{K}}]}.
	\end{equation}
		\end{enumerate}

	\end{lemma}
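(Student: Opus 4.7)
The plan is to reduce all four inequalities to a single pointwise-in-$s$ integral representation and then apply either Cauchy--Schwarz or a trivial $L^\infty$ bound, followed (where needed) by integration over $\Sd$. For every fixed $s\in\Sd$, the hypotheses $\supp u'(s,\cdot)\subset\tilde{K}$, $\lim_{r\to-\infty}u(s)(r)=0$, and the fundamental theorem of calculus yield, for every $r\in\tilde{K}$ with $a:=\min\tilde{K}$,
\[
u(s,r)=\int_a^r u'(s,t)\,dt.
\]
From this identity, all four bounds follow by routine 1D calculations, lifted to Bochner norms by Fubini.

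For item~\ref{item:PoincareL2L2}, I apply Cauchy--Schwarz to obtain $|u(s,r)|^2\le(r-a)\int_a^r|u'(s,t)|^2\,dt\le|\tilde{K}|\,\|u'(s,\cdot)\|_{L^2(\tilde{K})}^2$, and integration in $r$ over $\tilde{K}$ gives the pointwise-in-$s$ bound $\|u(s,\cdot)\|_{L^2(\tilde{K})}^2\le|\tilde{K}|^2\,\|u'(s,\cdot)\|_{L^2(\tilde{K})}^2$; integrating in $s$ over $\Sd$ and taking square roots then yields the claim. Item~\ref{item:PoincareLiLi} is even simpler: the trivial estimate $|u(s,r)|\le(r-a)\,\|u'(s,\cdot)\|_{L^\infty(\tilde{K})}\le|\tilde{K}|\,\|u'(s,\cdot)\|_{L^\infty(\tilde{K})}$ gives $\|u(s,\cdot)\|_{L^\infty(\tilde{K})}\le|\tilde{K}|\,\|u'(s,\cdot)\|_{L^\infty(\tilde{K})}$, after which I again integrate in $s$ and take the square root.

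Item~\ref{item:PoincareLiL2} mixes the two previous arguments: Cauchy--Schwarz gives $|u(s,r)|^2\le|\tilde{K}|\,\|u'(s,\cdot)\|_{L^2(\tilde{K})}^2$ for every $r$, so taking supremum in $r$ yields $\|u(s,\cdot)\|_{L^\infty(\tilde{K})}^2\le|\tilde{K}|\,\|u'(s,\cdot)\|_{L^2(\tilde{K})}^2$; integration in $s$ over $\Sd$ produces the $\sqrt{|\tilde{K}|}$ factor. Item~\ref{item:PoincareLiLiProductSpace} is the joint-variable analogue of item~\ref{item:PoincareLiLi}: bounding $|u(s,r)|$ directly by $(r-a)\,\|u'\|_{L^\infty(\Sd\times\tilde{K})}\le|\tilde{K}|\,\|u'\|_{L^\infty(\Sd\times\tilde{K})}$ and taking the joint supremum in $(s,r)$ gives the result.

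There is essentially no substantial obstacle: each bound is a classical Poincaré-type inequality for a function vanishing at one endpoint, applied on the fiber $\{s\}\times\tilde{K}$, then lifted via Fubini. The only point needing a brief verification is the measurability of $s\mapsto\|u(s,\cdot)\|_{L^2(\tilde{K})}$ and $s\mapsto\|u(s,\cdot)\|_{L^\infty(\tilde{K})}$, which is standard for elements of the Bochner spaces $L^2(\Sd,\Wkp[{\tilde{K}}]{1}{2})$ and $L^2(\Sd,\Wkp[{\tilde{K}}]{1}{\infty})$ used here.
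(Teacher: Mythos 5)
Your proposal is correct and follows essentially the same route as the paper's proof: represent $u(s,r)=\int_{(-\infty,r)}u'(s,t)\,dt$ fiberwise using the vanishing condition at $-\infty$ and $\supp u'(s,\cdot)\subset\tilde{K}$, then apply Cauchy--Schwarz (items 1 and 3) or the trivial $L^\infty$ bound (items 2 and 4), and integrate over $\Sd$. The only cosmetic difference is that you phrase the estimate via the endpoint $a=\min\tilde{K}$ while the paper uses the indicator $\ind_{(-\infty,r)\cap\tilde{K}}$; this changes nothing of substance.
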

	\begin{proof}
	\begin{enumerate}
	     \item We have that 
		\begin{align*}
		    	\LtwonormonSd[u]{\Ltwo[\tilde{K}]}^2&= \int_{\Sd}\int_{\tilde{K}}\|u(s,r)\|^2\,dr\,ds\\
		    	&=\int_{\Sd}\int_{\tilde{K}}\|\int_{(-\infty,r)\cap\tilde{K}}u'(s,x)\,dx\|^2\,dr\,ds\\
		    	&=\int_{\Sd}\int_{\tilde{K}}\|\int_{\tilde{K}}u'(s,x)\ind_{(-\infty,r)\cap\tilde{K}}(x)\,dx\|^2\,dr\,ds\\
		    	&\le\int_{\Sd}\int_{\tilde{K}}\norm[u'(s,\cdot)]{\Ltwo[{\tilde{K}}]{}}^2\norm[{\ind_{(-\infty,r)\cap\tilde{K}}}]{\Ltwo[{\tilde{K}}]}^2\,dr\,ds\\
		    	&\le |\tilde{K}|^2\int_{\Sd}\norm[u'(s,\cdot)]{\Ltwo[{\tilde{K}}]{}}^2\,ds\\
		    	&= |\tilde{K}|^2\LtwonormonSd[u']{\Ltwo[{\tilde{K}}]}^2.
		\end{align*}
		
		\item Analogously to the proof of item \eqref{item:PoincareL2L2}, it holds that
		\begin{align*}
		    	\LtwonormonSd[u]{\Linfty[\tilde{K}]}^2&= \int_{\Sd}\|u(s,\cdot)\|_{\Linfty[\tilde{K}]}^2\,ds\\
		    	&=\int_{\Sd}\|\int_{(-\infty,\cdot)\cap\tilde{K}}u'(s,x)\,dx\|_{\Linfty[\tilde{K}]}^2\,ds\\
		    	&=\int_{\Sd}\|\int_{\tilde{K}}u'(s, x)\ind_{(-\infty,\cdot)\cap\tilde{K}}(x)\,dx\|_{\Linfty[\tilde{K}]}^2\,ds\\
		   	&\le\int_{\Sd}\int_{\tilde{K}}\|u'(s, x)\|^2\|\ind_{(-\infty,\cdot)\cap\tilde{K}}(x)\|_{\Linfty[\tilde{K}]}^2\,dx\,ds\\
		    	&\le |\tilde{K}|^2\int_{\Sd}\norm[u'(s,\cdot)]{\Linfty[{\tilde{K}}]{}}^2\,ds\\
		    	&= |\tilde{K}|^2\LtwonormonSd[u']{\Linfty[{\tilde{K}}]}^2.
		\end{align*}
		\item Analogously to the proof of item \eqref{item:PoincareLiLi}, it holds that
		\begin{align*}
		    	\LtwonormonSd[u]{\Linfty[\tilde{K}]}^2&= \int_{\Sd}\|u(s,\cdot)\|_{\Linfty[\tilde{K}]}^2\,ds\\
		    	&=\int_{\Sd}\|\int_{(-\infty,\cdot)\cap\tilde{K}}u'(s,x)\,dx\|_{\Linfty[\tilde{K}]}^2\,ds\\
		    	&=\int_{\Sd}\|\int_{\tilde{K}}u'(s, x)\ind_{(-\infty,\cdot)\cap\tilde{K}}(x)\,dx\|_{\Linfty[\tilde{K}]}^2\,ds\\
		   	&\le\int_{\Sd}\sup_{r\in\Kw}\{\|u'(s, \cdot)\|_{\Ltwo[\Kw]{}}^2\|\ind_{(-\infty,r)\cap\tilde{K}}(\cdot)\|_{\Ltwo[\tilde{K}]}^2\}\\\
		   	&=\int_{\Sd}\|u'(s, \cdot)\|_{\Ltwo[\Kw]{}}^2\,ds \,\sup_{r\in\Kw}\{\|\ind_{(-\infty,r)\cap\tilde{K}}(\cdot)\|_{\Ltwo[\tilde{K}]}^2\}\\
		    	&\le |\tilde{K}|\int_{\Sd}\norm[u'(s,\cdot)]{\Ltwo[{\tilde{K}}]{}}^2\,ds\\
		    	&= |\tilde{K}|\LtwonormonSd[u']{\Ltwo[{\tilde{K}}]}^2.
		\end{align*}
			\item Analogously to the proof of item\eqref{item:PoincareLiLi}, it holds that for every $s\in\Sd$
		\begin{align*}
		    	\norm[u(s,\cdot)]{\Linfty[\tilde{K}]}&=\|\int_{(-\infty,\cdot)\cap\tilde{K}}u'(s,x)\,dx\|_{\Linfty[\tilde{K}]}\\
		    	&=\|\int_{\tilde{K}}u'(s, x)\ind_{(-\infty,\cdot)\cap\tilde{K}}(x)\,dx\|_{\Linfty[\tilde{K}]}\\
		   	&\le\int_{\tilde{K}}\|u'(s, x)\|\|\ind_{(-\infty,\cdot)\cap\tilde{K}}(x)\|_{\Linfty[\tilde{K}]}\,dx\\
		    	&\le |\tilde{K}|\norm[u'(s,\cdot)]{\Linfty[{\tilde{K}}]{}}.
		\end{align*}
		Therefore,
		\begin{align*}
		    \norm[u]{\Linfty[\Sd\times\tilde{K}]}\le|\tilde{K}| \norm[u']{\Linfty[{\Sd\times\tilde{K}}]}.
		\end{align*}
	\end{enumerate}
	\end{proof}

	\begin{lemma}[\ref{itm:0}]\label{le:0}
		For any choice of hyper parameter $\lambda>0$, weighting function $g:\Sd\times\R\to\Rpz$ and $K\subset\Rdin$ compact,  with $\bigcup_{s\in\Sd}\Set{\skprod{s}{x}\mid x\in K}\subset\tilde{K}$, the aIGAM2aGAM $\fcw$ with $n=|\Shd|$ directions converges to the aIGAM $\flS$ w.r.t. $\sobnormmulti$ as $n$ tends to infinity\footnote{Recall that throughout this subsection~\labelcref{sec:proofle:aGAMtoaIGAM} we assume $|\mathcal{U}|\overset{n\to\infty}{\longrightarrow}0$.}, i.e.,
		\begin{displaymath}
		\lim_{n\to\infty} \sobnormmulti[{\fcw-\flS}]=0.
	\end{displaymath}
\end{lemma}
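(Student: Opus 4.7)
The strategy is to show that $\link\circ\fcw$ is simply a Riemann sum approximation of $\link\circ\flS$ with respect to the partition $\mathcal{U} = \{U(\sh)\}_{\sh\in\Shd}$ of $\Sd$, and that this Riemann sum converges uniformly in the Sobolev norm thanks to the regularity of $\varphi^*$ established in \Cref{le:0.1NEW}. Writing the difference explicitly as
\begin{equation*}
\link\circ\fcw(x)-\link\circ\flS(x)=\sum_{\sh\in\Shd}\int_{U(\sh)}\bigl[\psrstar[\skprod{\sh}{x}]{\sh}-\psrstar[\skprod{s}{x}]{s}\bigr]\dx[s],
\end{equation*}
the plan is to bound each integrand in $\Woimultidim[K,\Rdout]$ by exploiting the continuity of the map $s\mapsto\psrstar[\skprod{s}{\cdot}]{s}$ from $\Sd$ to $\Wtimultidim[K,\Rdout]\hookrightarrow\Woimultidim[K,\Rdout]$, which is item \eqref{item:phistarContOnRdin} of \Cref{le:0.1NEW}.

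First I would fix a compact $K\subset\Rdin$ and the associated $\tilde{K}\supset\bigcup_{s\in\Sd}\{\skprod{s}{x}\mid x\in K\}$ from \Cref{def:adaptedIGAM}. Since $\Sd$ is compact, the continuous map $s\mapsto\psrstar[\skprod{s}{\cdot}]{s}$ is uniformly continuous: for every $\varepsilon>0$ there exists $\delta>0$ such that whenever $\twonorm[s-\sh]<\delta$,
\begin{equation*}
\sobnormmulti[\psrstar[\skprod{\sh}{\cdot}]{\sh}-\psrstar[\skprod{s}{\cdot}]{s}]<\frac{\varepsilon}{|\Sd|}.
\end{equation*}
By the assumption $|\mathcal{U}|\to 0$, there exists $n_0$ such that for all $n\geq n_0$ and every $\sh\in\Shd$, $s\in U(\sh)$ satisfy $\twonorm[s-\sh]\leq|\mathcal{U}|<\delta$. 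Then estimating the integral pointwise (in $x\in K$) for both the $L^\infty$ and derivative parts of $\sobnormmulti$ simultaneously yields
\begin{equation*}
\sobnormmulti[{\link\circ\fcw-\link\circ\flS}]\leq\sum_{\sh\in\Shd}\int_{U(\sh)}\sobnormmulti[\psrstar[\skprod{\sh}{\cdot}]{\sh}-\psrstar[\skprod{s}{\cdot}]{s}]\dx[s]\leq\varepsilon,
\end{equation*}
using that $\{U(\sh)\}$ partitions $\Sd$ so the total measure is $|\Sd|$.

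The final step is to transfer the convergence from $\link\circ\fcw-\link\circ\flS$ back to $\fcw-\flS$ via the link function. Lipschitz continuity of $\linkinv$ immediately handles the $\Linfty$-part of $\sobnormmulti$. For the derivative part, I would use the chain rule $(\linkinv\circ h)'=D\linkinv(h)\cdot h'$ together with boundedness of $D\linkinv$ (from Lipschitz continuity) and the $\Linfty$-convergence of $\link\circ\fcw$ to $\link\circ\flS$ on the compact $K$; this gives the desired bound in $\sobnormmulti[\fcw-\flS]$.

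The main obstacle I anticipate is the bookkeeping with the derivative component of $\sobnormmulti$: the chain rule introduces a factor $s$ (from $\nabla_x\psrstar[\skprod{s}{x}]{s}=s\cdot\psrstar[\skprod{s}{x}]{s}'$), but because $\twonorm[s]=1$ on $\Sd$ and the $\Woi$-continuity from \Cref{le:0.1NEW}\eqref{item:phistarContOnRdin} already controls precisely this chain-rule derivative uniformly in $x\in K$, this is only a notational issue rather than a real analytical hurdle. The result then follows directly.
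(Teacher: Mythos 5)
Your proposal is correct and takes essentially the same route as the paper: both write $\link\circ\fcw-\link\circ\flS$ as the Riemann-sum error over the partition $\mathcal{U}$, control it via the continuity of the map $s\mapsto\psrstar[\skprod{s}{\cdot}]{s}$ from \Cref{le:0.1NEW}, and then transfer the conclusion through the Lipschitz-continuous $\linkinv$. The only cosmetic difference is that the paper phrases the approximation as a Bochner integral of a simple function and bounds the error by a difference-quotient constant $C_S$ times $|\mathcal{U}||\Sd|$, whereas you use uniform continuity on the compact sphere; your concluding chain-rule remark about the derivative part under $\linkinv$ is at the same level of detail as the paper's own final step.
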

\begin{proof}
	By \ref{le:0.1NEW} we have that $\psstar[]\in\LtwoS[\Woimultidim]$ and hence by \Cref{def:aIGAMtoaGAM}, $\link\circ\fcw$ is given as the Bochner integral (\cite{guide2006infinite}) of the simple function 
	\begin{align*}
	&\phi_{\Shd}:\Sd\to{\Woimultidim},\\
	&\phi_{\Shd}(s) =\sum_{\Shd}\psrstar[\skprod{\sh}{\cdot}]{\sh}\ind_{\Ush}(s).
	\end{align*}
	Since by \Cref{le:0.1NEW} item \eqref{item:phistarContOnRdin} $s\mapsto\psrstar[\skprod{s}{\cdot}]{s}$ is continuous the difference quotient 
		\begin{align*}
	    \frac{\|\psrstar[\skprod{s}{\cdot}]{s}-\psrstar[\skprod{\sh}{\cdot}]{\sh}\|_{\Wkp[\tilde{K}]{1}{\infty}}}{\|s-\sh\|} 
	\end{align*}
	is bounded by a constant $C_S$ on $\Sd$. We then get 
	\begin{align*}
	    \int_{\Sd}\sobnormmulti[{\psrstar[{\skprod{s}{\cdot}}]{s}-\phi_{\Shd}(s)}]\dx[s]&=\int_{\Sd}\sobnormmulti[{\psrstar[{\skprod{s}{\cdot}}]{s}-\sum_{\Shd}\psrstar[\skprod{\sh}{\cdot}]{\sh}\ind_{\Ush}(s)}]\dx[s]\\
	    &\le\int_{\Sd}\sum_{\Shd}\sobnormmulti[{\psrstar[{\skprod{s}{\cdot}}]{s}-\psrstar[\skprod{\sh}{\cdot}]{\sh}}]\ind_{\Ush}(s)\dx[s]\\
	    &\le\int_{\Sd}\sum_{\Shd}\|s-\sh\|C_S\ind_{\Ush}(s)\dx[s]\\
	    &\le|\mathcal{U}||\Sd|C_S.
	\end{align*}
	And since $\lim_{n\to\infty}|\U|=0$, we obtain
	\[\lim_{n\to\infty}\int_{\Sd}\sobnormmulti[{\psrstar[{\skprod{s}{\cdot}}]{s}-\phi_{\Shd}(s)}]\dx[s]=0.\]
	Therefore, we have 
	\[\link\circ\fcw\substack{{\Woimultidim}\\
	\longrightarrow\\
	n\to\infty}\int_{\Sd}\psrstar[\skprod{s}{\cdot}]{s}\dx[s]=\link\circ\flS.\]
	Thus, by Lipschitz continuity of $\linkinv$, the result follows.
\end{proof}

% \begin{lemma}[$\Ltr(f_n)\to\Ltr (f)$]\label{le:***}
% 	For any data $(\xtr_i, \ytr_i)\in\R^{d+ \dout}$, $i\in\{1,\ldots,N\}$, let $(f_n)_{n\in\N})$ be a sequence of functions that converges point-wise\footnote{If $\nlim\sobnorm[f_n-f]=0$, then $f_n$ converges point-wise to $f$ {\color{hellgrau}(by using Sobolev's embedding theorem \cite{Adams:SobolevSpaces1990498} or by assuming $f_n$ and $f$ to be continuous)}. Hence \Cref{le:***} can be used together with \Cref{le:0} to show $\nlim\Ltr(\fw)=\Ltr(\flS)$ or together with \Cref{le:3} to show $\nlim\Ltrb{\flSh}=\Ltrb{\fw}$.} to a function $f:\Rdin\to\R^\dout$, then the training loss $\Ltr$ (c.p. \cref{eq:Ltr}) of $f_n$ converges to $\Ltrb{f}$ as $n$ tends to infinity, i.e.
% 	\begin{equation}
% 	\lim_{n\to\infty}\Ltr(f_n)=\Ltr(f).
% 	\end{equation}
	
% \end{lemma}

% \begin{proof}
% 	By \Cref{as:lossconcat} and Lipschitz continuity of $\linkinv$, it follows that $\ltri$ is continuous for every $i=1,\ldots,N$ and hence we directly obtain
% 	\begin{align*}
% 	\lim_{n\to\infty}\Ltrb{f_n}&=\lim_{n\to\infty}\sum_{i=1}^N\ltrib{f_n(\xtr_i)}=\sum_{i=1}^N\ltrib{\lim_{n\to\infty} f_n(\xtr_i)}=\sum_{i=1}^N\ltrib{ f(\xtr_i)}=\Ltrb{ f}.
% 	\end{align*}
% \end{proof}

\begin{lemma}[$\Ltr(f_n)\to\Ltr (f)$]\label{le:LossContinuous}
	Let $K\subset\Rdin$ be s.t. \Cref{as:generalloss} is satisfied, and $(f_n)_{n\in\N}$ be a sequence of continuous functions with piece-wise continuous derivatives which converges w.r.t.\ $\Woi[K, \Rdout]$
	%\footnote{
%Hence \Cref{le:***} can be used together with \Cref{le:0} to show $\lim\Ltr(\sRNw)=\Ltr(\flpm)$ or together with \Cref{le:3} to show $\lim\Ltrb{\RNR}=\Ltrb{\fwR}$ since a ReLU-neural network is always continuous and piece-wise continuously differentiable.} 
to a function $f:\Rdin\to\Rdout$ then the training loss $\Ltr$ of $f_n$ converges to $\Ltrb{f}$ as $n$ tends to infinity, i.e.
	\begin{equation}
	\lim\Ltr(f_n)=\Ltr(f).
	\end{equation}
	
\end{lemma}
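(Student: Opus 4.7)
The plan is to derive continuity of $\Ltr$ along $\sobnormmulti$-convergent sequences from \Cref{as:generalloss}, which formulates continuity of $\Ltr$ with respect to the weaker Sobolev norm $\sobnormop[\cdot]$ associated to the finite Borel measure $\nu = \nuc + \nua$. The only real task is to show that $\sobnormmulti$-convergence on the compact set $K$ implies $\sobnormop[\cdot]$-convergence; after that, the conclusion follows immediately from \Cref{as:generalloss}\ref{item:continuousL}.

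First, I would unpack $\sobnormmulti$-convergence as spelled out in \Cref{rem:sobnormoi}: it amounts to simultaneous uniform convergence on $K$ of the $f_n$ and of their left-continuous piecewise continuous derivatives $f_n'$ to $f$ and $f'$ respectively. In particular $f$ is continuous on $K$, $f'$ inherits piecewise continuity, the sequences $(f_n)$ and $(f_n')$ are uniformly bounded on $K$, and $f\in\Linftymultidim$, so that $\Ltr(f)$ is well defined.

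Second, I would exploit the finiteness of $\nu(K)$ to dominate the $\sobnormop[\cdot]$-norm by the sup-norm. For each output component $k\in\{1,\ldots,\dout\}$,
\[\int_K \bigl(|f_n^k - f^k|^p + |(f_n^k)' - (f^k)'|^p\bigr)\,d\nu \le \nu(K)\bigl(\supnormon[{f_n^k - f^k}]{K}^p + \supnormon[{(f_n^k)' - (f^k)'}]{K}^p\bigr),\]
and the right-hand side tends to zero by hypothesis. Summing over the $\dout$ components gives $\sobnormop[{f_n - f}]\to 0$. The only subtlety concerns the atomic part $\nua$, supported on finitely many points $\{x_1,\ldots,x_N\}\subset\X$: since derivatives are evaluated pointwise with a fixed left-continuous convention as in \Cref{rem:sobnormoi}, uniform convergence of $f_n'$ on $K$ implies $f_n'(x_i)\to f'(x_i)$ for every $i$, so the integral against $\nua$ behaves well.

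Third, applying \Cref{as:generalloss}\ref{item:continuousL} to the sequence $f_n\to f$ in $\sobnormop[\cdot]$ yields $\Ltr(f_n)\to\Ltr(f)$. The main obstacle is nothing deep, only the bookkeeping needed to pass from the strong sup-based Sobolev norm $\sobnormmulti$, in which convergence is assumed, to the weaker integral-based Sobolev norm $\sobnormop[\cdot]$, in which continuity of $\Ltr$ is postulated; once this embedding is established, the conclusion is immediate.
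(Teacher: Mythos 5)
Your proposal is correct and follows essentially the same route as the paper's proof: both reduce the claim to the elementary bound $\sobnormop[f_k]\le 2\nu(K)^{\frac{1}{p}}\sobnorm[f_k]$ (i.e., dominating the integral-based Sobolev norm by the sup-based one via finiteness of $\nu(K)$) and then invoke the continuity of $\Ltr$ w.r.t.\ $\sobnormop[\cdot]$ from \Cref{as:generalloss}\ref{item:continuousL}. Your extra remark about the atomic part $\nua$ is harmless bookkeeping that the paper subsumes in the same sup-norm estimate.
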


\begin{proof}
By \Cref{as:generalloss} there exists a finite Borel measure $\nu$ and some $p>1$ s.t. \Ltr\ is continuous w.r.t.\ $\Wkp[K,\nu]{1}{p}$. Since for each component $f_k$ of $f$
\begin{align*}
    \sobnormop[f_k]&=\left(\int_K (f_k)^p\,d\nu\right)^{\frac{1}{p}}+\left(\int_K (f_k^{'})^p\,d\nu\right)^{\frac{1}{p}}\\
    &\le\left(\sup_{x\in K}|f_k(x)|+\sup_{x\in K}|f_k^{'}(x)|\right)\nu(K)^{\frac{1}{p}}\\
    &\le 2\nu(K)^{\frac{1}{p}}\sobnorm[f_k],
\end{align*}
where the last inequality follows from \Cref{rem:sobnormoi}. Therefore, \Ltr\ is continuous w.r.t.\ $\Woi[K, \Rdout]$ as well and the result follows.
\end{proof}

\begin{lemma}[\ref{itm:2}]\label{le:2}
	For any $\lambda>0$
	%and data $(\xtr_i, \ytr_i)\in\R^{d+\dout}$, $i\in\{1,\ldots,N\}$,
	we have
	\begin{equation}\label{eq:statement2}
	\lim_{n\to\infty}\FlShmb{\fcw}=\FlSmb{\flS},
	\end{equation}
	with $g$ and $\check{g}$ as defined in \Cref{le:aGAMtoaIGAM}.
\end{lemma}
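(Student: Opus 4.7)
To prove $\lim_{n\to\infty}\FlShmb{\fcw}=\FlSmb{\flS}$, I would decompose $\FlShmb{\fcw}=\Ltrb{\fcw}+\lambda \PgShm(\fcw)$ and $\FlSmb{\flS}=\Ltrb{\flS}+\lambda \PgSm(\flS)$, and handle the loss and penalty convergences separately. The loss part is immediate: \Cref{le:0} gives $\fcw\to\flS$ in $\sobnormmulti$, and \Cref{le:LossContinuous} transfers this to $\Ltrb{\fcw}\to\Ltrb{\flS}$.

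For the penalty convergence, the key idea is that the scaled profile $\psrcw[r]{\sh}:=\psrstar[r]{\sh}\,\mUsh[\sh]$ of \Cref{def:aIGAMtoaGAM} is a feasible candidate in the minimization defining $\PgShm(\fcw)$: \Cref{le:0.1NEW} (item~\ref{item:phistarInL2W2infty}) places each $\psrstar[\cdot]{\sh}$ in $\Wtimultidim$ with the boundary behavior required by $\Th$, and scaling by the positive constant $\mUsh[\sh]$ preserves these properties, while the sum $\sum_{\sh\in\Shd}\psrcw[\skprod{\sh}{\cdot}]{\sh}$ equals $\link\circ\fcw$ by construction. Substituting yields
\[\PgShm(\fcw)\le\gbarhat\sum_{\sh\in\Shd}\int\frac{\twonorm[{\psrstar[r]{\sh}^{''}}]^2\,\mUsh[\sh]^2}{g(\sh,r)\int_{U(\sh)}p(s)\dx[s]}\dx[r].\]
The right-hand side is a Riemann sum over $\mathcal{U}$: as $|\mathcal{U}|\to 0$, uniform continuity of $p$ forces $\int_{U(\sh)}p(s)\dx[s]/\mUsh[\sh]\to p(\sh)$ uniformly in $\sh$, so the quotient $\mUsh[\sh]^2/\int_{U(\sh)}p(s)\dx[s]$ behaves like $\mUsh[\sh]/p(\sh)$. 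Combining this with $\gbarhat\to\gbar$ and the continuity of $s\mapsto\psrstar[\cdot]{s}$ in $\Wtimultidim$ from \Cref{le:0.1NEW} (item~\ref{item:phistarInL2W2infty}), the Riemann sum converges to $\gbar\int_\Sd\int\twonorm[{\psrstar[r]{s}^{''}}]^2/\gsr{s}\dx[r]\dx[s]=\PgSm(\flS)$, where the last equality uses that $\varphi^*$ is an optimal representative of $\flS$ (\Cref{rem:PhilevelSolutionEquiv}). This produces $\limsup_n \PgShm(\fcw)\le \PgSm(\flS)$.

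For the matching lower bound $\liminf_n \PgShm(\fcw)\ge \PgSm(\flS)$, I would start from a (near-)optimal $\check{\varphi}^*$ for $\PgShm(\fcw)$ and construct the piecewise-in-$s$ continuous candidate $\tilde{\varphi}_s:=\sum_{\sh\in\Shd}\check{\varphi}^*_\sh/\mUsh[\sh]\,\ind_{U(\sh)}(s)$, together with its associated $\tilde{f}$ defined via $\link\circ\tilde{f}=\int_\Sd\tilde{\varphi}_s(\skprod{s}{\cdot})\dx[s]$. Lipschitz control on $\check{\varphi}^*$ (via Poincar\'e-type \Cref{le:poincare} and the level-set compactness underlying \Cref{rem:PhilevelSolutionExistence}) forces $\tilde{f}$ to differ from $\fcw$ by $O(|\mathcal{U}|)$ in $\sobnormmulti$, hence $\tilde{f}\to\flS$. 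A Cauchy--Schwarz estimate $\mUsh[\sh]^2\le\int_{U(\sh)}p(s)\dx[s]\cdot\int_{U(\sh)}1/p(s)\dx[s]$, which becomes asymptotically sharp by continuity of $p$, relates the $\gs$-weighted penalty of $\tilde{\varphi}$ to the $\ghs$-weighted penalty of $\check{\varphi}^*$. Combining this with the optimality inequality $\FlSmb{\flS}\le\FlSmb{\tilde{f}}$, the Lipschitz continuity of $\Ltr$ on level sets, and the already-established limsup closes the loop to deliver the lower bound.

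The main obstacle is precisely this lower bound: one must carefully balance the representation error between $\tilde{f}$ and $\fcw$ against the Cauchy--Schwarz slack, and route the optimality of $\flS$ through the full objective $\FlSm$ rather than through $\PgSm$ alone, since $\PgSm$ is not evidently $\sobnormmulti$-continuous as a functional of $f$ on its own. The upper bound, by contrast, is essentially bookkeeping once the candidate $\psrcw$ is identified.
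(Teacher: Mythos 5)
Your proposal is correct, and its core coincides with the paper's own proof: the loss term is handled exactly as in the paper (\Cref{le:0} plus \Cref{le:LossContinuous}), and the penalty term is handled by plugging the canonical representation $\psrcw{\sh}=\psrstar{\sh}\,\mUsh$ from \Cref{def:aIGAMtoaGAM} into the discrete penalty and recognizing the result as a Riemann sum which, using continuity of $s\mapsto\psstar$ (strictly speaking item \eqref{item:phistarContOnRdin} of \Cref{le:0.1NEW}, not item \eqref{item:phistarInL2W2infty}) and of the weights, together with $\gbarhat\to\gbar$, converges to the integral defining $\PgSm(\flS)$ --- this is precisely the computation in the paper. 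Where you genuinely differ is that you treat this computation as only an upper bound, since $\PgShm$ is defined as a minimum over representations (which need not be unique for finitely many directions, e.g.\ when $\Shd$ contains antipodal pairs), and you therefore add a separate $\liminf$ argument: spread a near-optimal discrete representation of $\fcw$ back over $\Sd$, compare via Cauchy--Schwarz and the optimality of $\flS$. The paper does not do this; it implicitly identifies $\PgShm(\fcw)$ with the value at the canonical candidate, which a priori only yields $\limsup_n\FlShmb{\fcw}\le\FlSmb{\flS}$ unless one additionally argues that this candidate is (asymptotically) optimal within $\Th$. Note, however, that this one-sided bound is all that is used downstream: in \labelcref{itm:6} of the proof of \Cref{le:aGAMtoaIGAM}, $\FlShmb{\fcw}$ enters only through the chain $\FlShmb{\flSh}\le\FlShmb{\fcw}\lessapprox\FlSmb{\flS}$. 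So your extra lower-bound step buys the equality as literally stated in \Cref{le:2}, at the price of the technical work you yourself flag (uniform-in-$n$ control of the near-minimizers and the support/weight mismatch between $\ghs$ and $\gs$, the same issues the paper faces in \Cref{le:3,le:4}), while the paper's shorter argument is what the enclosing proof actually needs.
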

\begin{proof}
	We start by showing the convergence of the regularization terms, i.e.
	\begin{equation}\label{eq:penalty}
	\lim_{n\to\infty}\sum_{\sh\in\Shd}\int_{\supp(\ghs)}\frac{\twonorm[ {\psrcw[r]{\sh}^{''} }]^2}{\ghs(r)} \dx[r]	=\int_{\Sd}\int_{\supp (\gs)} \frac{\twonorm[ {\psrstar{s}}^{''} ]^2}{\gsr{s}} \dx[r]\dx[s]		. 
	\end{equation}
	From \Cref{def:aIGAMtoaGAM} and the definition of $\check{g}$ we get
	\begin{equation*}\sum_{\sh\in\Shd}\int_{\supp(\ghs)}\frac{\twonorm[ { \psrcw[r]{\sh}^{''} }]^2}{\ghs(r)} \dx[r]=\sum_{\sh\in\Shd}\frac{\int_{\supp(\ghs)}\frac{\twonorm[ { \psrstar[r]{\sh}^{''} }]^2}{g(\sh,r)} \dx[r]}{\frac{\int_{\Ush}p(s)\dx[s]}{\mUsh}}\mUsh,
	\end{equation*}
	which is a Riemann sum of the map
	\begin{align*}
	    \sh\in\Sd\mapsto \frac{\int_{\supp(\ghs)}\frac{\twonorm[ { \psrstar[r]{\sh}^{''} }]^2}{g(\sh,r)} \dx[r]}{\frac{\int_{\Ush}p(s)\dx[s]}{\mUsh}}\in\R.
	\end{align*}
	Thus we have proven the convergence of the regularization terms \eqref{eq:penalty}. Together with \Cref{le:0,le:LossContinuous}, \eqref{eq:statement2} follows.
	
\end{proof}

\begin{lemma}[\ref{itm:3}]\label{le:3}
	For any $\lambda>0$ 
	%and data $(\xtr_i, \ytr_i)\in\R^{d+\dout}$, $i\in\{1,\ldots,N\}$, 
	we have
	\begin{equation}\label{eq:statement3}
\lim_{n\to\infty}\sobnormmulti[{\flSh-\fw}]=0,
	\end{equation}
	with $\ghs$ as defined in \Cref{le:aGAMtoaIGAM}.
\end{lemma}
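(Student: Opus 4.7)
The plan is to expand the difference $\link\circ\fw - \link\circ\flSh$ as a sum indexed by $\sh\in\Shd$, estimate each summand pointwise via the fundamental theorem of calculus, and obtain a uniform-in-$n$ bound on the resulting quantity by exploiting the optimality of $\flSh$. From \Cref{def:aGAMtoaIGAM} a direct computation yields, for every $x\in\Rdin$,
\[
(\link\circ\fw)(x)-(\link\circ\flSh)(x)
=\sum_{\sh\in\Shd}\frac{1}{\mUsh}\int_{\Ush}\bigl[\psrcstar[\skprod{s}{x}]{\sh}-\psrcstar[\skprod{\sh}{x}]{\sh}\bigr]\,ds,
\]
since $\sum_{\sh}\psrcstar[\skprod{\sh}{x}]{\sh}=\sum_{\sh}\frac{1}{\mUsh}\int_{\Ush}\psrcstar[\skprod{\sh}{x}]{\sh}\,ds$. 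Writing $h_{\sh}:=\psrcstar[\cdot]{\sh}$ and picking a compact $\tilde K\subset\R$ with $\tilde K\supset\bigcup_{\sh\in\Shd}\{\skprod{\sh}{x}\mid x\in K\}$ large enough to also contain $\supp(\ghs)$ for every $\sh$, the fundamental theorem of calculus together with $|\skprod{s-\sh}{x}|\le|\U|R_{K}$ (where $R_{K}:=\sup_{x\in K}\twonorm[x]$) gives
\[
\supnorm[{\link\circ\fw-\link\circ\flSh}]\;\le\;|\U|\,R_{K}\sum_{\sh\in\Shd}\|h_{\sh}^{'}\|_{L^{\infty}(\tilde K)}.
\]
The single-direction Poincar\'e-type estimate underlying \Cref{le:poincare}\ref{item:PoincareLiL2} (each $h_{\sh}$ satisfies $h_{\sh}(-\infty)=h_{\sh}^{'}(-\infty)=0$ with $\supp h_{\sh}^{''}\subset\tilde K$) yields $\|h_{\sh}^{'}\|_{L^{\infty}(\tilde K)}\le\sqrt{|\tilde K|}\,\|h_{\sh}^{''}\|_{L^{2}(\tilde K)}$, so it suffices to bound $\sum_{\sh}\|h_{\sh}^{''}\|_{L^{2}(\tilde K)}$ uniformly in $n$.

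This uniform bound is where the main obstacle lies. The naive inequality $\ghs\le\|g\|_{\infty}\|p\|_{\infty}\max_{\sh}\mUsh$, combined with the Cauchy--Schwarz step $\sum_{\sh}\sqrt{a_{\sh}}\le\sqrt{n}\sqrt{\sum a_{\sh}}$, produces an estimate that grows with $n$ and is not sufficient. Instead, I exploit the scaling $\ghs(r)=g(\sh,r)\int_{\Ush}p(s)\,ds\le\|g\|_{\infty}\,b_{\sh}$ with $b_{\sh}:=\int_{\Ush}p(s)\,ds$, which obeys $\sum_{\sh}b_{\sh}=\int_{\Sd}p(s)\,ds=1$. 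By optimality of $\flSh$ we have $\FlShmb{\flSh}\le\FlShmb{0}=\Ltr(0)$, hence
\[
\sum_{\sh\in\Shd}\frac{\|h_{\sh}^{''}\|_{L^{2}(\tilde K)}^{2}}{b_{\sh}}\;\le\;\|g\|_{\infty}\sum_{\sh}\int_{\supp(\ghs)}\frac{\twonorm[h_{\sh}^{''}(r)]^{2}}{\ghs(r)}\,dr\;\le\;\frac{\|g\|_{\infty}\Ltr(0)}{\lambda\,\gbarhat}=:C.
\]
A Cauchy--Schwarz step with the weights $b_{\sh}$ then delivers $\sum_{\sh}\|h_{\sh}^{''}\|_{L^{2}(\tilde K)}\le\sqrt{C\sum_{\sh}b_{\sh}}=\sqrt{C}$ uniformly in $n$. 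Combined with the Poincar\'e step, this forces $\supnorm[{\link\circ\fw-\link\circ\flSh}]\to 0$ as $|\U|\to 0$.

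For the gradient, I differentiate the displayed identity in $x$ and split each summand as $s\,h_{\sh}^{'}(\skprod{s}{x})-\sh\,h_{\sh}^{'}(\skprod{\sh}{x})=(s-\sh)h_{\sh}^{'}(\skprod{s}{x})+\sh\bigl(h_{\sh}^{'}(\skprod{s}{x})-h_{\sh}^{'}(\skprod{\sh}{x})\bigr)$. The first piece is controlled by $|\U|\sum_{\sh}\|h_{\sh}^{'}\|_{L^{\infty}(\tilde K)}$ and the second by $\sqrt{|\U|R_{K}}\sum_{\sh}\|h_{\sh}^{''}\|_{L^{2}(\tilde K)}$ via a Cauchy--Schwarz estimate on the one-dimensional integral $h_{\sh}^{'}(\skprod{s}{x})-h_{\sh}^{'}(\skprod{\sh}{x})=\int_{\skprod{\sh}{x}}^{\skprod{s}{x}}h_{\sh}^{''}(t)\,dt$. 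Both vanish as $|\U|\to 0$ by the uniform bound above, so $\|\link\circ\fw-\link\circ\flSh\|_{\Woimultidim}\to 0$. Finally, Lipschitz continuity of $\linkinv$ (together with the chain rule applied on the uniformly bounded range of $\link\circ\fw,\link\circ\flSh$) transfers this to $\sobnormmulti[\fw-\flSh]\to 0$, proving \cref{eq:statement3}. The decisive step is the Cauchy--Schwarz trick with weights $b_{\sh}$ that turns the $\ghs$-weighted energy bound into the right uniform estimate.
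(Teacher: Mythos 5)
Your proposal is correct, and it reaches the conclusion by a genuinely different route at the one step where the argument is delicate. You use the same starting identity as the paper, namely writing $\link\circ\fw-\link\circ\flSh=\sum_{\sh\in\Shd}\frac{1}{\mUsh}\int_{\Ush}\bigl[\psrcstar[\skprod{s}{\cdot}]{\sh}-\psrcstar[\skprod{\sh}{\cdot}]{\sh}\bigr]\dx[s]$, and you close the argument in the same way, via Lipschitz continuity of $\linkinv$ (with the same implicit handling of the derivative through the chain rule that the paper also uses). The difference is how the per-direction increments are summed over the $n$ directions. The paper's proof bounds each increment in $\Woimultidim$ by $\|s-\sh\|\,C_S$, where $C_S$ is a Lipschitz-type constant for $s\mapsto\psrcstar[\skprod{s}{\cdot}]{\sh}$ obtained from the continuity statement in \Cref{le:0.1NEW}, and then sums; this is short but leans on the existence and implicit uniformity (in $n$ and over the $n$ directions) of that constant, since $\check{\varphi}^*$ itself changes with $n$. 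You instead extract a quantitative, $n$-uniform bound directly from optimality of $\flSh$: since $\FlShmb{\flSh}\le\Ltrb{0}$, the $\ghs$-weighted energy $\sum_{\sh}\int\twonorm[(\psrcstar[\cdot]{\sh})^{''}]^2/\ghs$ is bounded by $\Ltrb{0}/(\lambda\gbarhat)$, and the weighted Cauchy--Schwarz step with weights $b_{\sh}=\int_{\Ush}p\,ds$ (which sum to one) converts this into a bound on $\sum_{\sh}\|(\psrcstar[\cdot]{\sh})^{''}\|_{L^2}$ that does not grow with $n$; FTC/Poincar\'e then control both the zeroth- and first-order increments with explicit rates in $|\mathcal{U}|$. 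What your approach buys is robustness and an explicit rate: it avoids relying on a Lipschitz constant for the $n$-dependent optimal decomposition and makes transparent why the sum over $n$ directions does not blow up, at the mild extra cost of assuming $g$ bounded and $\gbarhat$ bounded away from zero for large $n$ (both implicit in the paper's standing assumptions, and the latter follows from \Cref{as:easyReadable} since $\gbarhat\to\gbar\neq 0$). The paper's approach is shorter and reuses the regularity lemma it has already established.
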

\begin{proof}
	By \Cref{def:aGAMtoaIGAM}, we have on the one hand
	\begin{equation*}
	    \link\circ\fw=\int_{s\in\Sd}\sum_{\sh\in\Shd}\psrcstar[{\skprod{s}{\cdot}}]{\sh}\frac{{\ind_{\Ush(s)}}}{\mUsh}\dx[s]=\sum_{\sh\in\Shd}\frac{1}{\mUsh}\int_{s\in\Ush}\psrcstar[\skprod{s}{\cdot}]{\sh}\dx[s]
	    \end{equation*}
	On the other,
	\begin{equation*}
	    \link\circ\flSh=\sum_{\sh\in\Shd}\psrcstar[\skprod{\sh}{\cdot}]{\sh}=\sum_{\sh\in\Shd}\frac{1}{\mUsh}\int_{s\in\Ush}\psrcstar[\skprod{\sh}{\cdot}]{\sh}\dx[s].
	\end{equation*}
	Because the map $\Sd\to\Woimultidim: s\mapsto\psrcstar[\skprod{s}{\cdot}]{\sh}$ (see \Cref{le:0.1NEW} \cref{item:phistarhatContOnRdin}) is continuous,
	 the difference quotient 
		\begin{align*}
	    \frac{\|\psrstar[\skprod{s}{\cdot}]{s}-\psrstar[\skprod{\sh}{\cdot}]{\sh}\|_{\Wkp[\tilde{K}]{1}{\infty}}}{\|s-\sh\|} 
	\end{align*}
	is bounded by a constant $C_S$ on $\Sd$. We then get 
	\begin{align*}
	   &\sobnormmulti[{\sum_{\sh\in\Shd}\frac{1}{\mUsh}\int_{s\in\Ush}\psrcstar[\skprod{\sh}{\cdot}]{\sh}-\psrcstar[\skprod{s}{\cdot}]{\sh}\dx[s]}]\\
	   &\le{\sum_{\sh\in\Shd}\frac{1}{\mUsh}\int_{s\in\Ush}\sobnormmulti[{\psrcstar[\skprod{\sh}{\cdot}]{\sh}-\psrcstar[\skprod{s}{\cdot}]{\sh}}]\dx[s]}\\
	    &\le{\sum_{\sh\in\Shd}\frac{1}{\mUsh}\int_{s\in\Ush}\|s-\sh\|C_S\dx[s]}\\
	    &\le|\mathcal{U}||\Sd|C_S\sum_{\sh\in\Shd}\frac{1}{\mUsh}=|\mathcal{U}|C_S.
	\end{align*}
	And since $\lim_{n\to\infty}|\U|=0$, we obtain
	\begin{equation*}
	    \nlim\sobnormmulti[{\sum_{\sh\in\Shd}\frac{1}{\mUsh}\int_{s\in\Ush}\psrcstar[\skprod{\sh}{\cdot}]{\sh}-\psrcstar[\skprod{s}{\cdot}]{\sh}\dx[s]}]=0.
	\end{equation*}
	By Lipschitz continuity of $\linkinv$, \eqref{eq:statement3} follows.
\end{proof}

\begin{lemma}[\ref{itm:4}]\label{le:4}
	For any $\lambda>0$ 
	%and data $(\xtr_i, \ytr_i)\in\R^{d+\dout}$, $i\in\{1,\ldots,N\}$,
	we have
	\begin{equation}\label{eq:statement4}
	\lim_{n\to\infty} \left|\FlShmb{\flSh}-\FlSmb{\fw}\right| =0,
	\end{equation}
	with $\ghs$ as defined in \Cref{le:aGAMtoaIGAM}.
\end{lemma}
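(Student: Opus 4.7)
The plan is to split
\begin{equation*}
|\FlShmb{\flSh}-\FlSmb{\fw}| \le |\Ltrb{\flSh}-\Ltrb{\fw}| + \lambda\,|\PgShm(\flSh)-\PgSm(\fw)|
\end{equation*}
and control the loss and regularization contributions separately.

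For the loss term I would invoke Lemma \ref{le:3}, which yields $\sobnormmulti[{\flSh-\fw}]\to 0$, and then apply the Lipschitz estimate of \Cref{as:generalloss}. To verify that both $\flSh$ and $\fw$ lie in the relevant sub-level set of $\Ltr$ for large $n$, I would use optimality of $\flSh$ together with step \ref{itm:1} to get $\Ltrb{\flSh}\le \FlShmb{\fcw}$, which is uniformly bounded thanks to Lemma \ref{le:2}; the corresponding bound for $\fw$ follows from Lemma \ref{le:3}. On that sub-level set, Lipschitz continuity w.r.t. $\sobnormmulti$ gives $|\Ltrb{\flSh}-\Ltrb{\fw}|=O(\sobnormmulti[{\flSh-\fw}])\to 0$.

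For the penalty I would expand both sides explicitly. By \Cref{def:aGAMtoaIGAM}, the weak $r$-second derivative of $\psrw[r]{s}$ equals $\frac{1}{\mUsh}\psrcstar{\sh}^{''}$ for $s\in\Ush$, so
\begin{equation*}
\PgSm(\fw) = \gbar\sum_{\sh\in\Shd}\frac{1}{\mUsh^2}\int_{\Ush}\int_{\supp(\gs)}\frac{\twonorm[\psrcstar{\sh}^{''}]^2}{g(s,r)\,p(s)}\,dr\,ds,
\end{equation*}
whereas from \Cref{def:adaptedGAM} and $\ghs(r)=g(\sh,r)\int_{\Ush}p(s')ds'$,
\begin{equation*}
\PgShm(\flSh) = \gbarhat\sum_{\sh\in\Shd}\frac{1}{\int_{\Ush}p(s')ds'}\int_{\supp(\ghs)}\frac{\twonorm[\psrcstar{\sh}^{''}]^2}{g(\sh,r)}\,dr.
\end{equation*}
Using the uniform continuity of $g(\cdot,r)$ and $p$ provided by \Cref{as:truncatedg}, for $s\in\Ush$ one has $g(s,r)=g(\sh,r)(1+o(1))$ and $p(s)=p(\sh)(1+o(1))$ uniformly in $\sh$ as $|\mathcal{U}|\to 0$. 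Hence $\int_{\Ush}p(s')ds'=p(\sh)\mUsh(1+o(1))$ and $\frac{1}{\mUsh^2}\int_{\Ush}\frac{ds}{g(s,r)p(s)}=\frac{1}{\mUsh\,g(\sh,r)p(\sh)}(1+o(1))$. Both penalty expressions therefore collapse to the same asymptotic form $\sum_{\sh}\frac{1}{p(\sh)\mUsh}\int\twonorm[\psrcstar{\sh}^{''}]^2/g(\sh,r)\,dr$, multiplied by $\gbar$ respectively $\gbarhat$, the latter being a Riemann sum for $\gbar$ and hence converging to it.

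The main obstacle is promoting these per-direction $(1+o(1))$ approximations into an absolute bound on the full sum, which in turn requires a uniform-in-$n$ upper bound on $\sum_{\sh}\frac{1}{p(\sh)\mUsh}\int\twonorm[\psrcstar{\sh}^{''}]^2/g(\sh,r)\,dr$. I would obtain this by the sandwich $\PgShm(\flSh)\le\PgShm(\fcw)$ (optimality of $\flSh$, step \ref{itm:1}) combined with the convergence $\PgShm(\fcw)\to\PgSm(\flS)<\infty$ from Lemma \ref{le:2}. A second, more technical subtlety is that $\supp(\gs)$ and $\supp(\ghs)=\supp(g(\sh,\cdot))$ need not agree for $s\in\Ush$; however, the constraint $\check\varphi\in\Th$ forces $\supp(\psrcstar{\sh}^{''})\subseteq\supp(g(\sh,\cdot))$, so both integrals can be extended without changing value to the common enlarged set $\tilde K$, which is then the natural domain throughout the estimate.
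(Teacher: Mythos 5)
Your proposal follows essentially the same route as the paper's proof: split off the loss term (handled via \Cref{le:3} together with the continuity/Lipschitz property of $\Ltr$ from \Cref{as:generalloss}), expand both penalties explicitly in terms of $\check{\varphi}^*$, compare the weights using continuity of $g$ and $p$ in the direction variable (the paper packages this as a mean-value-theorem argument showing its bracketed term $A\to 0$), and control the remaining sum by a uniform-in-$n$ bound on the penalty of $\flSh$ obtained from optimality --- the paper gets that bound by comparing with the zero function, $\lambda\PgShm(\flSh)\le\FlShmb{\flSh}\le\Ltrb{0}$, whereas you compare with $\fcw$. One small repair: optimality gives $\FlShmb{\flSh}\le\FlShmb{\fcw}$ for the full objectives, not $\PgShm(\flSh)\le\PgShm(\fcw)$ for the penalties alone, but since $\Ltr\ge 0$ this still yields $\lambda\PgShm(\flSh)\le\FlShmb{\fcw}$, which is uniformly bounded by \Cref{le:2}, so your uniform bound (and hence the argument) stands.
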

\begin{proof}
	\Cref{le:LossContinuous,le:3} show together that
	\begin{equation*}
	\lim_{n\to\infty}\left|\Ltrb{\flSh}-\Ltrb{\fw}\right|=0.
	\end{equation*}
	So it is sufficient to show:
	\begin{equation}\label{eq:4:penalty}
		\lim_{n\to\infty}\left|\sum_{\sh\in\Shd}\frac{\int_{\supp(\ghs)}\frac{\twonorm[ {\psrcstar[r]{\sh}^{''}}]^2}{g(\sh,r)}\dx[r]}{\int_{\Ush}p(s)\dx[s]}-\int_{s\in\Sd}\frac{\int_{\supp(\gs)}\frac{\twonorm[ {\psrw[r]{s}^{''}}]^2}{g(s,r)}\dx[r]}{p(s)}\dx[s]\right|=0.
	\end{equation}
Since by \Cref{def:aGAMtoaIGAM}
		\[\psrw[r]{s}:=\sum_{\sh\in\Shd}\frac{\psrcstar[r]{\sh}}{\mUsh}\ind_{\Ush}(s), \quad (s,r)\in\Sd\times\Rdin,\]
		we get (the first inequality holds true since $\{\Ush\}_{\sh\in\Shd}$ are disjoint environments)
		\begin{align*}
		    &\left|\sum_{\sh\in\Shd}\frac{\int_{\supp(\ghs)}\frac{\twonorm[ {\psrcstar[r]{\sh}^{''}}]^2}{g(\sh,r)}\dx[r]}{\int_{\Ush}p(s)\dx[s]}-\int_{s\in\Sd}\frac{\int_{\supp(\gs)}\frac{\twonorm[ {\psrw[r]{s}^{''}}]^2}{g(s,r)}\dx[r]}{p(s)}\dx[s]\right|\\
		    =&\left|\sum_{\sh\in\Shd}\frac{\int_{\supp(\ghs)}\frac{\twonorm[ {\psrcstar[r]{\sh}^{''}}]^2}{g(\sh,r)}\dx[r]}{\int_{\Ush}p(s)\dx[s]}-\int_{s\in\Sd}\frac{1}{p(s)}\int_{\supp(\gs)}\frac{\sum_{\sh\in\Shd}\frac{\ind_{\Ush}(s)}{\mUsh^2}\twonorm[ {\psrcstar[r]{\sh}^{''}}]^2}{g(s,r)}\dx[r]\dx[s]\right|\\
		    \le\sum_{\sh\in\Shd}&\left|\frac{\int_{\supp(\ghs)}\frac{\twonorm[ {\psrcstar[r]{\sh}^{''}}]^2}{g(\sh,r)}\dx[r]}{\int_{\Ush}p(s)\dx[s]}-\int_{s\in\Ush}\frac{1}{p(s)\mUsh^2}\int_{\supp(\gs)}\frac{\twonorm[ {\psrcstar[r]{\sh}^{''}}]^2}{g(s,r)}\dx[r]\dx[s]\right|\\
		    \le\sum_{\sh\in\Shd}&\left|\int_{\R}\twonorm[ {\psrcstar[r]{\sh}^{''}}]^2\left[\frac{\ind_{\supp(\ghs)}(r)}{g(\sh,r)\int_{\Ush}p(s)\dx[s]}-\frac{1}{\mUsh^2}\int_{s\in\Ush}\frac{\ind_{\supp(\gs)}(r)}{p(s)g(s,r)}\dx[s]\right]\dx[r]\right|\\
		   =\sum_{\sh\in\Shd}&\left|\int_{\R}\twonorm[ {\psrcstar[r]{\sh}^{''}}]^2\frac{1}{\mUsh}\underbrace{\left[\frac{\ind_{\supp(g(\sh,\cdot))}(r)}{g(\sh,r)\frac{1}{\mUsh}\int_{\Ush}p(s)\dx[s]}-\frac{1}{\mUsh}\int_{s\in\Ush}\frac{\ind_{\supp(g(s,\cdot))}(r)}{p(s)g(s,r)}\dx[s]\right]}_{=:A}\dx[r]\right|.\\
		\end{align*}
	 An application of the mean-value theorem to the integrals in $A$ yields $\lim_{n\to\infty}|A|=0$. Thus, for $n$ large enough, the above can be bounded by
		\begin{align*}
		    \sum_{\sh\in\Shd}\int_{\R}\twonorm[ {\psrcstar[r]{\sh}^{''}}]^2\frac{1}{\mUsh}\dx[r]\,\epsilon(n)\le  \Ltr(0)\epsilon(n),\\
		\end{align*}
	for some $\epsilon(n)>0$ with $\lim\limits_{n\to\infty}\epsilon(n)=0$, since due to optimality $\FlShmb{\flSh}\le\Ltr(0)$ and thus $\PgShm(\flSh)\le\Ltr(0)$. Hence, taking the limit $n\to\infty$ finally proves \cref{eq:statement4}.
	\end{proof}

% \begin{definition}[extended feasible set~\Tw]\label{def:Tw} The \emph{extended feasible set}~$\Tw$ is defined as:
% 	\begin{align*}
% 	\Tw:=\bigg\{ (f_+,f_-)\in {\Hk[\R]{2} \times \Hk[\R]{2}} \bigg|
% 	&\supp (f_+'')\subseteq \supp (g), \supp (f_-'')\subseteq \supp (g), &\\
% 	&\, f_+(x)=0=f_+'(x) \quad \forall x\le\Cgl, &\\
% 	&f_-(x)=0=f_-'(x) \quad \forall x\ge\Cgu &\bigg\}.
% 	\end{align*}
% 	by replacing $\C^2(\R)$ by the Sobolev space \cite{Adams:SobolevSpaces1990498}~$\Hk[\R]{2}:=\Wkp[\R]{2}{2}\supset\C^2(\R)$ in $\T$ from \Cref{def:adaptedSplineReg}.
% \end{definition}

% \begin{remark}\label{rem:opimalityTw}
% 	If one replaces $\C^2(\R)$ by the Sobolev space $\Hk[\R]{2}:=\Wkp[\R]{2}{2}$ in \Cref{def:adaptedSplineReg,def:adaptedSplineRegTuple} the minimizer~$\flpmt$ does not change---i.e.:
% 	\begin{equation*}
% 	\argmin_{(f_+,f_-)\in\T}\Flpmb{f_+,f_-}=\argmin_{(f_+,f_-)\in\Tw}\Flpmb{f_+,f_-}.
% 	\end{equation*}
% \end{remark}

\begin{lemma}[\ref{itm:7} convex]\label{le:7convex}
	Assume that \Ltr\ is convex. For any $\lambda>0$ % and data $(\xtr_i, \ytr_i)\in\R^{d+\dout}$, $i\in\{1,\ldots,N\}$, 
	for any sequence of functions $\phin\in \Tw:=L^2(\Sd,\Wti[\tilde{K},\Rdout])\cap\T$ for some compact $\tilde{K}\subset\R$ such that
	\begin{equation}\label{eq:condition7}
	\nlim \FlSmb{f^n}=\FlSmb{\flS},
	\end{equation}
	with $\link\circ\fn=\int_{s\in\Sd}\phin_s(\skprod{s}{\cdot})\dx[s]$ then it follows that: %$$\settoheight{20.0pt}{3 em}$$
	\begin{equation}\label{eq:statement7}
	\nlim \int_{\Sd}\sobnormmulti[{\phin_s(\skprod{s}{\cdot})-\psrstar[{\skprod{s}{\cdot}}]{s}}]
	= 0,
	\end{equation}
	with $K\subset\Rdin$ compact.
\end{lemma}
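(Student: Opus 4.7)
The plan is to exploit strong convexity of $F^\phi$ on $\Tw$ to convert the minimizing-sequence hypothesis into norm convergence of $\phi^n$ to the unique minimizer of $F^\phi$, and then to transfer that convergence to the claimed pointwise-in-$s$ $\sobnormmulti$-statement via the Poincar\'e-type bounds of \Cref{le:poincare}.

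First, I would invoke \Cref{rem:PhilevelSolutionEquiv} to identify the optima of \FlSm{} in $f$-space with those of $F^\phi$ in $\phi$-space (with matching optimal values), so that the convexity hypothesis on \Ltr{} together with \Cref{rem:PhilevelSolutionExistence} supplies a \emph{unique} minimizer $\varphi^* \in \Tw$ of $F^\phi$. By \Cref{le:PfuncStronglyConvex}, \PgSpm{} is strongly convex with some constant $\mu > 0$ w.r.t.\ $\|\cdot\|_{L^{2}(\Sd,\,H^{2}(\Kw,\Rdout))}$, and since $\phi \mapsto \int_{\Sd}\phi_s(\langle s,\cdot\rangle)\,ds$ is affine in $\phi$, convexity of \Ltr{} makes $\phi \mapsto \Ltr(f_\phi)$ convex as well; hence $F^\phi$ is strongly convex with parameter $\lambda\mu$, yielding
$$
F^\phi(\phi) - F^\phi(\varphi^*) \;\geq\; \frac{\lambda\mu}{2}\,\|\phi-\varphi^*\|^{2}_{L^{2}(\Sd,\,H^{2}(\Kw,\Rdout))} \qquad \forall\,\phi\in\Tw.
$$

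Second, I would argue that the hypothesis forces $F^\phi(\phi^n) \to F^\phi(\varphi^*)$: by the definition of \PgSm{} as an infimum one always has $F^\phi(\phi^n) \geq \FlSmb{f^n} \to \min \FlSm = \min F^\phi$, and in the concrete setting in which \Cref{le:7convex} is invoked (\Cref{itm:7} of the proof of \Cref{le:aGAMtoaIGAM}) the sequence $\phi^n$ is the \PgSpm-minimizing representation of $f^n$, giving equality $F^\phi(\phi^n) = \FlSmb{f^n}$. Substituting into the strong-convexity bound yields $\|\phi^n-\varphi^*\|_{L^{2}(\Sd,\,H^{2}(\Kw,\Rdout))} \to 0$, and in particular $\|(\phi^n-\varphi^*)''\|_{L^{2}(\Sd,\,L^{2}(\Kw,\Rdout))} \to 0$.

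Third, since $\phi^n - \varphi^* \in \Tw$ and its first derivative both vanish at $-\infty$ and have compactly supported second derivative, I would apply \Cref{le:poincare}\eqref{item:PoincareLiL2} separately to $\phi^n - \varphi^*$ and to $(\phi^n-\varphi^*)'$ (combined with \Cref{le:poincare}\eqref{item:PoincareL2L2} to first reduce the relevant $L^{2}(\Sd,L^{2})$ norm of $(\phi^n-\varphi^*)'$ to that of $(\phi^n-\varphi^*)''$). This upgrades the second-derivative $L^2$-bound to $L^{2}(\Sd, L^{\infty}(\Kw))$-bounds on $\phi^n-\varphi^*$ and on $(\phi^n-\varphi^*)'$, giving $\|\phi^n-\varphi^*\|_{L^{2}(\Sd,\,W^{1,\infty}(\Kw,\Rdout))} \to 0$. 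For any compact $K\subset\Rdin$ with $\bigcup_{s\in\Sd}\{\langle s,x\rangle : x \in K\} \subset \Kw$, the chain rule together with $\|s\|=1$ gives $\sobnormmulti[{\phi_s(\langle s,\cdot\rangle)}] \leq \|\phi_s\|_{W^{1,\infty}(\Kw,\Rdout)}$, so Cauchy-Schwarz in the $s$-integral delivers
$$
\int_{\Sd}\sobnormmulti[{\phi^n_s(\langle s,\cdot\rangle) - \psrstar[\langle s,\cdot\rangle]{s}}]\,ds \;\leq\; |\Sd|^{1/2}\,\|\phi^n-\varphi^*\|_{L^{2}(\Sd,\,W^{1,\infty}(\Kw,\Rdout))} \;\to\; 0,
$$
which is precisely \meqref{eq:statement7}.

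The main obstacle I anticipate is the intermediate comparison $F^\phi(\phi^n) \to \min F^\phi$: the inequality $F^\phi(\phi^n) \geq \FlSmb{f^n}$ is automatic, but the reverse requires $\phi^n$ to realize the infimum defining \PgSm{} at $f^n$. This is the reading intended in the application of \Cref{le:7convex} but would need to be stated (or proved) more carefully if one wanted the lemma to hold for arbitrary $\phi^n \in \Tw$ representing a minimizing $f^n$; once it is granted, everything else follows from the strong-convexity estimate and Poincar\'e bookkeeping.
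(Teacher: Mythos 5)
Your proposal is correct and takes essentially the same route as the paper: convexity of \Ltr\ combined with the quadratic-form (strong) convexity of \PgSpm\ and optimality of $\varphi^*$ forces $\PgSpmb{\phi^n-\varphi^*}\to 0$ (the paper does this via the explicit midpoint inequality \eqref{eq:7importantIneq}, which is your strong-convexity bound unpacked), and the Poincar\'e-type estimates of \Cref{le:poincare} then upgrade this to convergence in $L^2(\Sd,\Woi[\Kw,\Rdout])$ and hence to \eqref{eq:statement7} via the chain rule and Cauchy--Schwarz. The representation subtlety you flag (needing $F^\phi(\phi^n)$, not just $\FlSmb{f^n}$, to approach the minimum) is handled equally implicitly in the paper's own step \eqref{eq:7importantIneq}, so your reading coincides with the intended one.
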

\begin{proof}
	For any $n\in\mathbb{N}$, define the functions
	\begin{equation}\label{eq:7defu}
	\un:=\varphi^* - \phin \quad\in\Tw.
	\end{equation}
	
	Define the regularization term of \FlSm\ as:
	\begin{equation}\label{eq:PgSp}
	\PgSpmb{\phi}:=\gbar\int_{\Sd}\int_{\supp (\gs)} \frac{\twonorm[{{\phi_{s}(r)}^{''} }]^2}{\gsr{s}} \dx[r]\dx[s]	
	\end{equation}
	This penalty~$\PgSp$ is obviously a quadratic form.
	Note that $\frac{\phin+\varphi^*}{2}\in\T$.
	Since the training loss~\Ltr\ is convex, we get the inequality:
	\begin{equation}\label{eq:konvexL}
	\Ltrb{\frac{\fn+\flS }{2}}\leq
	\frac{\Ltrb{\fn}}{2}+\frac{\Ltrb{\flS }}{2}.
	\end{equation}
	Since the penalty~$\PgSp$ is a quadratic form, we get with the help of some algebraic calculations the inequality:
	\begin{equation}\label{eq:quadraticP}
	\PgSpmb{\frac{\phin+\varphi^*}{2}}\leq
	\frac{\PgSpmb{\phin}}{2}+\frac{\PgSpmb{\varphi^*}}{2}-\frac{\PgSpmb{\un}}{4}.
	\end{equation}
	Combining the inequalities~\meqref{eq:konvexL} and~\meqref{eq:quadraticP} results in:
	\begin{equation}\label{eq:7importantIneq}
	\FlSmb{\frac{\fn+\flS}{2}}\leq
	\underbrace{\frac{\FlSmb{\fn}+\FlSmb{\flS}}{2}}_{%
		\textstyle\overset{\meqref{eq:condition7}}{\approx}\FlSmb\flS\pm\epsilon}
	-\frac{\PgSpmb{\un}}{4}.
	\end{equation}
	Together with the optimality of \flS\ this result leads directly to:
	\begin{subequations}\label{eq:7:secondlastIneq}
		\begin{align}
		\FlSmb\flS
		&\overset{\text{optimality}}{\leq}
		\FlSmb{\frac{\fn+\flS}{2}}\\
		&\underset{\hphantom{\text{optimality}}}{\overset{\text{\meqref{eq:7importantIneq}}}{\lessapprox}}
		\FlSmb\flS\pm\epsilon
		-\frac{\PgSpmb{\un}}{4}.
		\end{align}
	\end{subequations}
	By subtracting $\left(\FlSmb\flS-\frac{\PgSpmb{\un}}{4}\right)$ from both sides of ineq.~\meqref{eq:7:secondlastIneq} and multiplying by $4$ we get:
	\begin{equation*}
	\PgSpmb{\un}\overset{\text{\meqref{eq:7:secondlastIneq}}}{\lessapprox}\pm 4\epsilon,
	\end{equation*}
	which implies that
	\begin{equation}\label{eq:7plimunpmtIs0}
	\nlim\PgSpmb{\un}=0.
	\end{equation}
	First we will show that the weak second derivative~${\un}^{''}$ converges to zero:
	\begin{align}\label{eq:7secondDerivativeunpIsLow}
	\int_{\Sd}\Ltwonormons[{\un_s}^{''}]{\R}\dx[s]\leq\PgSpmb\un\max_{\{(s,x):s\in\Sd, x\in\supp(\gs)\}}\gsr[x]{s},
	\end{align}
	because for any $s\in\Sd$, $\un_s$ has zero second derivative outside $\supp(\gs)$.
	Thus, \[\nlim\Ltwonormons[{\un}^{''}]{\Sd,L^2(\R)}=0,\](by combining \cref{eq:7plimunpmtIs0,eq:7secondDerivativeunpIsLow}) and, since $\un\in\Tw$,\[\nlim\Ltwonormons[{\un}^{''}]{\Sd,L^\infty(\tilde{K})}=0,\]
	for any compact $\tilde{K}\subset\R$.
	This can be used to apply two times the Poincar\'e-typed \Cref{le:poincare} (item \eqref{item:PoincareLiLi}, first on ${\un_s}^{'}$ then on ${\un_s}$) to get (after an application of the Cauchy-Schwarz inequality)
	\begin{equation*}
	\nlim\int_{\Sd}\norm[\un_s]{\Woi[\tilde{K},\Rdout]}\dx[s]=0,
	\end{equation*}
	because $\un_s\in\Tw$ satisfies the boundary conditions at $\Cgsl$ and $\Cgsu$ (cp. \Cref{rem:compactSupp}) which in turn follows by compactness of the support of $\gs$, $s\in\Sd$. In particular,
	\begin{equation*}
	\nlim\int_{\Sd}\norm[\un_s]{\Woi[\tilde{K},\Rdout]}\dx[s]=0.
	\end{equation*}
	Due to 
	\[\sobnormmulti[{\un_s(\skprod{s}{\cdot})}]\le\norm[\un_s]{\Woi[\tilde{K},\Rdout]}\sobnormmulti[\skprod{s}{\cdot}]\le C_K\norm[\un_s]{\Woi[\tilde{K},\R]}, \]
	\meqref{eq:statement7} follows.
\end{proof}

\begin{lemma}\label{le:compactnessoflevelset}
  For any $c>0$, the sub-level set $\mathcal{K}:=\left\{\phi\in \Tw:\PgSpmb{\phi}\leq c\right\}$ is sequentially compact w.r.t. $\LtwoS[{\Wkp[\Kw]{1}{\infty}}]$ for $\Tw$ and $\Kw\subset\R$ compact as in \cref{eq:Tw}.
\end{lemma}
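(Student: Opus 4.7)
The plan is to combine weak sequential compactness in the Hilbert space $\LtwoS[{\Hk[\Kw,\Rdout]{2}}]$ with the one-dimensional compact embedding $\Hk[\Kw]{2}\hookrightarrow\Woi[\Kw]$ supplied by Rellich--Kondrachov.

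First I would establish a uniform $\LtwoS[{\Hk[\Kw,\Rdout]{2}}]$-bound on $\mathcal{K}$. Recall that $\gsr{s}=p(s)\gxis(r)\Eco{\twonorm[v_k]^2}{\xi_k=r,s_k=s}$. By \Cref{as:truncatedg}\ref{item:densityIsSmooth} and~\ref{item:reziprokdensityIsSmooth}, the density $\gxis$ and its reciprocal $1/\gxis$ are uniformly bounded on the compact set $\supp(\gxis)\subset\Kw$; combined with \Cref{as:truncatedg}\ref{item:vkDistrSmooth} and~\ref{item:vk:finiteSecondMoment} (and boundedness of $p$), both $\gs$ and $1/\gs$ are uniformly bounded on $\supp(\gs)\subset\Kw$, uniformly in $s\in\Sd$. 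Consequently, $\PgSpmb{\phi}\le c$ yields an $n$-independent bound $\int_{\Sd}\Ltwonormons[{\phi_s''}]{\Kw}\,ds \le M(c)<\infty$. Since every $\phi\in\Tw$ satisfies $\lim_{r\to -\infty}\phi_s(r)=0=\lim_{r\to -\infty}\phi_s'(r)$ together with $\supp(\phi_s'')\subset\Kw$, two successive applications of \Cref{le:poincare}\ref{item:PoincareLiL2} (first to $\phi_s'$, then to $\phi_s$) provide uniform bounds on $\|\phi\|_{\LtwoS[{\Woimultidim[\Kw]}]}$; together with the second-derivative bound this yields the sought uniform $\LtwoS[{\Hk[\Kw,\Rdout]{2}}]$-bound on $\mathcal{K}$.

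Given a sequence $(\phi^n)_{n\in\N}\subset\mathcal{K}$, reflexivity of the Hilbert space $\LtwoS[{\Hk[\Kw,\Rdout]{2}}]$ produces a subsequence $\phi^{n_k}\rightharpoonup\phi^{*}$ weakly. I would then show $\phi^{*}\in\mathcal{K}$: the quadratic functional $\PgSpm$ is bounded on $\LtwoS[{\Ltwo[\Kw]}]$ (since $1/\gs$ is uniformly bounded), hence convex and continuous there, and therefore weakly lower semicontinuous on $\LtwoS[{\Hk[\Kw,\Rdout]{2}}]$; so $\PgSpmb{\phi^{*}}\le\liminf_k\PgSpmb{\phi^{n_k}}\le c$. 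The defining conditions of $\Tw$ (linear boundary conditions as $r\to-\infty$ and support constraints on $\phi_s''$) pass to weak limits through bounded linear test functionals.

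The principal difficulty is upgrading this weak convergence in $\LtwoS[{\Hk[\Kw,\Rdout]{2}}]$ to strong convergence in the coarser norm $\LtwoS[{\Woimultidim[\Kw]}]$, since the plain tensorisation of the pointwise compact embedding is not automatically compact between the associated Bochner spaces. The central ingredient is the compact embedding $\Hk[\Kw]{2}\hookrightarrow\Woi[\Kw]$, which in one spatial dimension follows from Rellich--Kondrachov together with the Morrey inequality. I would combine this with a Fr\'echet--Kolmogorov / Vitali-type argument in the $s$-variable: a diagonal extraction against a countable dense family of test functions yields pointwise-in-$s$ weak $\Hk[\Kw]{2}$-convergence $\phi^{n_k}_s\rightharpoonup\phi^{*}_s$ for almost every $s\in\Sd$; Rellich--Kondrachov upgrades this to strong $\Woi[\Kw]$-convergence for a.e.\ $s$; and uniform integrability in $s$, supplied by the uniform $\LtwoS[{\Hk[\Kw,\Rdout]{2}}]$-bound combined with the Poincar\'e estimates of step one, justifies passage to strong $\LtwoS[{\Woimultidim[\Kw]}]$-convergence via the Vitali convergence theorem.
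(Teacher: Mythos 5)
Your first step (the uniform $\LtwoS[{\Hk[\Kw,\Rdout]{2}}]$-bound on $\mathcal{K}$ obtained from the boundedness of $1/\gs$ and \Cref{le:poincare}) is sound and coincides with \cref{eq:W22leP}. The gap lies in the upgrade from weak to strong convergence. Weak convergence $\phi^{n_k}\rightharpoonup\phi^{*}$ in the Bochner space $\LtwoS[{\Hk[\Kw,\Rdout]{2}}]$ only gives, for each fixed element $\psi$ of the dual of $\Hk[\Kw]{2}$, weak $L^2(\Sd)$-convergence of the scalar functions $s\mapsto\langle\psi,\phi^{n_k}_s\rangle$; weak $L^2(\Sd)$-convergence does not produce a.e.-convergent subsequences, and no diagonalisation over a countable family of test functionals can repair this. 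Hence the claimed pointwise-in-$s$ weak $\Hk[\Kw]{2}$-convergence for a.e.\ $s$ is not available, and the obstruction is not merely technical: in the case $\gs\equiv g$ with compact support, take a fixed profile $h$ with $h''$ supported in $\supp(g)$ and $h=h'=0$ to the left of $\supp(g)$, and set $\phi^n(s,r):=h(r)\sin(n\theta(s))$ with $\theta$ an angular coordinate on \Sd\ (e.g.\ $d=2$). Each $\phi^n$ lies in \Tw\ and satisfies $\PgSpmb{\phi^n}\le c$ for suitably scaled $h$, yet for no subsequence does $\phi^{n_k}_s$ converge for a.e.\ $s$, and no subsequence converges strongly in $\LtwoS[{\Woi[\Kw]}]$ (the sequence is weakly null with non-vanishing norm). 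This shows that the information you actually use --- the penalty bound, i.e.\ regularity in $r$ alone --- cannot yield the compactness; quantitative control of regularity in the $s$-variable is indispensable. (A second, smaller gap: the uniform $\LtwoS[{\Hk[\Kw,\Rdout]{2}}]$-bound only makes $s\mapsto\norm[{\phi^{n}_s}]{\Woi[\Kw]}^2$ bounded in $L^1(\Sd)$, which is not the uniform integrability required by Vitali.)

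This missing $s$-direction control is precisely where the paper's argument differs from yours: it bounds $\mathcal{K}$ not only in $\LtwoS[{\Hk[\Kw]{2}}]$ (via \cref{eq:W22leP}) but also in $\Wkp[{\Sd,\Woi[\Kw]}]{1}{2}$, using the bound on the weak derivative $\partial\varphi/\partial s$ from item \eqref{item:phistarinW12} of \Cref{le:0.1NEW} together with \Cref{le:poincare}\,\eqref{item:PoincareLiL2}, and then invokes the Aubin--Lions lemma combined with the one-dimensional compact embedding of $\Wkp[\Kw]{2}{2}$ into $\Woi[\Kw]$ to conclude that $\LtwoS[{\Hk[\Kw]{2}}]\cap\Wkp[{\Sd,\Woi[\Kw]}]{1}{2}$ embeds compactly into $\LtwoS[{\Woi[\Kw]}]$. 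To salvage your scheme you would have to inject such a $\Wkp[{\Sd,\Woi[\Kw]}]{1}{2}$-bound (or some other equicontinuity in $s$) before any pointwise-in-$s$ Rellich--Kondrachov or Fr\'echet--Kolmogorov argument --- at which point you are essentially reproving Aubin--Lions rather than bypassing it.
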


\begin{proof}
First note that $ \Wkp[{\Kw}]{2}{2}$ embeds compactly into ${\Wkp[\Kw]{1}{\infty}}$ (see for instance the proof of \cite[Lemma A.19]{ImplRegPart1V3}). By Aubin-Lions Lemma (see for instance \cite[Theorem 6.3.]{Arendt_2018}) we therefore have the compact embedding
\[\LtwoS[{\Hk[\tilde{K}]{2}}]\cap\Wkp[{\Sd,\Woi[\tilde{K}]}]{1}{2}\subset\subset\LtwoS[{\Wkp[\tilde{K}]{1}{\infty}}].\]
The set $\mathcal{K}\subset{\LtwoS[{\Hk[\tilde{K}]{2}}]\cap\Wkp[{\Sd,\Woi[\tilde{K}]}]{1}{2}}$ is bounded: by \cref{eq:W22leP} it is bounded in $\LtwoS[{\Hk[\tilde{K}]{2}}]$, and by \Cref{le:0.1NEW} item \eqref{item:phistarinW12} (for bounding the first weak derivative $\partial\phi/\partial_s:\Sd\to\Woi[\Kw]$) together with \cref{eq:W22leP} and \Cref{le:poincare} \eqref{item:PoincareLiL2} (for bounding the zero\textsuperscript{th} derivative $\phi:\Sd\to\Woi[\Kw]$) it is bounded in $\Wkp[{\Sd,\Woi[\tilde{K}]}]{1}{2}$. Therefore, it is sequentially compact w.r.t. $\LtwoS[{\Wkp[\Kw]{1}{\infty}}]$.
\end{proof}

\begin{lemma}\label{le:f_continuous_in_phi}
Let $K\subset\Rdin$ compact and define $\tilde{K}:=\overline{\bigcup_{s\in\Sd}\Set{{\skprod{s}{x}\mid x\in K}}}$. Then, the map
\begin{align*}
     &\LtwoS[{\Wkp[{\tilde{K}}]{1}{\infty}}]
      \to {\Wkp[{{K}}]{1}{\infty}},\\
         &\phi\mapsto \link^{-1}\left(\int_{s\in\Sd}\phi_s(\skprod{s}{\cdot})\, ds\right),
\end{align*}
is Lipschitz continuous.
\end{lemma}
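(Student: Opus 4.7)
The plan is to first show that the linear map $T:\phi\mapsto\int_{s\in\Sd}\phi_s(\skprod{s}{\cdot})\,ds$ from $\LtwoS[{\Wkp[{\tilde{K}}]{1}{\infty}}]$ to $\Wkp[K]{1}{\infty}$ is bounded (and hence Lipschitz by linearity), and then to post-compose with the Lipschitz continuous inverse link function $\link^{-1}:\R^{\dout}\to\Y$ (see \Cref{def:linkfunction}). Composition of Lipschitz maps is Lipschitz, so this will suffice.

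For boundedness of $T$, fix $\phi\in\LtwoS[{\Wkp[{\tilde{K}}]{1}{\infty}}]$ and set $f(x):=\int_{\Sd}\phi_s(\skprod{s}{x})\,ds$. By the definition of $\tilde{K}$, we have $\skprod{s}{x}\in\tilde{K}$ for every $s\in\Sd$ and $x\in K$, so pointwise
\begin{equation*}
\twonorm[f(x)]\le\int_{\Sd}\twonorm[{\phi_s(\skprod{s}{x})}]\,ds\le\int_{\Sd}\norm[\phi_s]{\Linfty[\tilde{K},\Rdout]}\,ds.
\end{equation*}
Applying Cauchy--Schwarz on $\Sd$ yields $\supnormmulti[f]\le|\Sd|^{1/2}\LtwonormonSd[\phi]{\Linfty[{\tilde{K}},\Rdout]}$. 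Differentiating under the integral (justified because $\phi_s\in\Wkp[{\tilde{K}}]{1}{\infty}$ for a.e.~$s$, with the weak derivative dominated in $\Linfty$), we obtain the weak derivative $\nabla f(x)=\int_{\Sd}\phi_s'(\skprod{s}{x})\,s^{\top}\,ds$, and since $\twonorm[s]=1$ on $\Sd$,
\begin{equation*}
\supnormon[{\nabla f}]{K}\le|\Sd|^{1/2}\LtwonormonSd[{\phi'}]{\Linfty[{\tilde{K}},\Rdout]}.
\end{equation*}
Combining both bounds using the definition of $\sobnormmulti$ from \Cref{rem:sobnormoi} gives $\sobnormmulti[f]\le C_K\,\norm[\phi]{\LtwoS[{\Wkp[{\tilde{K}}]{1}{\infty}}]}$ for some constant $C_K>0$ depending only on $K$ (through $|\Sd|$). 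Hence $T$ is a bounded linear operator, and therefore Lipschitz with constant $C_K$.

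It remains to observe that post-composition with $\link^{-1}$ preserves Lipschitz continuity with respect to $\sobnormmulti$: if $g_1,g_2\in\Wkp[K]{1}{\infty}(\Rdin,\R^{\dout})$, then $\link^{-1}\circ g_i\in\Wkp[K]{1}{\infty}(\Rdin,\Y)$ (since $\link^{-1}$ is Lipschitz, it maps $\Wkp{1}{\infty}$ to $\Wkp{1}{\infty}$ with a uniform control of the pointwise values and, via the chain rule for Lipschitz compositions, of the weak derivatives), and the Lipschitz constant $L_{\link^{-1}}$ of $\link^{-1}$ bounds both the sup-norm and derivative differences. The only slightly delicate point here is the chain-rule estimate for the derivative, which follows from the standard fact that for Lipschitz $h$ and $g\in\Wkp{1}{\infty}$ one has $\|(h\circ g_1)'-(h\circ g_2)'\|_{L^\infty}\le L_h(\|g_1'-g_2'\|_{L^\infty}+C\|g_1-g_2\|_{L^\infty})$ on compacta; this yields an overall Lipschitz constant for the composition. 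Putting everything together shows that $\phi\mapsto\link^{-1}\circ T(\phi)$ is Lipschitz from $\LtwoS[{\Wkp[{\tilde{K}}]{1}{\infty}}]$ to $\Wkp[K]{1}{\infty}$, as claimed.
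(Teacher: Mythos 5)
Your argument has the same structure as the paper's own proof: you first bound the linear map $T:\phi\mapsto\int_{\Sd}\phi_s(\skprod{s}{\cdot})\,ds$ by estimating the sup-norm of the values and of the derivative through the $L^\infty(\tilde K)$-norms of $\phi_s$ and $\phi_s'$, using $\twonorm[s]=1$ and Cauchy--Schwarz on $\Sd$ (your explicit factor $|\Sd|^{1/2}$ is in fact the correct constant, which the paper silently absorbs), and then you post-compose with the Lipschitz map $\linkinv$. Up to this point the two proofs coincide, and the linear part of your argument is fine.

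The step you should not present as a ``standard fact'' is the chain-rule estimate
$\|(h\circ g_1)'-(h\circ g_2)'\|_{L^\infty}\le L_h\left(\|g_1'-g_2'\|_{L^\infty}+C\|g_1-g_2\|_{L^\infty}\right)$
for merely Lipschitz $h$: it is false. Take $h=|\cdot|$, $g_1(x)=x+\epsilon$, $g_2(x)=x-\epsilon$ on $K=[-1,1]$; then the right-hand side is $O(\epsilon)$ while $(h\circ g_1)'-(h\circ g_2)'$ equals $2$ near $x=0$ for every $\epsilon>0$. More generally, post-composition with a function that is only Lipschitz is Lipschitz for the $L^\infty$-part of the norm but is not Lipschitz (indeed not even continuous) with respect to the derivative part of $\Wkp[K]{1}{\infty}$; to control the derivatives one needs $\linkinv$ to be $C^1$ with Lipschitz (or at least locally Lipschitz, combined with a bound on $\|g_i'\|_{L^\infty}$) Jacobian, which holds for the link functions actually used in the paper (identity, softmax) but is not guaranteed by \Cref{def:linkfunction} alone. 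To be fair, the paper's own proof dismisses exactly this step with the one-line remark that $\linkinv$ is Lipschitz, so your write-up is no less rigorous than the original; but the displayed inequality is wrong as stated, and a watertight proof of the lemma in the full $\Wkp[K]{1}{\infty}$-norm requires either the stronger regularity of $\linkinv$ just described or a restriction of the claim to the sup-norm part.
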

\begin{proof}
%For every $s\in\Sd$, the map $x\mapsto \skprod{s}{x}$ is continuous, and therefore for each $s$ the set $\tilde{K}_s:=\Set{{\skprod{s}{x}\mid x\in K}}\subset\R$ is compact as continuous image of the compact set $K$. Thus, also $\tilde{K}=\bigcap_{s\in\Sd}\tilde{K}_s$ is compact. 
For each $s$ the set $\tilde{K}_s:=\Set{{\skprod{s}{x}\mid x\in K}}\subset\tilde{K}$ and $\tilde{K}$ is compact.
We then get
\begin{align*}
   \supnormmulti[{ \int_{\Sd}\phi_s(\skprod{s}{\cdot})\,ds}]&\le\int_{\Sd}\supnormmulti[{ \phi_s(\skprod{s}{\cdot})}]\,ds\\
   &=\int_{\Sd}\norm[{ \phi_s({\cdot})}]{L^\infty(\tilde{K}_s,\Rdout)}\,ds\\
   &\le \int_{\Sd}\norm[{ \phi_s({\cdot})}]{L^\infty(\tilde{K},\Rdout)}\,ds\\
     &\le\sqrt{\int_{\Sd}\norm[{ \phi_s({\cdot})}]{L^\infty(\tilde{K},\Rdout)}^2\,ds}\\
     &=\LtwonormonSd[\phi]{L^\infty(\tilde{K},\Rdout)}\\
     &\le\LtwonormonSd[\phi]{\Wkp[{\tilde{K},\Rdout}]{1}{\infty}},
\end{align*}
where the third inequality follows by Cauchy-Schwarz.
Moreover, we analogously get
\begin{align*}
   \supnormmulti[{ \int_{\Sd}\phi_s(\skprod{s}{\cdot})\,ds}^{'}]&\le\int_{\Sd}\supnormmulti[{ \phi_s(\skprod{s}{\cdot})^{'}}]\underbrace{\norm[s]{}}_{=1}\,ds\\
   &=\int_{\Sd}\norm[{ \phi_s({\cdot})^{'}}]{L^\infty(\tilde{K}_s,\Rdout)}\,ds\\
   &\le \int_{\Sd}\norm[{ \phi_s({\cdot})^{'}}]{L^\infty(\tilde{K},\Rdout)}\,ds\\
     &\le\sqrt{\int_{\Sd}\norm[{ \phi_s({\cdot})^{'}}]{L^\infty(\tilde{K},\Rdout)}^2\,ds}\\
     &=\LtwonormonSd[\phi^{'}]{L^\infty(\tilde{K},\Rdout)}\\
     &\le\LtwonormonSd[\phi]{\Wkp[{\tilde{K},\Rdout}]{1}{\infty}}.
\end{align*}
Thus in total, 
\begin{align*}
    \LtwoS[{\Wkp[{\tilde{K}}]{1}{\infty}}]
      \to {\Wkp[{{K}}]{1}{\infty}}, \phi\mapsto\int_{s\in\Sd}\phi_s(\skprod{s}{\cdot})\, ds\
\end{align*}
is Lipschitz continuous. Since the inverse of a link function $\link$ is Lipschitz continuous too, the result follows.
\end{proof}
\begin{lemma}[\ref{itm:7}]\label{le:7}
	For any sequence of functions $\phin\in \Tw:=\T\cap\Wkp[{\Sd,\Woi[\tilde{K}]}]{1}{2}$ for some compact $\tilde{K}\subset\R$ such that
	\begin{equation}\label{eq:condition7generalLoss}
	\nlim \FlSmb{f^n}=\min\FlSm,
	\end{equation}
		with $\link\circ\fn=\int_{s\in\Sd}\phin_s(\skprod{s}{\cdot})\dx[s]$, it holds that
	\begin{equation}\label{eq:statement7generalLoss}
\lim_{n\to\infty}d_{\LtwoS}\left(\phin,
	\argmin F^\phi\right) =0,
	\end{equation}
	where $F^\phi$ as in \Cref{def:Philevelproblem},
% 	\begin{equation*}
% 	    \argmin F^\phi:= \bigcup_{{f\in\argmin{\FlSm}}}\Set{\phi\in\tau \mid \link\circ f=\int_{s\in\Sd}\phi_s({\skprod{s}{\cdot}})\, ds},
% 	\end{equation*}
	i.e., $\forall\epsilon>0:\exists n_0\in\N:\forall n>n_0$
	such that
	\begin{equation}
	    \exists\phi^*\in\argmin F^\phi:\LtwonormonSd[{\phin-\phi^* }]{\Woi[\tilde{K}]}<\epsilon.
	\end{equation}
	
\end{lemma}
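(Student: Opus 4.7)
Since \Ltr\ is no longer assumed convex, the quadratic trick of \Cref{le:7convex} breaks down and I would instead argue by compactness plus lower semicontinuity. The first observation is that \cref{eq:condition7generalLoss} and nonnegativity of \Ltr\ imply the bound
\begin{equation*}
    \PgSpmb{\phin}\le\tfrac{1}{\lambda}\FlSmb{\fn}\le\tfrac{1}{\lambda}(\min\FlSm +1)=:c
\end{equation*}
for all $n$ sufficiently large, so that $\phin\in\mathcal{K}:=\{\phi\in\Tw : \PgSpmb{\phi}\le c\}$ for $n$ large. By \Cref{le:compactnessoflevelset}, $\mathcal{K}$ is sequentially compact with respect to $\LtwoS[{\Wkp[\Kw]{1}{\infty}}]$.

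I would then suppose, for contradiction, that \cref{eq:statement7generalLoss} fails: there exist $\epsilon>0$ and a subsequence (still written $\phin$) with $d_{\LtwoS[{\Wkp[\Kw]{1}{\infty}}]}(\phin,\argmin F^\phi)\ge\epsilon$ for all $n$. Extracting a further subsequence via compactness, there is $\phi^\infty\in\Tw$ with $\phin\to\phi^\infty$ in $\LtwoS[{\Wkp[\Kw]{1}{\infty}}]$. Passing this convergence through the map in \Cref{le:f_continuous_in_phi} gives $\fn\to f^\infty:=\linkinv(\int_{\Sd}\phi^\infty_s(\skprod{s}{\cdot})\dx[s])$ in $\Woi[K,\Rdout]$, whence by \Cref{le:LossContinuous} we obtain $\Ltrb{\fn}\to\Ltrb{f^\infty}$.

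The crucial remaining ingredient is the lower semicontinuity of $\PgSp$ along this subsequence. The sequence of weighted second derivatives $r\mapsto{\phin_s(r)}''/\sqrt{\gsr{s}}$ is bounded in $\LtwoS[L^2(\supp\gs,\Rdout)]$ by $\sqrt{c/\gbar}$, so up to another subsequence it converges weakly to some $\psi\in\LtwoS[L^2(\supp\gs,\Rdout)]$. Testing against smooth compactly supported test functions and using the strong $W^{1,\infty}$-convergence of $\phin\to\phi^\infty$ along with an integration by parts (twice) identifies $\psi(s,r)={\phi^\infty_s(r)}''/\sqrt{\gsr{s}}$, where we use $\left.1/\gxis\right|_{\supp(\gxis)}$ uniform continuity from \Cref{as:truncatedg}\ref{item:reziprokdensityIsSmooth} to handle the weight. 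Lower semicontinuity of the $L^2$-norm under weak convergence then yields
\begin{equation*}
    \PgSpmb{\phi^\infty}\le\liminf_{n\to\infty}\PgSpmb{\phin}.
\end{equation*}

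Combining these two facts,
\begin{equation*}
    F^\phi(\phi^\infty)=\Ltrb{f^\infty}+\lambda\PgSpmb{\phi^\infty}\le\liminf_{n\to\infty}F^\phi(\phin)=\min F^\phi,
\end{equation*}
where the last equality uses \Cref{rem:PhilevelSolutionEquiv} to translate \cref{eq:condition7generalLoss} into the $\phi$-level objective. Hence $\phi^\infty\in\argmin F^\phi$, contradicting $d_{\LtwoS[{\Wkp[\Kw]{1}{\infty}}]}(\phin,\argmin F^\phi)\ge\epsilon$. The main obstacle in executing this plan is the identification of the weak limit $\psi$ with ${\phi^\infty}''/\sqrt{g}$ uniformly in the direction variable $s$: this requires careful use of the Hilbert-space structure on $\LtwoS[{\Hk[\Kw]{2}}]$ together with \Cref{as:truncatedg}\ref{item:reziprokdensityIsSmooth} to ensure the weighting factor behaves well under weak limits; everything else is essentially a direct method of the calculus of variations argument.
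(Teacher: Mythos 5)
Your plan is sound and reaches the paper's conclusion, but by a genuinely different route at the decisive step. You share the paper's skeleton: bound the penalty of $\phin$ by the (near-)minimal objective value, place $\phin$ in the sub-level set and invoke \Cref{le:compactnessoflevelset} to extract a subsequence converging in $\LtwoS[{\Wkp[{\tilde{K}}]{1}{\infty}}]$, set up the contradiction with \eqref{eq:farAwayFromArgMin}-type distance bounds, and pass the loss to the limit via \Cref{le:f_continuous_in_phi,le:LossContinuous}. Where the paper then uses the $M$-strong convexity and $\LtwoS[{\Wkp[{\tilde{K}}]{2}{2}}]$-continuity of $\PgSpm$ (\Cref{le:PfuncStronglyConvex}) to show the extracted subsequence is Cauchy in $\LtwoS[{\Wkp[{\tilde{K}}]{2}{2}}]$, so that the limit is attained in the strong norm, lies in $\T$ (closed w.r.t.\ that norm) and attains $\min F^\phi$ by \emph{continuity} of both terms, you instead run the direct method: weak compactness of the bounded second derivatives, identification of the weak limit with ${\phi^\infty_s}''$ by testing (boundedness of the weight away from $0$ and $\infty$ on $\supp(\gs)$ suffices here; uniform continuity of $1/g$ is not really needed), and weak lower semicontinuity of the quadratic penalty, giving $F^\phi(\phi^\infty)\le\min F^\phi$ and hence optimality since $\phi^\infty$ is admissible. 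Your route is more elementary (no strong-convexity constant, no Cauchy argument), while the paper's buys strong $\LtwoS[{\Wkp[{\tilde{K}}]{2}{2}}]$-convergence, which makes membership of the limit in $\T$ and passage to the limit in $\PgSpm$ immediate; in your version you should add a word on why $\phi^\infty\in\T$ (the support constraint on ${\phi_s}''$ and the vanishing conditions to the left of $\Cgsl$ are closed under the strong and weak limits you take). Finally, note that both your opening bound $\lambda\PgSpmb{\phin}\le\FlSmb{\fn}+o(1)$ and your closing step $\liminf_n F^\phi(\phin)=\min F^\phi$ silently upgrade the $f$-level hypothesis \eqref{eq:condition7generalLoss} to the $\phi$-level objective, i.e.\ treat $\phin$ as an essentially optimal representation of $\fn$ (in general $F^\phi(\phin)\ge\FlSmb{\fn}$ only); the paper's own proof makes exactly the same identification, so this is not a gap you introduced, but it is the one place where both arguments depend on that reading of the hypothesis.
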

\begin{proof}
% Throughout this proof, we define for any $\phi\in\T$	\begin{equation}\label{eq:PgSpGeneralLoss}
% 	\PgSpmb{\phi}:=\int_{\Sd}\int_{\supp (\gs)} \frac{\twonorm[{{\phi_{s}(r)}^{''} }]^2}{\gsr{s}} \dx[r]\dx[s].	
% 	\end{equation}
	
In order to show that \cref{eq:statement7generalLoss} holds true, assume on the contrary that
$\exists\epsilon>0:\forall n_0\in\N:\exists n>n_0$
	such that
	\begin{equation}\label{eq:farAwayFromArgMin}
	     \forall\phi^*\in\argmin F^\phi:\LtwonormonSd[{\phin-\phi^* }]{\Woi[\tilde{K}]}>\epsilon.
	\end{equation}
This implies the existence of a sub-sequence $\phi^{\tilde{n}}$ of which every element fulfills \meqref{eq:farAwayFromArgMin}. 
%We further denote by $\Omega_\text{\meqref{eq:farAwayFromArgMinINProbability}}(n)$ the set 
%\[\Omega_\text{\meqref{eq:farAwayFromArgMinINProbability}}(n):=\Set{\omega\in\Omega|\forall\flpmt\in\argmin_{\T}\Flpm:\sobnorm[{\left(\fnp,\fnm\right)-\flpmt }]\ge\epsilon}.\]
%Moreover, we define \[\Om_\text{\meqref{eq:condition7}}(n):=\Set{\omega\in\Omega| \left|\Flpm\fnpmt -\min_{\T}\Flpm\right|<\epsilon_\text{\meqref{eq:condition7}}}.\]
\Cref{eq:condition7} implies that for every $\epsilon_\text{\meqref{eq:condition7generalLoss}}>0$, $|\FlSmb{f^n}-\min\FlSm|<\epsilon_\text{\meqref{eq:condition7generalLoss}}$ for all $n$ large enough, with $f^n$ again given via $\link\circ\fn=\int_{s\in\Sd}\phin_s(\skprod{s}{\cdot})\dx[s]$. Which is equivalent to $|F^\phi(\phi^n)-\min F^\phi|<\epsilon_\text{\meqref{eq:condition7generalLoss}}$ for all $n$ large enough.
%Since $\rho_\text{\meqref{eq:condition7}}$ can be chosen arbitrarily close to one and thus $\PP[\Om_\text{\meqref{eq:condition7}}(\tilde{n})]+\PP[\Omega_\text{\meqref{eq:farAwayFromArgMinINProbability}}(\tilde{n})]=\rho_\text{\meqref{eq:condition7}}+\rho>1$ for $n$ large enough. Therefore, by the axiom of choice, we may consider in what follows an element $\omega(\tilde{n})\in\Om_\text{\meqref{eq:condition7}}(\tilde{n})\cap \Omega_\text{\meqref{eq:farAwayFromArgMinINProbability}}(\tilde{n})\neq\emptyset$.
Therefore, also $|F^\phi(\phi^{\tilde{n}})-\min F^\phi|<\epsilon_\text{\meqref{eq:condition7generalLoss}}$ for all $\tilde{n}$ large enough. %(Here, analogously, $f^{\tilde{n}}$ is given via $\link\circ f^{\tilde{n}}=\int_{s\in\Sd}\phi^{\tilde{n}}_s(\skprod{s}{\cdot})\dx[s]$.)
Define the set
$\mathcal{K}:=\{\phi\in \Tw:\PgSpmb{\phi}\le \frac{L(0)+{\epsilon}_\text{\meqref{eq:condition7generalLoss}}}{\lambda}\}$.
%$\mathcal{K}:=\{(f^+,f^-)\in\T:\Pgpm(f^+,f^-)\le \frac{L(0)+{\epsilon}_\text{\meqref{eq:condition7}}}{\lambda}\}$.
It then holds that for large $\tilde{n}$
    that $\phi^{\tilde{n}}\in\mathcal{K}$.
     
     Since $\mathcal{K}$ is  sequentially compact w.r.t.~$\LtwoS$ by \Cref{le:compactnessoflevelset}, there exists a sub-sequence $\phi^{\tilde{\tilde{n}}}$ of $\phi^{\tilde{n}}$ which converges w.r.t.~$\LtwoS$ to a limit $\tilde{\phi}$ in $\tilde{\mathcal{K}}$. 
     By \Cref{le:LossContinuous} and \Cref{le:f_continuous_in_phi} $\Ltr$ is continuous w.r.t.\ $\LtwoS[{\Wkp[{\tilde{K}}]{1}{\infty}}]$ and we define
     \[\lim_{\tilde{\tilde{n}}\to\infty}\Ltr\left(f^{\tilde{\tilde{n}}}\right)=\Ltr\left(\tilde{f}\right)=:L^*,\]
     (Here, analogously, $f^{\tilde\tilde{n}}$ and $\tilde{f}$ are given via $\link\circ f^{\tilde\tilde{n}}=\int_{s\in\Sd}\phi^{\tilde\tilde{n}}_s(\skprod{s}{\cdot})\dx[s]$ and $\link\circ \tilde{f}=\int_{s\in\Sd}\tilde\phi_s(\skprod{s}{\cdot})\dx[s]$.) and
     \[\lim_{\tilde{\tilde{n}}\to\infty}\PgSp\left(\phi^{\tilde{\tilde{n}}}\right)=\min_{\T} F^\phi-L^*=:P^*.\]
     We now proceed to show that $\phi^{\tilde{\tilde{n}}}$ is a $\LtwoS[{\Wkp[{\tilde{K}}]{2}{2}}]$-Cauchy sequence.
     By continuity of $\Ltr$ there exists a $\delta>0$ such that
     \[\Ltr\left(\frac{f^{\tilde{\tilde{n}}}+f^{\tilde{\tilde{m}}}}{2}\right)<L^*+\epsilon_L,\]
     for $\tilde{\tilde{n}},\tilde{\tilde{m}}$ large enough such that 
     \[\left\|\frac{\phi^{\tilde{\tilde{n}}}+\phi^{\tilde{\tilde{m}}}}{2}-\tilde{\phi} \right\|_{\LtwoS[{\Wkp[{\tilde{K}}]{1}{\infty}}]}<\delta_L.\]
     By $M$-strong convexity\footnote{In \meqref{eq:proofStrongyConvexFInalConstant}, one can see the explicit form of $M:=\gbar/\left(\left(|\tilde{K}|^4+|\tilde{K}|^2+1\right)\max_{x\in\supp(g)}g(x)\right)$.} of $\PgSpm$ (see \Cref{le:PfuncStronglyConvex}) we have
     \begin{align*}
     \PgSpm\left(\frac{\phi^{\tilde{\tilde{n}}}+\phi^{\tilde{\tilde{m}}}}{2}\right)&\leq\frac{1}{2}\PgSpm\left(\phi^{\tilde{\tilde{n}}}\right)+\frac{1}{2}\PgSpm\left(\phi^{\tilde{\tilde{m}}}\right)-\frac{M}{8}\left\|\phi^{\tilde{\tilde{n}}}-\phi^{\tilde{\tilde{m}}}\right\|_{\LtwoS[{\Wkp[{\tilde{K}}]{2}{2}}]}\\
     &<\frac{P^*+\epsilon_P}{2}+\frac{P^*+\epsilon_P}{2}-\frac{M}{8}\left\|\phi^{\tilde{\tilde{n}}}-\phi^{\tilde{\tilde{m}}}\right\|_{\LtwoS[{\Wkp[{\tilde{K}}]{2}{2}}]},
     \end{align*}
     where in the last inequality we used $\LtwoS[{\Wkp[{\tilde{K}}]{2}{2}}]$-continuity of $\PgSm$ (see \Cref{le:PfuncStronglyConvex}) (and $\tilde{\tilde{n}},\tilde{\tilde{m}}$ again large enough).
Combining the inequalities above, we obtain
\begin{align*}
    &\Ltr\left(\frac{f^{\tilde{\tilde{n}}}+f^{\tilde{\tilde{m}}}}{2}\right)+\lambda \PgSpm\left(\frac{\phi^{\tilde{\tilde{n}}}+\phi^{\tilde{\tilde{m}}}}{2}\right)<L^*+\lambda P^*+\epsilon_L+\lambda\frac{\epsilon_P+\epsilon_P}{2}-\lambda\frac{M}{8}\left\|\phi^{\tilde{\tilde{n}}}-\phi^{\tilde{\tilde{m}}}\right\|_{\LtwoS[{\Wkp[{\tilde{K}}]{2}{2}}]}.
\end{align*}
Thus, 
\[\left\|\phi^{\tilde{\tilde{n}}}-\phi^{\tilde{\tilde{m}}}\right\|_{\LtwoS[{\Wkp[{\tilde{K}}]{2}{2}}]}<\frac{8}{M}\left(\frac{1}{\lambda}\epsilon_L+\epsilon_P\right)\]
for $\tilde{\tilde{n}},\tilde{\tilde{m}}$ large enough in order not to violate optimality. Thus, $\phi^{\tilde{\tilde{n}}}$ is a $\LtwoS[{\Wkp[{\tilde{K}}]{2}{2}}]$-Cauchy sequence, since we can choose $\epsilon_L$ and $\epsilon_P$ arbitrarily small.

Since $\phi^{\tilde{\tilde{n}}}$ is a $\LtwoS[{\Wkp[{\tilde{K}}]{2}{2}}]$-Cauchy sequence, it converges to $\tilde{\phi}$ in $\LtwoS[{\Wkp[{\tilde{K}}]{2}{2}}]$.
Note that $\left(\tilde{f}_+,\tilde{f}_-\right)\in\T$, since $\T$ is closed w.r.t.~$\Wkp[{\tilde{K}}]{2}{2}$.
This, together with $\Wkp[{\tilde{K}}]{2}{2}$-continuity of $\Ltr$ and $\PgSpm$ implies
\[\min_{\T}F^\phi =\lim_{\tilde{\tilde{n}}\to\infty}\Ltr\left(f^{\tilde{\tilde{n}}}\right)+\lambda\PgSpm\left(\phi^{\tilde{\tilde{n}}}\right)=F^\phi\left(\tilde{\phi}\right),\]
i.e., $\tilde{\phi}\in\argmin_{\T}F^\phi$.
This however is a contradiction to \cref{eq:farAwayFromArgMin} for $\tilde{\tilde{n}}$ large enough.
\end{proof}

\begin{lemma}\label{le:PfuncStronglyConvex}

 Let $\tilde{K}\subset\R$ be a compact interval and consider the Banach space \newline$(\T,||\cdot||_{\LtwoS[{\Wkp[{\tilde{K}}]{2}{2}}]})$. The penalty term of $F^\phi$ from \Cref{def:Philevelproblem}, given by $\PgSpm:\T\to\Rpz$,
	\begin{equation}
	\PgSpm\left(\phi\right)=\gbar \int_{\Sd}\int_{\supp (\gs)} \frac{\twonorm[{{\phi_{s}(r)}^{''} }]^2}{\gsr{s}} \dx[r]\dx[s]
	\end{equation}
	is strongly convex w.r.t. $||\cdot||_{\LtwoS[{\Wkp[{\tilde{K}}]{2}{2}}]}$. Moreover, if $\supp(g)\subset \tilde{K}$, then $\PgSpm$ is continuous w.r.t. $||\cdot||_{\LtwoS[{\Wkp[{\tilde{K}}]{2}{2}}]}$.
\end{lemma}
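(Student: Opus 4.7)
The idea is to exploit that $\PgSpm$ is a positive quadratic form on $\T$, so that strong convexity reduces to a coercivity estimate via the parallelogram identity. Writing
\begin{equation*}
q(\phi,\psi):=\gbar\int_{\Sd}\int_{\supp(\gs)}\skprod{\phi_s''(r)}{\psi_s''(r)}/\gsr{s}\,dr\,ds,
\end{equation*}
so that $\PgSpm(\phi)=q(\phi,\phi)$, one has
\begin{equation*}
\PgSpm\!\left(\tfrac{\phi+\psi}{2}\right)=\tfrac{1}{2}\PgSpm(\phi)+\tfrac{1}{2}\PgSpm(\psi)-\tfrac{1}{4}\PgSpm(\phi-\psi),
\end{equation*}
so it suffices to show $\PgSpm(u)\geq (M/2)\,\|u\|^2_{\LtwoS[{\Wkp[{\tilde{K}}]{2}{2}}]}$ for every $u\in\T$ and a suitable $M>0$.

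The coercivity bound proceeds in two stages. First, using the uniform upper bound $M_g:=\sup_{s\in\Sd,\,r\in\supp(\gs)}\gsr{s}<\infty$ (finite by \Cref{as:truncatedg}\ref{item:densityIsSmooth},\ref{item:vkDistrSmooth},\ref{item:vk:finiteSecondMoment}) together with the fact that every $u\in\T$ satisfies $\supp(u_s'')\subseteq\supp(\gs)\subseteq\tilde{K}$, one obtains
\begin{equation*}
\PgSpm(u)\;\geq\;\frac{\gbar}{M_g}\int_{\Sd}\int_{\tilde{K}}\twonorm[u_s''(r)]^2\,dr\,ds.
\end{equation*}
Second, the lower-order parts of the $W^{2,2}$-norm are recovered from Poincar\'e inequalities: since $u\in\T$ satisfies $\lim_{r\to-\infty}u_s(r)=\lim_{r\to-\infty}u_s'(r)=0$, \Cref{le:poincare}\eqref{item:PoincareL2L2} applied once to $u_s'$ and once to $u_s$ yields $\|u'\|^2_{\LtwoS[{\Ltwo[{\tilde{K}}]}]}\leq|\tilde{K}|^2\,\|u''\|^2_{\LtwoS[{\Ltwo[{\tilde{K}}]}]}$ and $\|u\|^2_{\LtwoS[{\Ltwo[{\tilde{K}}]}]}\leq|\tilde{K}|^4\,\|u''\|^2_{\LtwoS[{\Ltwo[{\tilde{K}}]}]}$. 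Summing the three contributions,
\begin{equation*}
\|u\|^2_{\LtwoS[{\Wkp[{\tilde{K}}]{2}{2}}]}\;\leq\;\bigl(|\tilde{K}|^4+|\tilde{K}|^2+1\bigr)\,\|u''\|^2_{\LtwoS[{\Ltwo[{\tilde{K}}]}]},
\end{equation*}
which combined with the previous display delivers strong convexity with the exact constant $M=\gbar/\bigl(M_g\,(|\tilde{K}|^4+|\tilde{K}|^2+1)\bigr)$ anticipated in the footnote to \Cref{le:7}.

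For the continuity claim (under $\supp(g)\subset\tilde{K}$), I polarise once more---$\PgSpm(\phi)-\PgSpm(\psi)=q(\phi-\psi,\phi+\psi)$---and apply Cauchy--Schwarz on the $1/\gs$-weighted inner product. With the strict lower bound $m_g:=\inf_{s\in\Sd,\,r\in\supp(\gs)}\gsr{s}>0$---granted by \Cref{as:truncatedg}\ref{item:reziprokdensityIsSmooth}, since uniform continuity of $1/\gxis$ on its support forces $\gxis$ to stay away from zero there---this yields
\begin{equation*}
|\PgSpm(\phi)-\PgSpm(\psi)|\;\leq\;\frac{\gbar}{m_g}\bigl(\|\phi''\|+\|\psi''\|\bigr)\,\|\phi''-\psi''\|,
\end{equation*}
with all three norms taken in $\LtwoS[{\Ltwo[{\tilde{K}}]}]$; this tends to zero as $\psi\to\phi$ in $\LtwoS[{\Wkp[{\tilde{K}}]{2}{2}}]$. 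The only genuine obstacle is checking that $M_g$ and $m_g$ can indeed be taken uniformly in $s\in\Sd$---i.e.\ propagating the pointwise regularity of $\gxis$, $1/\gxis$ and $\Eco{\twonorm[v]^2}{\xi=r,s}$ into joint uniform control on $\supp(\gs)$ via the various clauses of \Cref{as:truncatedg}---and this step is essentially bookkeeping.
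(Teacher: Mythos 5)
Your proposal is correct and follows essentially the same route as the paper's proof: the quadratic-form identity for $\PgSpm$, coercivity via the upper bound on $g$ together with two applications of the Poincar\'e-type \Cref{le:poincare} (yielding the same constant $\gbar/\bigl((|\tilde{K}|^4+|\tilde{K}|^2+1)\max_{x\in\supp(g)}g(x)\bigr)$), and, for continuity, polarization plus Cauchy--Schwarz in the $1/g$-weighted inner product combined with the positive lower bound of $g$ on its support. The only differences (midpoint versus general-$t$ formulation, and your explicit remark on uniformity of the bounds on $g$ in $s$) are cosmetic and do not affect the argument.
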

\begin{proof}
    Let $\tilde{K}\subset\R$ be a compact interval with diameter $|\tilde{K}|$, $\phi^1,\phi^2\in\T$ and define
	\begin{equation}
	u:=\phi^1-\phi^2\in\T
	\end{equation} as the component-wise difference. Note that $t\phi^1+(1-t)\phi^2\in\T$ for every $t\in[0,1]$. Since $\PgSpm$ is a quadratic form, we get for any $t\in[0,1]$ with the help of some algebraic calculations
	\begin{equation}\label{eq:stronglyconvexP}
	\PgSpmb{t\phi^1+(1-t)\phi^2}=
	t\PgSpmb{\phi^1}+(1-t)\PgSpmb{\phi^2}-t(1-t)\PgSpmb{u}.
	\end{equation}
	Moreover, we have
	\begin{align}
	\LtwonormonSd[{u}^{''}]{ \Ltwo[\tilde{K}]}^2\leq\frac{max_{x\in\supp(g)}g(x)}{ \gbar}\PgSpmb{u},
	\end{align}
	since $u\in\T$ has zero second derivative outside $\supp(g)\subset\tilde{K}$.
	Applying the Poincar\'e-typed \Cref{le:poincare} twice (item \eqref{item:PoincareL2L2} first on ${u}^{'}$ then on ${u}^{}$) yields 
	\begin{align}
	\LtwonormonSd[u^{'}]{\Ltwo[\tilde{K}]}^2&\le |\tilde{K}|^2\frac{max_{x\in\supp(g)}g(x)}{ \gbar}\PgSpmb{u},\\
	\LtwonormonSd[u]{\Ltwo[\tilde{K}]}^2&\le |\tilde{K}|^4\frac{max_{x\in\supp(g)}g(x)}{ \gbar}\PgSpmb{u},
	\end{align}
	as $u\in\T$ satisfies the boundary conditions at $\Cgl$ (cp. \Cref{rem:compactSupp}) because of the compact support of $g$.
% 	Analogously,
% 		\begin{align}
% 	\Ltwonorm[u^{'}]^2&\le |K|^2\frac{max_{x\in\supp(g)}g(x)}{ 2g(0)}\PgSpmb{u},\\
% 	\Ltwonorm[u]^2&\le |K|^4\frac{max_{x\in\supp(g)}g(x)}{ 2g(0)}\PgSpmb{u},
% 	\end{align}
   Hence
	\begin{align}
% 	\Ltwonorm[u]^2&\le |K|^4\frac{max_{x\in\supp(g)}g(x)}{ g(0)}\PgSpmb{u},\\
% 		\left\|u\right\|_{\Wkp[{\tilde{K}}]{1}{2}}^2&\le \left(|K|^4+|K|^2\right)\frac{max_{x\in\supp(g)}g(x)}{ g(0)}\PgSpmb{u},\\
	\left\|u\right\|_{\LtwoS[{\Wkp[{\tilde{K}}]{2}{2}}]}^2&\le \left(|\tilde{K}|^4+|\tilde{K}|^2+1\right)\frac{max_{x\in\supp(g)}g(x)}{ \gbar}\PgSpmb{u}.\label{eq:W22leP}
	\end{align}
	Combining this with \meqref{eq:stronglyconvexP} results in
		\begin{align}\label{eq:proofStrongyConvexFInalConstant}
	\PgSpmb{t\phi^1+(1-t)\phi^2}
	\leq& 
	t\PgSpmb{\phi^1}+(1-t)\PgSpmb{\phi^2}\\
	&
	-\frac{t(1-t) \gbar}{\left(|\tilde{K}|^4+|\tilde{K}|^2+1\right)\max_{x\in\supp(g)}g(x)}\left\|u\right\|_{\LtwoS[{\Wkp[{\tilde{K}}]{2}{2}}]}^2.
	\end{align}
	Thus, $\PgSpm$ is strongly convex with parameter $\gbar/\left(|\tilde{K}|^4\max_{x\in\supp(g)}g(x)\right)$.
	To show continuity, note that we have
	\begin{align}\label{eq:upperboundonPOU}
	\PgSpmb{u}&\leq\frac{ \gbar}{\min_{x\in\supp(g)}g(x)}\LtwonormonSd[{u}]{\Wkp[{\tilde{K}}]{2}{2}}^2
% 	&\leq \frac{ 2g(0)}{\min_{x\in\supp(g)}g(x)}||u||_{\LtwoS[{\Wkp[{\tilde{K}}]{2}{2}}]}^2,
	\end{align}
	since $u\in\T$ has zero second derivative outside $\supp(g)$ and $\supp(g)\subset \tilde{K}$. Let $\phi^1\in\T$. Then, for any $\phi^2\in\T$
	\begin{subequations}
	\begin{align}
	    |\PgSpmb{\phi^2}-\PgSpmb{\phi^1}|&=\gbar 
	\int_{\Sd}\int_{\supp (g)} \frac{\left( {\phi_s^{2^{''}}-\phi_s^{1^{''}}} \right)\left( {\phi_s^{2^{''}}+\phi_s^{1^{''}}} \right)(x)}{g(x)} \,dx\, ds
	\\
	&\le \sqrt{\PgSpm\left(\phi^2-\phi^1\right)}\sqrt{\PgSpm\left(\phi^2+\phi^1\right)}\\
	&\le \sqrt{\PgSpmb{u}}\sqrt{8\PgSpmb{\phi^1}+2\PgSpmb{u}}\\
	&\le \sqrt{\frac{ 8\gbar\PgSpmb{\phi^1}}{\min_{x\in\supp(g)}g(x)}}\LtwonormonSd[{u}]{\Wkp[{\tilde{K}}]{2}{2}}\\
	&\quad+\frac{ 2^{\frac{1}{2}}\gbar}{\min_{x\in\supp(g)}g(x)}\LtwonormonSd[{u}]{\Wkp[{\tilde{K}}]{2}{2}}^2\\
	&=c_{\phi^1}\LtwonormonSd[{u}]{\Wkp[{\tilde{K}}]{2}{2}}+c\LtwonormonSd[{u}]{\Wkp[{\tilde{K}}]{2}{2}}^2
	\end{align}
	\end{subequations}
	with positive constants $c_{\phi^1}$ and $ c$, where we employed the Cauchy-Schwarz inequality and \meqref{eq:upperboundonPOU}.
	Thus, for any $\epsilon>0$ we achieve $	|\PgSpmb{\phi^2}-\PgSpmb{\phi^1}|<\epsilon$ for every $\phi^2$ such that 
	\[||\phi^2-\phi^1||_{\LtwoS[{\Wkp[{\tilde{K}}]{2}{2}}]}<\delta\]
	with $\delta := \sqrt{\frac{c_{\phi^1}^2}{4c^2}+\frac{\epsilon}{c}}-\frac{c_{\phi^1}}{2c}$. Hence we have shown continuity at an arbitrary $\phi^1\in\T$.

\end{proof}

\subsubsection{Proof of \Cref{thm:ridgeToaIGAM}}\label{subsubsec:proof:ridgeToaIGAM}
\begin{definition}[FD $\ell_2$-regularized RSN]\label{def:FDridgeNet}
 Let %$\allIndi{N}{i}: \xtr_i\in\Rdin, \ytr_i \in \Rdout$, 
 $\lw \in \Rp$, $n\in\mathbb{N}$ and let $\RNRo[\wR]$ be the corresponding \ridgeRSN\ of width $n$, with first-layer parameters $(b_k,v_k)\omb\in\R^{\din+1}$ and kink positions respectively kink directions $\xi_k\omb$ and $s_k\omb$, $k=1,\ldots,n$. Furthermore, consider a set $\Shd\subset\Sd$ of $|\Shd|=n$ finite directions and denote by $\mathcal{U}:=\{U(\sh)\}_{\sh\in\Shd}$ a collection of disjoint environments such that $\dot\bigcup_{\sh\in\Shd}U(\sh)=\Sd$. Define for every $k=1,\ldots,n$ the vectors $\vw_k\in\Rdin$ as
 \[\vw_k\omb:=\sum_{\sh\in\Shd}\sh\twonorm[v_k]\ind_{\Ush}(s_k)\omb.\]
 The \emph{finite direction $\ell_2$-regularized \RSN\ (FD $\ell_2$-regularized RSN)} \FDRNR\ is then defined as the \ridgeRSN\ obtained when the first-layer parameters are given by $(b_k,\vw_k)\omb, k=1,\ldots,n$, i.e.
 \begin{equation}
     \FDRNRo(x):=\linkinv\left(\sum_{k=1}^{n}\wwstar_k\omb\,\sigma\left({b_k\omb}+\sum_{j=1}^{d}{\vw_{k,j}\omb}x_j\right) \right)\quad \faxdg\mycomma
 \end{equation}
 	with $\wwstar$ such that
		\begin{equation}\label{eq:wwstar}
		\wwstar\omb :\in \argmin_{w\in\R^{\dout\times n}}\underbrace{\Ltr\left( \FDRNRo\right) %\overbrace{\sum_{i=1}^{N}\ltri(\FDRNRo(\xtr_i))}^{\Ltr\left( \FDRNRo\right) }
		+\lw||w||_2^2}_{\Fnb{\FDRNRo}}\mydot
		\end{equation}
\end{definition}

\begin{remark}\label{rem:FDRSNvnorm}
    The \FDridgeRSN\ w.r.t. an RSN and a certain partition $\U$ of $\Sd$ is obtained by shifting the first-layer parameters $v_{k,\cdot}\omb\in\Rdin$, $k=1,\ldots,n$ towards the nearest mid-points $\vw_{k,\cdot}\in\Rdin$ of $\U$ and optimizing the terminal layer parameters of the resulting network via $\ell_2$-regularized regression.
    In particular, due to
    \begin{align*}
    \twonorm[\vw_k]&=\twonorm[{\sum_{\sh\in\Shd}\sh\twonorm[v_k]\ind_{\Ush}(s_k)}]=\sum_{\sh\in\Shd}\underbrace{\twonorm[\sh]}_{=1}\twonorm[v_k]\ind_{\Ush}(s_k)\\
    &=\twonorm[v_k]\sum_{\sh\in\Shd}\ind_{\Ush}(s_k)=\twonorm[v_k],\quad k=1,\ldots,n,
\end{align*}
    the norms of the first layer parameters remain unchanged.
\end{remark}
\begin{proof}[\hypertarget{proof:thm:ridgeToaIGAM}{Proof of \Cref{thm:ridgeToaIGAM}}]
With the notation from \Cref{le:aGAMtoaIGAM,le:A} we have
\begin{align*}
    % \sobnormmulti[ \RNR -\flS ]\le& \sobnormmulti[\RNR-\FDRNR]\\
    % &+\sobnormmulti[\FDRNR-\flSh]\\
    % &+\sobnormmulti[\flSh-\flS].
     d_{\Wkp[K,\Rdout]{1}{\infty}}\left( \RNR, \argmin\FlSm\right) \le&   d_{\Wkp[K,\Rdout]{1}{\infty}}\left( \RNR,\FDRNR\right)\\
    &+d_{\Wkp[K,\Rdout]{1}{\infty}}\left(\FDRNR,\argmin\FlShm\right)\\
    &+d_{\Wkp[K,\Rdout]{1}{\infty}}\left(\argmin \FlShm, \argmin\FlSm\right).
\end{align*}
Moreover by \Cref{le:aGAMtoaIGAM} and \Cref{le:A} respectively, it holds that
\begin{align*}
\nlim & d_{\Wkp[K,\Rdout]{1}{\infty}}\left(\argmin \FlShm, \argmin\FlSm\right)=0, \text{ and}\\
    \plim\ & d_{\Wkp[K,\Rdout]{1}{\infty}}\left( \RNR,\FDRNR\right)=0.\\ %\plim\sobnormmulti[\RNR-\FDRNR]=0.\\
    %\nlim &\sobnormmulti[\flSh-\flS]=0.
\end{align*}
Applying \cite[Theorem 3.9]{ImplRegPart1V3} for finitely many directions $\tilde{s}_k$, $k=1,\ldots,n$ (and multiple outputs $\dout$) yields 
\begin{align*}
   % \nlim \sobnormmulti[\FDRNR-\flSh]=0.
   	\plim d_{\Wkp[K,\Rdout]{1}{\infty}}\left(\FDRNR,
	\argmin\FlSm\right) =0.
\end{align*}
Therefore, the statement \eqref{eq:ridgeToaIGAM} follows.
\end{proof}

\begin{lemma}[A]\label{le:A}
%Let $\allIndi{N}{i}: \xtr_i\in\Rdin, \ytr_i \in \Rdout$, $\lw \in \Rp$, $n\in\mathbb{N}$ and 
Let $\RNRo=\RNwo[\wR]$ and $\FDRNRo=\FDRNwo[\wwstar]$ be an \ridgeRSN\ respectively an \FDridgeRSN\ of width $n$.
    We further denote the maximal distance of the partition $\mathcal{U}$ (cp. \Cref{def:FDridgeNet}) w.r.t. the Euclidean distance $d$ by
	    \begin{displaymath}
	    |\mathcal{U}|:=\max_{\sh\in\Shd}\left(\sup_{s\in U(\sh) }d(s,\sh)\right).
	    \end{displaymath}
	    Then, if $|\mathcal{U}|\overset{n\to\infty}{\longrightarrow}0$,
	    we have for any compact $K\subset\Rdin$ that
	    \begin{equation}
	    \plim\sobnormmulti[\RNR-\FDRNR]=0.
	    \end{equation}
\end{lemma}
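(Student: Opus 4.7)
The plan is to compare the two $\ell_2$-regularised objectives
$\Fn(w) := \Ltr(\RNw) + \lw\|w\|_2^2$ and $\tilde\Fn(w) := \Ltr(\FDRNw) + \lw\|w\|_2^2$
in parameter space, exploiting that the transition $v_k \mapsto \tilde v_k$ preserves norms (\Cref{rem:FDRSNvnorm}) and only perturbs directions by at most $|\mathcal{U}|$. First I would establish the a priori weight bound by testing $w = 0$ against optimality: both $\wR$ and $\wwstar$ lie in the $\ell_2$-ball of radius $r := \sqrt{\Ltr(\linkinv(0))/\lw}$, which with $\lw = \lambda n\gbar$ is of order $O(n^{-1/2})$. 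This boundedness is what makes the subsequent perturbation analysis work.

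Next I would prove the uniform feature-map perturbation estimate: for any deterministic $w$ with $\|w\|_2 \le r$,
$\sobnormmulti[\RNw - \FDRNw] = o_\PP(1)$ as $|\mathcal{U}| \to 0$ and $n \to \infty$. The sup-norm part is easy: $|\skprod{v_k}{x} - \skprod{\tilde v_k}{x}| \le |\mathcal{U}|\|v_k\|\,\mathrm{diam}(K)$, and Cauchy--Schwarz combined with the law of large numbers for $n^{-1}\sum\|v_k\|^2 \to \E[\|v_k\|^2]<\infty$ (\Cref{as:truncatedg}\ref{item:vk:finiteSecondMoment}) controls the sum. The hard part is the first derivative: the pointwise discrepancy $v_k\ind_{\skprod{v_k}{x}+b_k>0} - \tilde v_k\ind_{\skprod{\tilde v_k}{x}+b_k>0}$ splits into a smooth term of size $O(|\mathcal{U}|\|v_k\|)$ plus a kink-mismatch term supported on the thin slab between the two kink hyperplanes, which has $\gxis$-mass $O(|\mathcal{U}|)$ by the density regularity in \Cref{as:truncatedg}\ref{item:densityIsSmooth}, \ref{item:reziprokdensityIsSmooth}, \ref{item:vkDistrSmooth}. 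Hence by Bernstein/Cauchy--Schwarz the expected contribution of all straddling neurons to the sup-norm of the derivative is $O(\sqrt{|\mathcal{U}|})$. Uniformity in $x\in K$ is achieved by covering $K$ with a finite $\epsilon$-net and using the above tail bounds at each net point together with equicontinuity of the partial sums away from the kink slabs.

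Finally, by Lipschitz continuity of $\Ltr$ (\Cref{as:generalloss}\ref{item:lipschitzL}) the perturbation bound upgrades to $\sup_{\|w\|_2\le r}|\Fn(w) - \tilde\Fn(w)| = o_\PP(1)$. Combining this with the cross-optimality inequalities $\tilde\Fn(\wwstar)\le\tilde\Fn(\wR)$ and $\Fn(\wR)\le\Fn(\wwstar)$ yields $\tilde\Fn(\wR) = \tilde\Fn(\wwstar) + o_\PP(1)$. Testing the midpoint $\bar w := (\wR + \wwstar)/2$ against the strongly convex quadratic regulariser gives
\[ \tilde\Fn(\bar w) \le \tfrac{1}{2}\tilde\Fn(\wR) + \tfrac{1}{2}\tilde\Fn(\wwstar) - \tfrac{\lw}{4}\|\wR - \wwstar\|_2^2 + o_\PP(1), \]
where the $o_\PP(1)$ absorbs a Lipschitz correction coming from possible non-convexity of $\Ltr \circ \FDRN$. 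Optimality of $\wwstar$ then forces $\lw\|\wR - \wwstar\|_2^2 = o_\PP(1)$, and since $w\mapsto\FDRNw$ is Lipschitz into $(\Woi[K,\Rdout], \sobnormmulti[\cdot])$ with constant $O(\sqrt{n})$ (the features are uniformly bounded on $K$ and there are $n$ of them), and $\lw = \Theta(n)$, we get $\sobnormmulti[\FDRN_{\wR} - \FDRN_{\wwstar}] = o_\PP(1)$. A triangle inequality with the uniform perturbation bound on $\RN_{\wR} - \FDRN_{\wR}$ closes the argument. The chief obstacle is the uniform-in-$x$ ReLU-kink discrepancy in the derivative, which is where the density regularity of \Cref{as:truncatedg} enters essentially; the remainder is a standard optimality-plus-strong-convexity perturbation.
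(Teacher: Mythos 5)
Your strategy coincides with the paper's: the proof there also passes through the intermediate network $\FDRNw[\wR]$ via the triangle inequality, establishes the a priori ridge bounds and $\Frobnorm[v-\vw]\lessapprox\BigO(\sqrt{n}|\mathcal{U}|)$ (\Cref{le:BoundsOnParameternorms}), controls the feature perturbation at bounded weights including the kink--mismatch ("straddling slab") analysis for the first derivative (\Cref{le:A2}), compares the two ridge objectives through Lipschitz continuity of $\Ltr$ (\cref{eq:lossdiffbound}), and extracts $\plim\Frobnorm[\wR-\wwstar]=0$ from cross-optimality combined with strong convexity of $w\mapsto\Fnb{\FDRNw[w]}$ (\Cref{le:A1}), before converting back to function space. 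Your variants --- a perturbation bound uniform over the ridge ball rather than at $\wR$ only, the midpoint form of strong convexity instead of the quadratic-growth inequality \eqref{eq:strongconvbound} at the minimizer, the simpler test-$w=0$ bound for $\wwstar$ in place of the paper's appeal to Part I, and the explicit balancing of the $\BigO(\sqrt{n})$ Lipschitz constant of $w\mapsto\FDRNw[w]$ against $\lw=\Theta(n)$ (where you are in fact more careful than the paper's bare "continuity of the map" statement) --- do not change the substance of the argument.

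The one step that does not hold as written is your claim that possible non-convexity of $\Ltr\circ\FDRN_{(\cdot)}$ is "absorbed" into the $o_{\PP}(1)$ term as a Lipschitz correction. Without convexity, the defect $\Ltr(\FDRNw[{\bar{w}}])-\tfrac12\Ltr(\FDRNw[\wR])-\tfrac12\Ltr(\FDRNw[\wwstar])$ can only be bounded, via Lipschitz continuity, by a constant times $\sobnormmulti[{\FDRNw[\wR]-\FDRNw[\wwstar]}]\lesssim\sqrt{n}\,\Frobnorm[\wR-\wwstar]$, which is exactly of the critical order: inserting this into your midpoint inequality with $\lw=\Theta(n)$ only yields $\Frobnorm[\wR-\wwstar]=O_{\PP}(n^{-1/2})$ and hence $\sobnormmulti[{\FDRNw[\wR]-\FDRNw[\wwstar]}]=O_{\PP}(1)$, not $o_{\PP}(1)$ --- the term you are trying to control is what you would need to bound the correction, so the argument becomes circular. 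The paper does not attempt such an absorption: its inequality \eqref{eq:strongconvbound} is the quadratic-growth estimate at $\wwstar$, which rests on ($2\lw$-strong) convexity of the full finite-direction objective in $w$, i.e., on convexity of the composite $w\mapsto\Ltr(\FDRNw[w])$. So either justify that convexity in your setting (as the paper implicitly does) or supply a genuinely different mechanism for the non-convex case; a Lipschitz correction alone does not close this step.
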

\begin{proof}

An application of the triangle inequality for the $\Woimultidim$-norm yields

	    \begin{align*}
	    \sobnormmulti[{\RNR-\FDRNR}]&=\sobnormmulti[{\RNw[\wR]-\FDRNw[\wwstar]}]\\
	    &\le \sobnormmulti[{\RNw[\wR]-\FDRNw[\wR]}]\\
	    &\quad+\sobnormmulti[{\FDRNw[\wR]-\FDRNw[\wwstar]}],
	    \end{align*}
with $\FDRNw[\wR]$ as defined in \eqref{eq:FDhilfsRSN}. By \Cref{le:A1}, we have $\plim\Frobnorm[\wR-\wwstar]=0$, and thus, by continuity of the map $(\R^{n\times\dout} ,\Frobnorm )\to\Woimultidim: w\mapsto\FDRNw[w]$,
\begin{equation*}
    \plim\sobnormmulti[{\FDRNw[\wR]-\FDRNw[\wwstar]}]=0.
\end{equation*}
Together with \Cref{le:A2}, the claim follows.
\end{proof}

\begin{lemma}\label{le:BoundsOnParameternorms}
In the setting of \Cref{le:A}, we have 
\begin{align}\label{eq:vdifferencebound}
        \Frobnorm[v-\vw]&\overset{\PP}{\lessapprox}\BigO\left(\sqrt{n}|\mathcal{U}|\right),\\ \label{eq:wRbound}
    \Frobnorm[\wR]&\le \frac{\Ltrb{0}}{\lw}\quad \faog,\\ \label{eq:wwstarbound}
    \Frobnorm[\wwstar]&\overset{{\PP}}{\lessapprox}\frac{2\FlSmb{\flS}}{\lw}.
\end{align}
\end{lemma}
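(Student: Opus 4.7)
My plan to establish the three bounds proceeds bound by bound.

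For \cref{eq:vdifferencebound} I use that $\{\Ush\}_{\sh\in\Shd}$ is a disjoint partition of $\Sd$, so for every $k\in\fromto{n}$ there is a unique $\sh_k\in\Shd$ with $s_k\in U(\sh_k)$. The definition of $\vw_k$ in \Cref{def:FDridgeNet} then collapses to $\vw_k=\twonorm[v_k]\,\sh_k$, and hence $\twonorm[v_k-\vw_k]=\twonorm[v_k]\twonorm[s_k-\sh_k]\le |\mathcal{U}|\,\twonorm[v_k]$. Summing over $k$ yields $\Frobnorm[v-\vw]^2\le |\mathcal{U}|^2\sum_{k=1}^n\twonorm[v_k]^2$, and by the weak law of large numbers together with \Cref{as:truncatedg}\ref{item:vk:finiteSecondMoment} the sum on the right is of order $n$ in probability, which gives the claim.

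For \cref{eq:wRbound} I simply test the optimality of $\wR$ in \Cref{def:ridgeNet} against the zero weight vector. The network collapses to the constant $\linkinv(0)$ when $w=0$, whose loss is the shorthand $\Ltrb{0}$. Optimality of $\wR$ yields $\lw\Frobnorm[\wR]^2\le\Ltrb{\RNwo[\wR]}+\lw\Frobnorm[\wR]^2\le\Ltrb{0}$, from which the claimed bound follows (reading the display as control of $\Frobnorm[\wR]^2$).

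The main work lies in \cref{eq:wwstarbound}. My strategy is to produce a competitor $w'\in\R^{\dout\times n}$ whose \FDridgeRSN{}~$\FDRNw[w']$ approximates an \aGAMs{}~$\flSh$ well in $\sobnormmulti$ and whose squared weight norm satisfies $\lw\Frobnorm[w']^2\overset{\PP}{\lessapprox}\lambda\PgShm(\flSh)$. Such a competitor is built one direction at a time in the one-dimensional result \cite[Theorem 3.9]{ImplRegPart1V3}; I would apply its multi-output variant direction-by-direction along the $|\Shd|=n$ directions inherited from $\mathcal{U}$, using the weighting $\ghs(r)=g(\sh,r)\int_{\Ush}p(s)\,ds$ from \Cref{le:aGAMtoaIGAM}. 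Optimality of $\wwstar$ in \cref{eq:wwstar} then yields $\lw\Frobnorm[\wwstar]^2\le\Ltrb{\FDRNw[w']}+\lw\Frobnorm[w']^2\overset{\PP}{\lessapprox}\FlShmb{\flSh}$, and \Cref{le:aGAMtoaIGAM} replaces the right-hand side by $\FlSmb{\flS}$ in the limit; the spare factor~$2$ absorbs all approximation slack. The main obstacle is the quantitative matching of constants: identifying $\lw\Frobnorm[w']^2$ with $\lambda\PgSm(\flS)$ up to arbitrary slack crucially relies on the scaling $\lw=\lambda n\gbar$ and on the precise weighting $\gsr{s}=p(s)\gxis(r)\Eco{\twonorm[v_k]^2}{\xi_k=r,s_k=s}$ built into \Cref{thm:ridgeToaIGAM}. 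Since this bookkeeping mirrors the aIGAM-to-RSN direction of the forthcoming proof of \Cref{thm:ridgeToaIGAM}, I plan to reuse its estimates.
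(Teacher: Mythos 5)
Your proposal is correct and follows essentially the same route as the paper: the first bound via $\twonorm[v_k-\vw_k]\le|\mathcal{U}|\twonorm[v_k]$ plus the law of large numbers and \Cref{as:truncatedg}\ref{item:vk:finiteSecondMoment}, the second by testing optimality against $w=0$ (the paper's own display has the same squared-norm reading you flag), and the third by bounding $\lw\Frobnorm[\wwstar]^2$ through the FD objective value and passing to $\FlSmb{\flS}$ via the aGAM-to-aIGAM value convergence (\Cref{le:4}), with the factor $2$ absorbing the slack. The only cosmetic difference is that for \cref{eq:wwstarbound} you re-derive the one-sided competitor construction direction-by-direction, whereas the paper simply cites \cite[Lemma A.17]{ImplRegPart1V3} for finitely many directions to get $\Fnb{\FDRNw[\wwstar]}\approx\FlShmb{\flSh}$.
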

\begin{proof}
An application of the law of large numbers yields
\begin{align*}
    \Frobnorm[v-\vw]^2=\sum_{k=1}^n\twonorm[v_{k,\cdot}-\vw_{k,\cdot}]^2=\sum_{k=1}^n\twonorm[s_k-\tilde{s}_k]^2\twonorm[v_k]^2\overset{\PP}{\lessapprox}|\U|^2n\E[{\twonorm[v_k]^2}].
\end{align*}
Since $\E[{\twonorm[v_k]^2}]<\infty$ by \Cref{item:vk:finiteSecondMoment} of \Cref{as:truncatedg}, \eqref{eq:vdifferencebound} follows.
By optimality of $\wR$, we have $\faog$
\begin{align*}
    \lw\Frobnorm[\wR]\le\Fnb{\RNw[\wR]}\le\Fnb{\RNw[0]}=\Ltrb{0},
\end{align*}
and therefore 
\begin{align*}
    \Frobnorm[\wR]\le \frac{\Ltrb{0}}{\lw}.
\end{align*}
Moreover, applying \cite[Lemma A.17]{ImplRegPart1V3} for finitely many directions $\tilde{s}_k$, $k=1,\ldots,n$ yields $\Fnb{\FDRNw[\wwstar]}{\overset{\underset{n\to\infty}{\PP}}{\approx}}\FlShmb{\flSh}$. (Here, the notation $\overset{\underset{n\to\infty}{\PP}}{\approx}$ corresponds to a mathematically proved exact limit in probability.) Using \Cref{le:4}, we then have
\begin{align*}
    \lw\Frobnorm[\wwstar]\le\Fnb{\FDRNw[\wwstar]}{\overset{\underset{n\to\infty}{\PP}}{\approx}}\FlShmb{\flSh}\underset{\text{\Cref{le:4}}}{\overset{n\to\infty}{\approx}}\FlSmb{\flS},
\end{align*}
which implies
\begin{align*}
    \Frobnorm[\wwstar]\overset{{\PP}}{\lessapprox}\frac{2\FlSmb{\flS}}{\lw}.
\end{align*}
\end{proof}
\begin{lemma}[A.1]\label{le:A1}
In the setting of \Cref{le:A}, we have for any $\lw>0$
	    \begin{equation}
	    \plim\Frobnorm[\wR-\wwstar]=0,
	    \end{equation}
\end{lemma}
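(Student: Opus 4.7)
The plan is threefold: first to localize both $\wR$ and $\wwstar$ in a common Frobenius ball whose radius is independent of $n$; second to show that the two $\ell_2$-regularized objectives become uniformly close on this ball in probability as $n\to\infty$; and third to convert this value-closeness into a Frobenius-norm bound on $\wR-\wwstar$ via the strongly convex $\lw\Frobnorm[w]^2$ term. The localization step would be immediate from \Cref{le:BoundsOnParameternorms}: $\Frobnorm[\wR]\le\Ltrb{0}/\lw$ almost surely and $\Frobnorm[\wwstar]\overset{\PP}{\lessapprox}2\FlSmb{\flS}/\lw$, so for any prescribed $\rho\in(0,1)$ both minimizers would lie in a common ball $B_R:=\{w:\Frobnorm[w]\le R\}$ with probability at least $1-\rho$ for $n$ sufficiently large, with $R$ independent of $n$.

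For the uniform closeness of the two objectives, I would start from
\[
\RNw(x)-\FDRNw(x)\;=\;\sum_{k=1}^n w_{\cdot,k}\bigl[\sigma(b_k+v_k\cdot x)-\sigma(b_k+\vw_k\cdot x)\bigr]
\]
and use the $1$-Lipschitz property of $\sigma=\relu$ together with Cauchy--Schwarz to obtain
\[
\sup_{x\in K}\twonorm[{\RNw(x)-\FDRNw(x)}]\;\le\;\Bigl(\sup_{x\in K}\twonorm[x]\Bigr)\,\Frobnorm[w]\,\Frobnorm[v-\vw],
\]
which by \cref{eq:vdifferencebound} is $\overset{\PP}{\lessapprox}\sqrt{n}\,|\mathcal{U}|\,\Frobnorm[w]$. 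An analogous but more delicate estimate for the weak spatial derivative, controlling the disagreement of the ReLU kink-indicators via the smoothness of the conditional densities in \Cref{as:truncatedg}, would then yield $\sup_{w\in B_R}\sobnormmulti[{\RNw-\FDRNw}]\overset{\PP}{\longrightarrow}0$. By the Lipschitz property \Cref{as:generalloss}\ref{item:lipschitzL} of $\Ltr$ on a suitable sub-level set, this would upgrade to $\epsilon_n:=\sup_{w\in B_R}|F_1(w)-F_2(w)|\overset{\PP}{\longrightarrow}0$, with $F_1(w):=\Ltrb{\RNw}+\lw\Frobnorm[w]^2$ and $F_2(w):=\Ltrb{\FDRNw}+\lw\Frobnorm[w]^2$.

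Optimality of $\wR$ for $F_1$ and of $\wwstar$ for $F_2$ would then give the chain
\[
F_2(\wR)\;\le\;F_1(\wR)+\epsilon_n\;\le\;F_1(\wwstar)+\epsilon_n\;\le\;F_2(\wwstar)+2\epsilon_n,
\]
so $\wR$ becomes a $2\epsilon_n$-approximate minimizer of $F_2$. Setting $w_m:=(\wR+\wwstar)/2$, the parallelogram identity $\lw\Frobnorm[w_m]^2=\tfrac{\lw}{2}\bigl(\Frobnorm[\wR]^2+\Frobnorm[\wwstar]^2\bigr)-\tfrac{\lw}{4}\Frobnorm[\wR-\wwstar]^2$ combined with convexity of $w\mapsto\Ltrb{\FDRNw}$ (which holds whenever $\Ltr$ is convex, using linearity of $w\mapsto\FDRNw$) would give the midpoint inequality $F_2(w_m)\le\tfrac12\bigl(F_2(\wR)+F_2(\wwstar)\bigr)-\tfrac{\lw}{4}\Frobnorm[\wR-\wwstar]^2$. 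Combining with $F_2(w_m)\ge F_2(\wwstar)$ would force $\Frobnorm[\wR-\wwstar]^2\le 4\epsilon_n/\lw\overset{\PP}{\longrightarrow}0$.

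The main obstacle is that \Cref{as:generalloss} does not demand $\Ltr$ to be convex, so the midpoint inequality above is not directly valid in full generality. In the non-convex case I would instead select $\wwstar$ as a closest element of $\argmin F_2$ to $\wR$, use compactness of $B_R$ to extract a subsequence along which $(\wR,\wwstar)$ converges in $\R^{\dout\times n}$, and appeal to the argmin-distance framework developed in \Cref{le:7} together with continuity of $w\mapsto\RNw$ and \Cref{le:f_continuous_in_phi} to identify the limits. A secondary technical point is the derivative estimate above: since $\sigma'=\ind_{(0,\infty)}$ is discontinuous, controlling $\nabla(\RNw-\FDRNw)$ would reduce to bounding the Lebesgue measure of the ``disagreement strip'' $\{x\in K:\sgnb{b_k+v_k\cdot x}\ne\sgnb{b_k+\vw_k\cdot x}\}$ in terms of $\twonorm[v_k-\vw_k]$, which is where the smoothness of the conditional distribution in \Cref{as:truncatedg}\ref{item:vkDistrSmooth} would become essential.
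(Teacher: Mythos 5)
Your skeleton — localize $\wR$ and $\wwstar$ via \Cref{le:BoundsOnParameternorms}, compare the two ridge objectives, and convert value-closeness into control of $\Frobnorm[\wR-\wwstar]$ through the quadratic $\lw\Frobnorm[w]^2$ term plus optimality — is the same as the paper's, but the pivotal step as you state it does not hold. You claim $\epsilon_n:=\sup_{w\in B_R}|F_1(w)-F_2(w)|\overset{\PP}{\longrightarrow}0$ for a ball of radius $R$ \emph{independent of $n$}, on the strength of the estimate $\sup_{x\in K}\twonorm[{\RNw[w](x)-\FDRNw[w](x)}]\lesssim\Frobnorm[w]\,\Frobnorm[v-\vw]\overset{\PP}{\lessapprox}\sqrt{n}\,|\mathcal{U}|\,\Frobnorm[w]$. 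But only $|\mathcal{U}|\to0$ is assumed, and $\sqrt{n}|\mathcal{U}|$ need not vanish: any partition of \Sd\ into $n$ cells has $|\mathcal{U}|\gtrsim n^{-1/(\din-1)}$, so $\sqrt{n}|\mathcal{U}|$ is bounded away from zero for $\din\ge 3$ and diverges for $\din\ge 4$. Hence uniform closeness of the objectives on a fixed ball is false in general, and the chain $F_2(\wR)\le F_1(\wR)+\epsilon_n\le F_1(\wwstar)+\epsilon_n\le F_2(\wwstar)+2\epsilon_n$ loses its engine; the same defect hits your derivative estimate, where the kink-disagreement control is only $o(n)$ by the law of large numbers, not $o(1)$ after multiplying by a fixed $R$. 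The repair is exactly what the paper does: abandon uniformity and bound the discrepancy only at the two minimizers, inserting the $n$-dependent localization you already quoted, $\Frobnorm[\wR]\le\Ltrb{0}/\lw$ and $\Frobnorm[\wwstar]\overset{\PP}{\lessapprox}2\FlSmb{\flS}/\lw$ with $\lw=\lambda n\gbar$ (see \cref{eq:lossdiffbound,eq:bound1,eq:bound2}); then the relevant discrepancy is of order $\Frobnorm[v-\vw]/\lw=\BigO\left(|\mathcal{U}|/\sqrt{n}\right)\to0$ and the strong-convexity step closes the argument.

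A secondary issue is your fallback for non-convex $\Ltr$: extracting a convergent subsequence of $(\wR,\wwstar)$ "in $\R^{\dout\times n}$" is not meaningful, since the ambient space changes with $n$ and there is no fixed compactum in which limits live, and \Cref{le:7} concerns the function-space objective $F^\phi$, not the finite-dimensional ridge problems, so it cannot be invoked to identify such limits. The paper does not take this detour: its proof uses the $2\lw$-strong convexity of $w\mapsto\Fnb{\FDRNw[w]}$ together with optimality of $\wwstar$ (\cref{eq:strongconvbound}), i.e.\ the same midpoint/quadratic-growth mechanism you use in the convex case. Your observation that \Cref{as:generalloss} does not guarantee convexity of $w\mapsto\Ltrb{\FDRNw[w]}$ is a fair point about that step, but your proposed remedy does not repair it.
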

with $\wR$ and $\wwstar$ the optimal terminal-layer parameters of \RNR\ and \FDRNR\ respectively. 
\begin{proof}
Since $\wR$ and $\wwstar$ are optimal terminal-layer parameters, both \RNR\ and \FDRNR\ are in the sub-level set $\mathcal{L}:=\{f:\Ltr(f)<\Ltr(0)+\epsilon\}$, for any $\epsilon>0$.
Thus, by \Cref{as:generalloss}, there exists a compact set $K\subset\Rdin$ s.t. $\Ltr$ is  $\Wkp[K,\nu]{1}{p}$-Lipschitz at \RNR\ and \FDRNR. Moreover by \Cref{as:generalloss}, $\Ltr$ is continuous w.r.t. $\Wkp[K,\nu]{1}{p}$. Therefore, for $|\mathcal{U}|$ small enough, i.e., for $n$ large enough, also for the \FDridgeRSN\ $\FDRNw[{\wR}]$ it holds that $\FDRNw[{\wR}]\subset\mathcal{L}$. To see this, note that since
%Since by \Cref{as:lossconcat} for every $i=1,\ldots,N$ $\ltricon\in\C^1$, it is Lipschitz continuous with constant $L^{\ltricon}$. 
 $\sigma(\cdot)=\max\{\cdot,0\}$ is Lipschitz continuous with constant 1, by Cauchy-Schwarz we have for any $w\in\R^{\dout\times n}$
 \begin{align*}
    \sobnormop[{\RNw[{w}]-\FDRNw[{w}]}]
    %=&\left|\sum_{i=1}^N\ltricon\left(\sum_{k=1}^{n}w_{\cdot,k}\sigma\left({b_k}+\sum_{j=1}^{d}{v_{k,j}}\xtri_j\right) \right)-\ltricon\left(\sum_{k=1}^{n}w_{\cdot,k}\sigma\left({b_k}+\sum_{j=1}^{d}{\vw_{k,j}}\xtri_j\right) \right)\right|\\
        %\le&\sum_{i=1}^N L^{\ltricon}\twonorm[{\sum_{k=1}^{n}w_{\cdot,k}\left(\sigma\left({b_k}+\sum_{j=1}^{d}{v_{k,j}}\xtri_j \right)-\sigma\left({b_k}+\sum_{j=1}^{d}{\vw_{k,j}}\xtri_j\right)\right) }]\\
        =&\sobnormop[\sum_{k=1}^{n}w_{\cdot,k}\left(\sigma\left({b_k}+\langle v_{k},\cdot\rangle \right)-\sigma\left({b_k}+\langle\vw_{k},\cdot\rangle\right)\right) ]\\
        \le& \sum_{k=1}^{n}\twonorm[{w_{\cdot,k}}]\sobnormop[\skprod{v_{k,\cdot}-\vw_{k,\cdot}}{\cdot}]\\
         \le& \sum_{k=1}^{n}\twonorm[{w_{\cdot,k}}]2\nu(K)^{\frac{1}{p}}\sum_{j=1}^\dout\sobnorm[\skprod{v_{k,j}-\vw_{k,j}}{\cdot}]\\
        %\le & L^{\ltricon}\sum_{i=1}^N\sum_{k=1}^{n}\twonorm[{w_{\cdot,k}}]\left|\skprod{v_{k,\cdot}-\vw_{k,\cdot}}{\xtri}\right|\le L^{\ltricon}\sum_{i=1}^N\sum_{k=1}^{n}\twonorm[{w_{\cdot,k}}]\twonorm[{v_{k,\cdot}-\vw_{k,\cdot}}]\twonorm[{\xtri}]\\
    %\le & L^{\ltricon}N\max_{i=1,\ldots,N}\twonorm[{\xtri}]\Frobnorm[w]\Frobnorm[{v-\vw}],
        \le & \left(\max_{x\in K}\twonorm[{x}]+1\right)\Frobnorm[w]\Frobnorm[{v-\vw}].
\end{align*}
Now since we assume $\lim_{n\to\infty}|U|=0$, we have that $\lim_{n\to\infty}\Frobnorm[{v-\vw}]=0$, and therefore for any $\delta>0$ there exists $n$ large enough such that \[ \sobnormop[{\RNw[{\wR}]-\FDRNw[{\wR}]}]<\delta \quad \text{and}\quad \sobnormop[{\RNw[{\wwstar}]-\FDRNw[{\wwstar}]}]<\delta.\]
Thus, by continuity of $\Ltr$, there $\FDRNw[\wR]\in\mathcal{L}$ for $n$ large enough. With $\Wkp[K,\nu]{1}{p}$-Lipschitz continuity of $\Ltr$ on $\mathcal{L}$ we further get
% \begin{align*}
%     \left|\Ltrb{{\RNw[{w}]}}-\Ltrb{{\FDRNw[{w}]}}\right|
%     %=&\left|\sum_{i=1}^N\ltricon\left(\sum_{k=1}^{n}w_{\cdot,k}\sigma\left({b_k}+\sum_{j=1}^{d}{v_{k,j}}\xtri_j\right) \right)-\ltricon\left(\sum_{k=1}^{n}w_{\cdot,k}\sigma\left({b_k}+\sum_{j=1}^{d}{\vw_{k,j}}\xtri_j\right) \right)\right|\\
%         %\le&\sum_{i=1}^N L^{\ltricon}\twonorm[{\sum_{k=1}^{n}w_{\cdot,k}\left(\sigma\left({b_k}+\sum_{j=1}^{d}{v_{k,j}}\xtri_j \right)-\sigma\left({b_k}+\sum_{j=1}^{d}{\vw_{k,j}}\xtri_j\right)\right) }]\\
%         \le& C_L\sobnormop[\sum_{k=1}^{n}w_{\cdot,k}\left(\sigma\left({b_k}+\langle v_{k},\cdot\rangle \right)-\sigma\left({b_k}+\langle\vw_{k},\cdot\rangle\right)\right) ]\\
%         \le& C_L\sum_{k=1}^{n}\twonorm[{w_{\cdot,k}}]\sobnormop[\skprod{v_{k,\cdot}-\vw_{k,\cdot}}{\cdot}]\\
%          \le& C_L\sum_{k=1}^{n}\twonorm[{w_{\cdot,k}}]2\nu(K)^{\frac{1}{p}}\sum_{j=1}^\dout\sobnorm[\skprod{v_{k,j}-\vw_{k,j}}{\cdot}]\\
%         %\le & L^{\ltricon}\sum_{i=1}^N\sum_{k=1}^{n}\twonorm[{w_{\cdot,k}}]\left|\skprod{v_{k,\cdot}-\vw_{k,\cdot}}{\xtri}\right|\le L^{\ltricon}\sum_{i=1}^N\sum_{k=1}^{n}\twonorm[{w_{\cdot,k}}]\twonorm[{v_{k,\cdot}-\vw_{k,\cdot}}]\twonorm[{\xtri}]\\
%     %\le & L^{\ltricon}N\max_{i=1,\ldots,N}\twonorm[{\xtri}]\Frobnorm[w]\Frobnorm[{v-\vw}],
%         \le & C_L\left(\max_{x\in K}\twonorm[{x}]+1\right)\Frobnorm[w]\Frobnorm[{v-\vw}],
% \end{align*}
% i.e., 
for a constant $C^K>0$,
\begin{equation}\label{eq:lossdiffbound}
     \left|\Ltrb{{\RNw[{w}]}}-\Ltrb{{\FDRNw[{w}]}}\right|\le C^K\Frobnorm[w]\Frobnorm[{v-\vw}], \text{ for } w\in\{\wR,\wwstar\}.
\end{equation}
These upper bounds in \cref{eq:lossdiffbound} can be further refined (cp. \Cref{le:BoundsOnParameternorms}). Indeed, 
combining equations \eqref{eq:lossdiffbound} and \eqref{eq:wwstarbound} we obtain
\begin{align}\label{eq:bound1}
    \Fnb{\RNw[\wwstar]}\overset{{\PP}}{\lessapprox}\Fnb{\FDRNw[\wwstar]}+C^N\frac{2\FlSmb{\flS}}{\lw}\Frobnorm[{v-\vw}].
\end{align}
Moreover, by $2\lw$-\href{https://en.wikipedia.org/wiki/Convex_function#Strongly_convex_functions}{strong convexity} of $w\mapsto\Fnb{\FDRNw[w]}$, we have due to the optimality of $\wwstar$
\begin{equation}\label{eq:strongconvbound}
    \Fnb{\FDRNw[w]}\ge\Fnb{\FDRNw[\wwstar]}+\lw\Frobnorm[w-\wwstar]^2,\quad \forall w\in\R^{\dout\times n}.
\end{equation}
Therefore, together with equations \eqref{eq:lossdiffbound}, \eqref{eq:wRbound} it follows that
\begin{align}\label{eq:bound2}
    \Fnb{\RNw[\wR]}&\overset{\text{\tiny\eqref{eq:lossdiffbound},\eqref{eq:wRbound}}}{\ge}\Fnb{\FDRNw[\wR]}-C^K\frac{\Ltrb{0}}{\lw}\Frobnorm[{v-\vw}]\nonumber\\
    &\underset{\hphantom{\overset{\text{\tiny\eqref{eq:lossdiffbound},\eqref{eq:wRbound}}}{\ge}}}{\overset{\text{\tiny\eqref{eq:strongconvbound}}}{\ge}} \Fnb{\FDRNw[\wwstar]}+\lw\Frobnorm[\wR-\wwstar]^2-C^K\frac{\Ltrb{0}}{\lw}\Frobnorm[{v-\vw}].
\end{align}
Thus overall, it follows that
\begin{align*}
    0<\Fnb{\RNw[\wwstar]}-\Fnb{\RNw[\wR]}\overset{\text{\tiny\eqref{eq:bound1},\eqref{eq:bound2}}}{\overset{{\PP}}{\lessapprox}}\frac{C^K}{\lw}\Frobnorm[{v-\vw}]\left(\Ltrb{0}+2\FlSmb{\flS}\right)-\lw\Frobnorm[\wR-\wwstar]^2,
\end{align*}
which by \Cref{le:BoundsOnParameternorms} implies
\begin{align*}
 \Frobnorm[\wR-\wwstar]^2\overset{{\PP}}{\lessapprox}\frac{C^K}{\lw}\Frobnorm[{v-\vw}]\left(\Ltrb{0}+2\FlSmb{\flS}\right)\overset{\substack{\text{\tiny\eqref{eq:vdifferencebound}}\\{\PP}}}{\lessapprox} \BigO\left(\frac{|\mathcal{U}|}{\sqrt{n}}\right).
\end{align*}

\end{proof}
\begin{lemma}[A.2]\label{le:A2}
In the setting of \Cref{le:A}, define the RSN $\FDRNw[\wR]$ as
\begin{equation}\label{eq:FDhilfsRSN}
    \FDRNw[\wR](x):= \linkinv\left(\sum_{k=1}^{n}\wR_{\cdot,k}\omb\,\sigma\left({b_k\omb}+\sum_{j=1}^{d}{\vw_{k,j}\omb}x_j\right) \right)\quad \faxdg\mydot
\end{equation}
	    Then,
	    we have for any compact $K\subset\Rdin$ that
	    \begin{equation}
	    \plim\sobnormmulti[{\RNR[\wR]-\FDRNw[\wR]}]=0.
	    \end{equation}
\end{lemma}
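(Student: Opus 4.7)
The plan is to follow essentially the same Lipschitz strategy as in the proof of \Cref{le:A1}, but applied to the fixed common weight $\wR$ and sharpened to the stronger Sobolev norm $\sobnormmulti$. Since $\RNR[\wR]=\RNw[\wR]$ and $\FDRNw[\wR]$ share the same terminal-layer weights and differ only in the first-layer direction vectors ($v_k$ versus $\vw_k$), the whole perturbation is driven by $\Frobnorm[v-\vw]$, which \cref{eq:vdifferencebound} already controls.

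First I would derive an $\sobnormmulti$-analog of the perturbation estimate used in \Cref{le:A1}. Writing $h_w(x):=\sum_{k=1}^n w_{\cdot,k}\,\sigma(b_k+\langle v_k,x\rangle)$ and $\tilde h_w(x)$ with $\vw_k$ in place of $v_k$, Lipschitz continuity of $\linkinv$ reduces everything to estimating $h_w-\tilde h_w$ in $\sobnormmulti$. The sup-norm part follows directly from the $1$-Lipschitz property of $\sigma=\relu$ and Cauchy--Schwarz, giving $\supnormmulti[{\RNw[w]-\FDRNw[w]}]\le C_K\Frobnorm[w]\Frobnorm[v-\vw]$ with a constant $C_K$ depending only on $K$ and $L_\linkinv$. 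The weak derivative splits as
\begin{equation*}
\sigma'(b_k{+}\langle v_k,x\rangle)v_{k,i}-\sigma'(b_k{+}\langle\vw_k,x\rangle)\vw_{k,i}=\sigma'(b_k{+}\langle v_k,x\rangle)(v_{k,i}-\vw_{k,i})+\bigl[\sigma'(b_k{+}\langle v_k,x\rangle)-\sigma'(b_k{+}\langle\vw_k,x\rangle)\bigr]\vw_{k,i},
\end{equation*}
where the first (smooth) piece contributes at most $\Frobnorm[w]\Frobnorm[v-\vw]$ by Cauchy--Schwarz uniformly in $x$, and the second (kink-flip) piece is nonzero only for those $k$ whose kink hyperplane is crossed at $x$ when $s_k$ is rounded to $\sh$.

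Second I would insert the a priori bounds from \Cref{le:BoundsOnParameternorms} and $\lw=\lambda n\gbar$. The smooth contribution becomes
\begin{equation*}
\Frobnorm[\wR]\,\Frobnorm[v-\vw]\;\overset{\PP}{\lessapprox}\;\frac{\Ltrb{0}}{\lambda n\gbar}\cdot\sqrt{n}\,|\mathcal{U}|\,\sqrt{\E[\twonorm[v_k]^2]}\;=\;\BigO\!\left(\frac{|\mathcal{U}|}{\sqrt{n}}\right)\to 0.
\end{equation*}
For the kink-flip piece, \Cref{as:truncatedg}(a,b,d,f) ensure that the conditional density $\gxis$ is bounded, that $\E[\twonorm[v_k]^2\mid\xi_k,s_k]$ is uniformly bounded on its support, and that the probability that a single neuron flips at \emph{some} $x\in K$ is $\BigO(|\mathcal{U}|)$ (an $x$-independent event, since it depends only on whether the kink hyperplane of neuron $k$ lies within Hausdorff distance $\BigO(|\mathcal{U}|)$ of the $s_k$-projection of $K$). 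A LLN-type bound then gives $\sum_{k\text{ flipping somewhere in }K}\twonorm[\vw_k]^2\overset{\PP}{\lessapprox}\BigO(n|\mathcal{U}|)$, and a second Cauchy--Schwarz yields a uniform-in-$x$ estimate $\Frobnorm[\wR]\sqrt{\BigO(n|\mathcal{U}|)}=\BigO(\sqrt{|\mathcal{U}|/n})\to 0$ for the kink-flip derivative. Combining the two establishes $\plim\sobnormmulti[{\RNR[\wR]-\FDRNw[\wR]}]=0$.

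The main obstacle is precisely this uniform-in-$x$ control of the kink-flip term: pointwise in $x$ the count of offending neurons is elementary to bound, but a naive union over a continuum of points $x\in K$ is too loose. The rescuing observation is that the event \enquote{neuron $k$ flips at some $x\in K$} is $x$-independent, so a single concentration/LLN estimate on the total count suffices, and Cauchy--Schwarz transfers this count into the desired $\sobnormmulti$-bound.
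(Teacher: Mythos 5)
Your sup-norm estimate and your splitting of the derivative into a smooth piece (proportional to $v_{k,i}-\vw_{k,i}$, controlled by $\Frobnorm[\wR]\Frobnorm[v-\vw]$) and a kink-flip piece coincide with the paper's proof; the genuine gap is in your uniform-in-$x$ control of the kink-flip piece. The event \enquote{neuron $k$ flips at some $x\in K$} does \emph{not} have probability $\BigO(|\mathcal{U}|)$. For $d\ge2$ and $s_k\neq\tilde{s}_k$, the set of flip points is the wedge $\{x:\sgn(\skprod{s_k}{x}-\xi_k)\neq\sgn(\skprod{\tilde{s}_k}{x}-\xi_k)\}$, and whenever the kink hyperplane $\{x:\skprod{s_k}{x}=\xi_k\}$ meets the interior of $K$ this wedge intersects $K$ (take $x_0\in\operatorname{int}(K)$ on the hyperplane with $\skprod{\tilde{s}_k}{x_0}\neq\xi_k$ and perturb by $\mp\varepsilon s_k$). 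So the flip-somewhere event is essentially $\{\xi_k\in\text{projection of }K\text{ onto }s_k\}$ up to an $\BigO(|\mathcal{U}|)$ fattening, an event of probability bounded away from zero, not $\BigO(|\mathcal{U}|)$; you have conflated \enquote{$\xi_k$ within $\BigO(|\mathcal{U}|)$ of the projection \emph{interval}} (a set of length $\approx\operatorname{diam}K$) with a set of measure $\BigO(|\mathcal{U}|)$. Consequently your claimed bound $\sum_{k\,\text{flipping somewhere in }K}\twonorm[\vw_k]^2\lessapprox\BigO(n|\mathcal{U}|)$ fails; that weighted count is generically of order $n$. Nor can the argument be repaired by simply inserting the correct order-$n$ count: since $\lw\Frobnorm[\wR]^2\le\Ltrb{0}$ gives $\Frobnorm[\wR]=\BigO(n^{-1/2})$, your Cauchy--Schwarz step would then only yield $\BigO(1)$ for the kink-flip term.

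The paper avoids this precisely by \emph{not} passing to an $x$-independent flip set: it keeps $\sup_{x\in K}$ outside a sum over the neurons flipping \emph{at that particular} $x$, where the flip requires $\xi_k$ to lie between $\skprod{\tilde{s}_k}{x}$ and $\skprod{s_k}{x}$, an interval of length at most $|\mathcal{U}|\max_{x\in K}\twonorm[x]$. By the law of large numbers and \Cref{as:truncatedg}, $\E[{\twonorm[v_k]^2\ind_{\{\skprod{\tilde{s}_k}{x}<\xi_k<\skprod{s_k}{x}\}}}]\to0$ uniformly in $x$, hence $\frac{1}{n}\sup_{x\in K}\sum_{k\,\text{flip at }x}\twonorm[v_k]^2\to0$ in probability, and multiplying by $\Frobnorm[\wR]\le\Ltrb{0}/\lw$ with $\lw=\lambda n\gbar$ kills the remaining factor $n$; the zeroth-order part is handled as in your proposal (or by the analogue of the cited Part~I lemma), and the smooth derivative piece exactly as you do via \Cref{le:BoundsOnParameternorms}. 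To fix your proof you need this pointwise-in-$x$ (or an otherwise genuinely uniform, e.g.\ uniform-LLN) smallness of the flip set, not the $x$-independent small-probability event you invoked.
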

\begin{proof}
Analogously to the proof of \cite[Lemma A.23]{ImplRegPart1V3} it follows that
\[\nlim\supnormmulti[{\RNR[\wR]-\FDRNw[\wR]}]=0 \quad \faog\mycomma\]
as $\linkinv$ is required to be Lipschitz continuous by \Cref{def:linkfunction} and $|\mathcal{U}|\overset{n\to\infty}{\longrightarrow}0$. In particular,
\begin{equation}\label{eq:zeroderconv}
    \plim\supnormmulti[{\RNR[\wR]-\FDRNw[\wR]}]=0.
\end{equation}
It remains to show
\[\plim\max_{i=1,\dots,\din}\supnorm[{\jacj[{\RNR[\wR]}]_i-\jacj[{\FDRNw[\wR]}]_i}]=0,\]
for every $j\in\{1,\ldots,\dout\}$, where $\jacj[f]$ represents the gradient of the j\textsuperscript{th} component $f^j:~\R^m\to~\R$ of a differentiable function $f:\R^m\to\Rdout$ ($m=\din$ in this case). (Accordingly, $\jacj[f]_i$ represents the i\textsuperscript{th} partial derivative of $f^j$.) Let $j\in\{1,\ldots,\dout\}, i\in\{1,\ldots,\din\}$ be fixed.
We have by the chain rule that
\begin{align*}
    \jacj[{\RNR[\wR]}]_i& =\jacjat[\linkinv]{\RNR[\wR]}\sum_{k=1}^n\wR_{\cdot,k} \sigma^{'}\left(\skprod{v_{k,\cdot}}{x}+b_k\right)v_{k,i}, \\
    \jacj[{\FDRNw[\wR]}]_i&=\jacjat[\linkinv]{\FDRNw[\wR]}\sum_{k=1}^n\wR_{\cdot,k} \sigma^{'}\left(\skprod{\vw_{k,\cdot}}{x}+b_k\right)\vw_{k,i}.
\end{align*}
(Here, $v_{\cdot,k}$ and $v_{k,\cdot}$ denote the k\textsuperscript{th} column and row respectively of a real-valued matrix $v$ with corresponding dimensions. Even though $v_{k,\cdot}$ is a row vector, we keep writing $\skprod{v_{k,\cdot}}{x}$ for the scalar product with a vector $x$ instead of $v_{k,\cdot}x$.) By Lipschitz continuity of $\linkinv$, we have
\begin{align*}
    \supnorm[{\jacjat[\linkinv]{\RNR[\wR]}-\jacjat[\linkinv]{\FDRNw[\wR]}}]\le L^{\linkinv}\supnormmulti[{\RNR[\wR]-\FDRNw[\wR]}].
\end{align*}
Thus, with \eqref{eq:zeroderconv}
\begin{equation}
    \plim\supnorm[{\jacjat[\linkinv]{\RNR[\wR]}-\jacjat[\linkinv]{\FDRNw[\wR]}}]=0.
\end{equation}
Note that by \Cref{rem:FDRSNvnorm}
we have $\xi_k=-b_k/\twonorm[v_k]=-b_k/\twonorm[\vw_k]=:\tilde{\xi}_k$, for all $k=1,\ldots,n$, i.e., the kink positions of $\RNR[\wR]$ and $\FDRNw[\wR]$ coincide. Therefore, it follows that
\begin{align}\label{eq:upperboundoninnerderivative}
    &\supnorm[{\sum_{k=1}^n\wR_{\cdot,k} \sigma^{'}\left(\skprod{v_{k,\cdot}}{\cdot}+b_k\right)v_{k,i}-\sum_{k=1}^n\wR_{\cdot,k} \sigma^{'}\left(\skprod{\vw_{k,\cdot}}{\cdot}+b_k\right)\vw_{k,i}}]\nonumber\\
    =&\sup_{x\in K}\twonorm[{\sum_{\substack{k:\xi_k<\skprod{s_k}{x}\\
    \wedge\xi_k<\skprod{\tilde{s}_k}{x}}}\wR_{\cdot,k} \left(v_{k,i}-\vw_{k,i}\right)+\sum_{\substack{k:\xi_k<\skprod{s_k}{x}\\
    \wedge\xi_k>\skprod{\tilde{s}_k}{x}}}\wR_{\cdot,k} v_{k,i}+\sum_{\substack{k:\xi_k>\skprod{s_k}{x}\\
    \wedge\xi_k<\skprod{\tilde{s}_k}{x}}}\wR_{\cdot,k} \vw_{k,i}}]\nonumber\\
    \le&\sum_{k=1}^n \twonorm[\wR_{\cdot,k}]\left|v_{k,i}-\vw_{k,i}\right|+\sup_{x\in K}\left(\sum_{\substack{k:\xi_k<\skprod{s_k}{x}\\
    \wedge\xi_k>\skprod{\tilde{s}_k}{x}}}\twonorm[\wR_{\cdot,k}] |v_{k,i}|+\sum_{\substack{k:\xi_k>\skprod{s_k}{x}\\
    \wedge\xi_k<\skprod{\tilde{s}_k}{x}}}\twonorm[\wR_{\cdot,k}] |\vw_{k,i}|\right)\nonumber\\
    \le& \Frobnorm[\wR]\Frobnorm[v-\vw]+\Frobnorm[\wR]\left(\sup_{x\in K}\sum_{\substack{k:\xi_k<\skprod{s_k}{x}\\
    \wedge\xi_k>\skprod{\tilde{s}_k}{x}}}\twonorm[v_k]^2+\sup_{x\in K}\sum_{\substack{k:\xi_k>\skprod{s_k}{x}\\
    \wedge\xi_k<\skprod{\tilde{s}_k}{x}}}\twonorm[v_k]^2\right).
\end{align}
By \Cref{le:BoundsOnParameternorms}, we have
\begin{align}
    \Frobnorm[\wR]\Frobnorm[v-\vw]\overset{\PP}{\lessapprox} \frac{\Ltrb{0}}{\lw}\BigO\left(\sqrt{n}|\mathcal{U}|\right)=\BigO\left(\frac{|\mathcal{U}|}{\sqrt{n}}\right),
\end{align}
and hence the first summand vanishes for $n\to\infty$.
With the law of large numbers it follows that
\begin{align}
    \sum_{\substack{k:\xi_k<\skprod{s_k}{x}\\
    \wedge\xi_k>\skprod{\tilde{s}_k}{x}}}\twonorm[v_k]^2\overset{\PP}{\approx}n\E[{\twonorm[v_k]^2\ind_{\xi_k:\skprod{\tilde{s}_k}{x}<\xi_k<\skprod{s_k}{x}}}]. 
\end{align}
Moreover, 
\begin{align}
\plim\ind_{\xi_k:\skprod{\tilde{s}_k}{x}<\xi_k<\skprod{s_k}{x}}=0,
\end{align}
uniformly in $x$. 
Note that by \Cref{item:vk:finiteSecondMoment} of \Cref{as:truncatedg} $v_k\in L^2$, and hence we have
\begin{align}
    \nlim\frac{1}{n}\sup_{x\in K} \sum_{\substack{k:\xi_k<\skprod{s_k}{x}\\
    \wedge\xi_k>\skprod{\tilde{s}_k}{x}}}\twonorm[v_k]^2=\nlim\sup_{x\in K}\E[{\twonorm[v_k]^2\ind_{\xi_k:\skprod{\tilde{s}_k}{x}<\xi_k<\skprod{s_k}{x}}}]=0.
\end{align}
Together with \Cref{le:BoundsOnParameternorms}, this implies
\begin{align*}
    \nlim\Frobnorm[\wR]\sup_{x\in K}\sum_{\substack{k:\xi_k<\skprod{s_k}{x}\\
    \wedge\xi_k>\skprod{\tilde{s}_k}{x}}}\twonorm[v_k]^2\overset{\text{\tiny \eqref{eq:wRbound}}}{\le}\nlim\frac{\Ltrb{0}}{\lw}\sup_{x\in K}\sum_{\substack{k:\xi_k<\skprod{s_k}{x}\\
    \wedge\xi_k>\skprod{\tilde{s}_k}{x}}}\twonorm[v_k]^2=0.
\end{align*}
Analogously,
\begin{align*}
    \nlim\Frobnorm[\wR]\sup_{x\in K}\sum_{\substack{k:\xi_k<\skprod{\tilde{s}_k}{x}\\
    \wedge\xi_k>\skprod{s_k}{x}}}\twonorm[v_k]^2\overset{\text{\tiny \eqref{eq:wRbound}}}{\le}\nlim\frac{\Ltrb{0}}{\lw}\sup_{x\in K}\sum_{\substack{k:\xi_k<\skprod{\tilde{s}_k}{x}\\
    \wedge\xi_k>\skprod{s_k}{x}}}\twonorm[v_k]^2=0.
\end{align*}
With every summand in the upper bound \eqref{eq:upperboundoninnerderivative} converging to zero, we obtain
\begin{align}\label{eq:innerderivconv}
    \plim  \supnorm[{\sum_{k=1}^n\wR_{\cdot,k} \sigma^{'}\left(\skprod{v_{k,\cdot}}{\cdot}+b_k\right)v_{k,i}-\sum_{k=1}^n\wR_{\cdot,k} \sigma^{'}\left(\skprod{\vw_{k,\cdot}}{\cdot}+b_k\right)\vw_{k,i}}]=0.
\end{align}
Hence, by \eqref{eq:zeroderconv} and \eqref{eq:innerderivconv} we have
\[\plim\max_{i=1,\dots,\din}\supnorm[{\jacj[{\RNR[\wR]}]_i-\jacj[{\FDRNw[\wR]}]_i}]=0.\]
\end{proof}

\end{document}